\documentclass{article}
\usepackage{algorithm}
\usepackage{algorithmic}
\usepackage[framemethod=tikz]{mdframed}
\usepackage{authblk}
% if you need to pass options to natbib, use, e.g.:
%     \PassOptionsToPackage{numbers, compress}{natbib}
% before loading neurips_2025

% ready for submission
\usepackage[final]{neurips_2025}

\usepackage{multirow}

% Recommended, but optional, packages for figures and better typesetting:
\usepackage{microtype}
\usepackage{colortbl} % Add this to your preamble for row shading
\definecolor{lightgray}{gray}{0.99} % Define the shade of grey
\usepackage{graphicx}
\usepackage{subfigure}
\usepackage{booktabs} 
\usepackage{wrapfig}% for professional tables
\usepackage{IEEEtrantools}
\definecolor{lightblue}{RGB}{214,241,255}
\definecolor{midblue}{RGB}{0,0,139}
\definecolor{deepblue}{RGB}{25,25,112}
\definecolor{myRed}{rgb}{0.9,0.1,0.1}
\definecolor{myBlue}{rgb}{0.4,0.7,0.9}
\definecolor{myPurple}{rgb}{0.7,0.4,0.9}
\definecolor{myOrange}{rgb}{0.9,0.7,0.4}
\definecolor{myGreen}{rgb}{0.4,0.9,0.4}
\definecolor{myYellow}{rgb}{1,0.8,0}

% hyperref makes hyperlinks in the resulting PDF.
% If your build breaks (sometimes temporarily if a hyperlink spans a page)
% please comment out the following usepackage line and replace
% \usepackage{icml2025} with \usepackage[nohyperref]{icml2025} above.
\usepackage{hyperref}

% Attempt to make hyperref and algorithmic work together better:
% \newcommand{\theHalgorithm}{\arabic{algorithm}}

% Use the following line for the initial blind version submitted for review:

% If accepted, instead use the following line for the camera-ready submission:
% \usepackage[accepted]{icml2025}

% For theorems and such
\usepackage{amsmath}
\usepackage{amssymb}
\usepackage{mathtools}
\usepackage{amsthm}
\usepackage{color}
\usepackage{cuted}

% if you use cleveref..
\usepackage[capitalize,noabbrev]{cleveref}
\newcommand{\otimesSixj}{\otimes^{6j}}

\newcommand{\vrrij}{\vec{r}_{ij}}
\newcommand{\vri}{\vec{r}_i}
\newcommand{\vrj}{\vec{r}_j}

\newcommand{\vr}{\mathbf{r}}
% Unit vector

% Spherical harmonics symbols (roman callouts)
  % irregular solid SH
% atan2 (since LaTeX doesn't have one built-in)

%%%%%%%%%%%%%%%%%%%%%%%%%%%%%%%%
% THEOREMS
%%%%%%%%%%%%%%%%%%%%%%%%%%%%%%%%

\numberwithin{equation}{section} 
\theoremstyle{plain}
\newtheorem{theorem}{Theorem}[section]

\newtheorem{lemma}[theorem]{Lemma}
\newtheorem{claim}[theorem]{Claim}

\theoremstyle{definition}
\newtheorem{definition}[theorem]{Definition}

\theoremstyle{remark}
\newtheorem{remark}[theorem]{Remark}

\usepackage{titletoc}
\newcommand\DoToC{%

% Adding the title here
  \startcontents
  \printcontents{}{1}{\hrule\textbf{\begin{center}
       Appendix Table of Contents
  \end{center}}\vskip3pt\hrule\vskip5pt}
  \vskip3pt\hrule\vskip5pt
}

% Todonotes is useful during development; simply uncomment the next line
%    and comment out the line below the next line to turn off comments
%\usepackage[disable,textsize=tiny]{todonotes}
\usepackage[textsize=tiny]{todonotes}
% to compile a preprint version, e.g., for submission to arXiv, add add the
% [preprint] option:
%     \usepackage[preprint]{neurips_2025}

% to compile a camera-ready version, add the [final] option, e.g.:
%     \usepackage[final]{neurips_2025}

% to avoid loading the natbib package, add option nonatbib:
%    \usepackage[nonatbib]{neurips_2025}

\usepackage[utf8]{inputenc} % allow utf-8 input
\usepackage[T1]{fontenc}    % use 8-bit T1 fonts
\usepackage{hyperref}       % hyperlinks
\usepackage{url}            % simple URL typesetting
\usepackage{booktabs}       % professional-quality tables
\usepackage{amsfonts}       % blackboard math symbols
\usepackage{nicefrac}       % compact symbols for 1/2, etc.
\usepackage{microtype}      % microtypography
\usepackage{xcolor}         % colors

\title{E2Former: An Efficient and Equivariant Transformer with Linear-Scaling Tensor Products}

% The \author macro works with any number of authors. There are two commands
% used to separate the names and addresses of multiple authors: \And and \AND.
%
% Using \And between authors leaves it to LaTeX to determine where to break the
% lines. Using \AND forces a line break at that point. So, if LaTeX puts 3 of 4
% authors names on the first line, and the last on the second line, try using
% \AND instead of \And before the third author name.
%%%%%%%%% TITLE - PLEASE UPDATE

\makeatletter
\renewcommand\AB@affilsepx{, \protect\Affilfont}
\makeatother

\renewcommand\Affilfont{\normalfont\small}

\setlength{\affilsep}{0.5ex}   % Reduces space between author and affiliation

\author[1\protect$\dagger$]{\textbf{Yunyang Li}}
\author[2\protect$\dagger$ \protect$\ddagger$\protect$*$]{\textbf{Lin Huang}}
% \author[1]{\textbf{Bohan Wang}\protect\thanks{Corresponding author.}}
\author[3 ]{\textbf{Zhihao Ding}}
\author[4]{\textbf{Chu Wang }}
\author[5\protect$\ddagger$]{\textbf{Xinran Wei} }
\author[6]{\authorcr \textbf{Han Yang}}
\author[7\protect$\ddagger$]{\textbf{Zun Wang}}
\author[5\protect$\ddagger$]{\textbf{Chang Liu}}
\author[5\protect$\ddagger$]{\textbf{Yu Shi}}
\author[5\protect$\ddagger$]{\textbf{Peiran Jin} } 
\author[5\protect$\ddagger$]{\authorcr\textbf{Tao Qin}}
\author[1\protect$*$]{\textbf{Mark Gerstein}}
\author[2\protect$*$\protect$\ddagger$]{\textbf{Jia Zhang}}
% \author[ ]{\authorcr \url{https://gmh14.github.io/phys-comp/}}
\affil[1]{Yale University} 
\affil[2]{Ubiquant} 
\affil[3]{PolyU} 
\affil[4]{HUST}
\affil[5]{ZGC Academy}
\affil[6]{Microsoft Research} 
\affil[7]{Shanghai AI Lab}

\makeatletter
\renewcommand{\@maketitle}{%
  \vbox{%
    \hsize\textwidth
    \linewidth\hsize
    \vskip 0.1in
    \@toptitlebar
    \centering
    {\LARGE\bf \@title\par}
    \@bottomtitlebar
    \vspace{-20pt} % Adjust spacing as needed
    \if@submission
      \begin{tabular}[t]{c}\bf\rule{\z@}{24\p@}
        Anonymous Author(s) \\
        Affiliation \\
        Address \\
        \texttt{email} \\
      \end{tabular}%
    \else
      \def\And{%
        \end{tabular}\hfil\linebreak[0]\hfil%
        \begin{tabular}[t]{c}\bf\rule{\z@}{24\p@}\ignorespaces%
      }
      \def\AND{%
        \end{tabular}\hfil\linebreak[4]\hfil%
        \begin{tabular}[t]{c}\bf\rule{\z@}{24\p@}\ignorespaces%
      }
      \begin{tabular}[t]{c}\bf\rule{\z@}{24\p@}\@author \\
      {\url{https://github.com/scitix/E2Former/tree/v1.0}}\end{tabular}%
    \fi
    
    \vskip 0.2in \@minus 0.1in % Adjust this space to reduce it further if necessary
  }
}
\makeatother

% \author{%
%   Lin Huang \\
%   Ubiquant \\
%   huang\_6385@outlook.com \\
%     \And
%   Jia Zhang\thanks{Partial work is done during the authors work at Microsoft. At Ubiquant, we are grateful for the computing power and train infra support provided by ScitiX.} \\
%   Ubiquant \\
%   \texttt{jialrs.z@gmail.com} \\
%   % examples of more authors
%   % \texttt{email} \\
%   % \AND
%   % Coauthor \\
%   % Affiliation \\
%   % Address \\
%   % \texttt{email} \\
%   % \And
%   % Coauthor \\
%   % Affiliation \\
%   % Address \\
%   % \texttt{email} \\
%   % \And
%   % Coauthor \\
%   % Affiliation \\
%   % Address \\
%   % \texttt{email} \\
% }

\begin{document}

\maketitle

\begingroup
\renewcommand\thefootnote{\protect$\dagger$}
\footnotetext{Co-first authors. }
\renewcommand\thefootnote{\protect$\ddagger$}
\footnotetext{Part of this work was completed while the authors were at Microsoft Research.}
\renewcommand\thefootnote{\protect{*}}
\footnotetext{Corresponding authors. \texttt{huang\_6385@outlook.com, pi@gersteinlab.org, jialrs.z@gmail.com}}
\renewcommand\thefootnote{\protect{$\star$}}
\footnotetext{Note: JZ and LH (Ubiquant) thank ScitiX for computing power and training infrastructure. MG is supported by the ALW professorship fund.}
\endgroup

\begin{abstract}
Equivariant Graph Neural Networks (EGNNs) have demonstrated significant success in modeling microscale systems, including those in chemistry, biology and materials science. However, EGNNs face substantial computational challenges due to the high cost of constructing edge features via spherical tensor products, making them almost impractical for large-scale systems. 
To address this limitation, we introduce E2Former, an equivariant and efficient transformer architecture that incorporates a Wigner $6j$ convolution (Wigner $6j$ Conv). By shifting the computational burden from edges to nodes, Wigner $6j$ Conv reduces the complexity from \( O(| \mathcal{E}|)\) to  \( O(| \mathcal{V}|)\) while preserving both the model's expressive power and rotational equivariance.
We show that this approach achieves a 7x–30x speedup compared to conventional \(\mathrm{SO}(3)\) convolutions. Furthermore, our empirical results demonstrate that the derived E2Former mitigates the computational challenges of existing approaches without compromising the ability to capture detailed geometric information. This development could suggest a promising direction for scalable molecular modeling. 
\end{abstract}

\section{Introduction}

Molecular simulations underpin critical computational tasks across chemistry~\citep{hwang2015reaction,karsai2018electron,kundu2021quantum,kundu2022influence}, biology~\citep{cellmer2011making}, and materials science~\citep{yamakov2002dislocation}, facilitating detailed exploration of microscopic processes. Although quantum mechanical approaches such as Density Functional Theory (DFT) provide highly accurate predictions~\cite{hohenberg1964inhomogeneous,kohn1965self}, their computational complexity scales poorly with system size~\cite{szabo2012modern}, thus limiting practical applicability to small-scale problems.
Machine Learning (ML) techniques have emerged as promising alternatives, balancing computational efficiency and accuracy~\cite{bartok2010gaussian,bartok2013representing,drautz2019atomic}. ML-based models, particularly Equivariant Graph Neural Networks (EGNNs), significantly reduce simulation times, enabling molecular property predictions and dynamic simulations within practical computational budgets~\citep{schutt2018schnet,gasteiger2020directional,gasteiger2021gemnet,batzner2022e3,fuchs2020se,liao2023equiformer}. EGNN architectures explicitly encode symmetry constraints—such as rotational and reflectional equivariances—through graph-based atomic representations. This symmetry-awareness leads to strong inductive biases and improved sample efficiency. EGNNs have evolved from rotationally invariant embedding methods like SchNet~\citep{schutt2018schnet} to schemes incorporating bond and dihedral angles~\citep{gasteiger2020directional,gasteiger2021gemnet}, scalarization techniques~\cite{schutt2021equivariant,wang2022visnet}, and spherical tensor-product frameworks such as E(3) and SE(3)-Transformers~\citep{thomas2018tensor,e3nn,fuchs2020se,liao2023equiformer}. Recent refinements, including Gaunt Tensor Product~\citep{luo2024enabling} eSCN convolutions~\citep{passaro2023reducing,equiformer_v2}, primarily focus on enhancing computational efficiency.

In this work, we specifically focus on spherical-equivariant EGNN architectures~\citep{thomas2018tensor,fuchs2020se,liao2023equiformer}, which leverage spherical harmonics and Clebsch–Gordan tensor products.  These models—commonly referred to as \textit{spherical EGNNs}—have demonstrated state-of-the-art accuracy, especially for periodic systems where symmetry constraints are critical~\cite{tran2023open,chanussot2021open}. By encoding higher-order geometric correlations through irreducible representations (irreps) with angular momentum \(L > 1\), spherical EGNNs offer expressive, data-efficient models capable of capturing complex geometric interactions~\citep{thomas2018tensor,PhysRevResearch.3.L012002}. 
Unfortunately, these gains come at a computational cost. The use of spherical tensor products for feature construction incurs complexity driven by two factors: (i) the number of tensor products required, which scales with the number of edges \(|\mathcal{E}|\) in the molecular graph, and (ii) the computational cost of each tensor product, which grows with the angular momentum cutoff \(L\). Together, these lead to runtime costs of \(O(|\mathcal{E}| L^6)\) or \(O(|\mathcal{E}| L^3)\) when implemented with the sparse eSCN convolution. This scaling presents a significant bottleneck, limiting the use of spherical EGNNs to small- or medium-scale systems, despite their improved performance in principle.
While recent spherical-scalarization methods~\cite{schutt2021equivariant,wang2022visnet,aykent2025gotennet} offer efficient alternatives by bypassing tensor products, they sacrifice theoretical completeness~\cite{dymuniversality}. Tensor-product formulations, in contrast, preserve the full space of equivariant functions between irreps. This trade-off motivates our effort to retain the expressive power of tensor products while eliminating their prohibitive complexity.

Here, we introduce the Wigner $6j$ convolution (Wigner $6j$ Conv, Figure~\ref{fig:mot}), a spherical-equivariant method that uses Wigner $6j$ symbols~\cite{lai1990exact,maximon20103j,edmonds1996angular}, \textit{provably} reducing tensor product complexity to \(O(|\mathcal{V}|)\) while maintaining the \textit{exact expressive power} and rotational equivariance.  
% This shift in the computational paradigm paves the way for more efficient, accurate, and scalable MLIPs, facilitating the simulation of larger and more complex molecular systems than previously possible.
% Building on the Wigner $6j$ conv, we propose 
%  \textit{E2Former}, an equivariant and efficient Transformer architecture. 
% In our experiments, E2Former achieves state-of-the-art performance on multiple molecular benchmarks while maintaining lower computational overhead compared to existing spherical-equivariant methods. Notably, E2Former scales efficiently to larger systems and exhibits improved stability in training, highlighting its potential for accelerating molecular simulations and drug discovery pipelines.

\begin{mdframed}[hidealllines=true,backgroundcolor=blue!5]
\paragraph{Contributions.}
Our contributions can be summarized as follows: (1) We introduce the Wigner \(6j\) convolution, a spherical-equivariant technique that reduces the computational complexity from \( O(|\mathcal{E}|) \) to \( O(|\mathcal{V}|) \) , enabling the modeling of larger molecular systems without compromising the network's expressive power or symmetry properties. As shown in Figure~\ref{fig-costbreakdown}(b), our model demonstrates better scaling behavior than the $\mathrm{SO}(3)$ convolution, achieving 7x to 30x speed-up given the sparsity of the molecular graph.
(2) We propose E2Former, an equivariant and efficient Transformer architecture specifically designed for scalable molecular modeling. E2Former leverages the Wigner $6j$ convolution to maintain rotational equivariance while significantly enhancing computational efficiency.
(3) Extensive experiments on benchmark datasets like OC20, OC22, and SPICE show that E2Former achieves competitive accuracy in predicting molecular energies and forces, while offering improved efficiency and scalability over existing spherical-equivariant methods. (4) Finally, we pre-trained E2Former on a large-scale dataset and evaluated its performance in molecular dynamics simulations, where it achieves high accuracy with faster speed, outperforming state-of-the-art empirical potential methods and EGNNs. These results suggest its potential to advance large-scale molecular simulations and to serve as a foundational model for machine learning force fields.
\end{mdframed}
\section{Background and Preliminaries}
\begin{figure*}[t]
    \centering
    \includegraphics[width=1.0\linewidth]{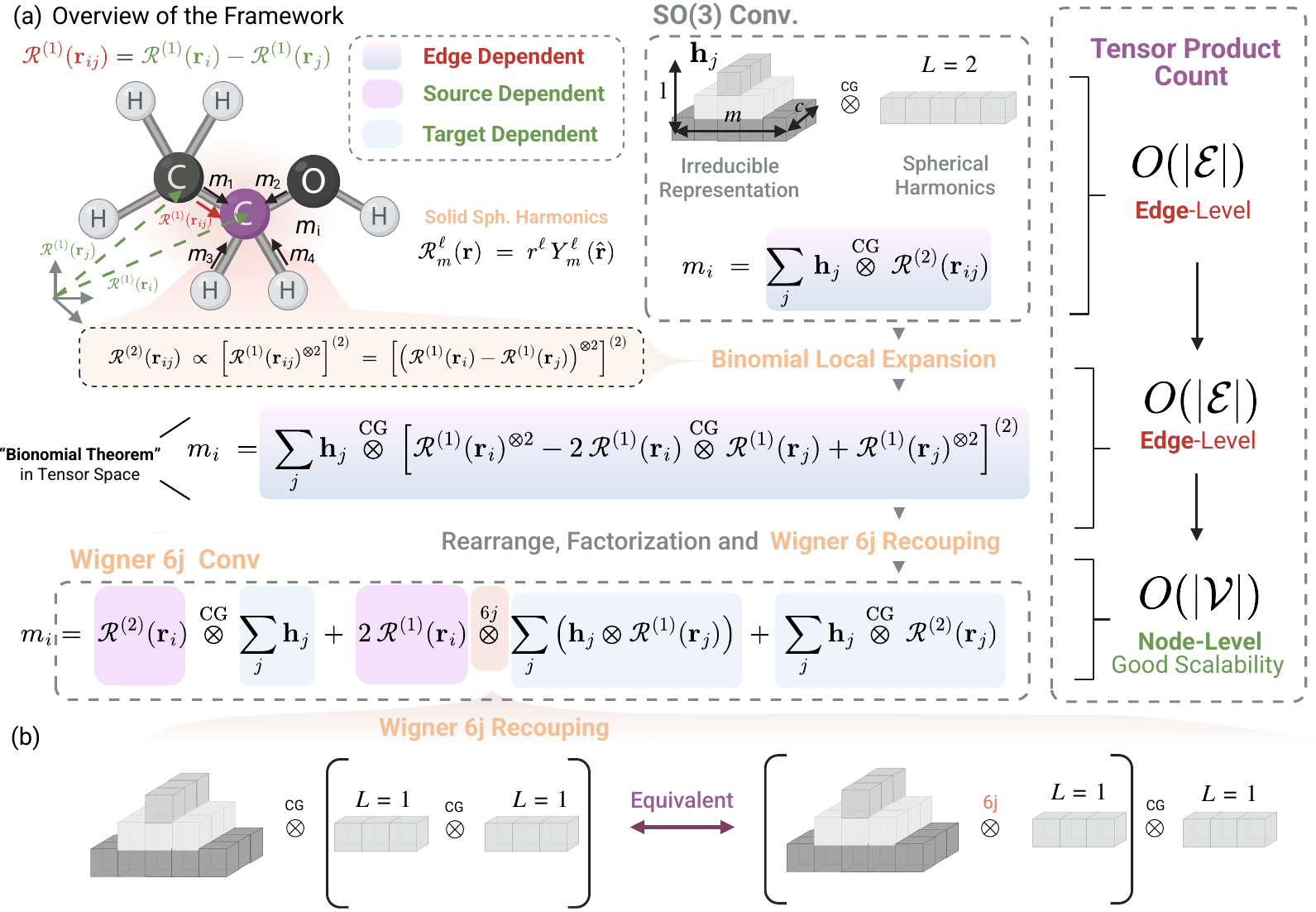}
    \vspace{-6mm}
    \caption{
(\textbf{a}) Overview of the Proposed Approach. Rather than performing tensor products over edges by combining node features and distances, E2Former leverages two key concepts: \textit{binomial local expansion} and \textit{Wigner $6j$ recoupling}. The former represents edge directions in terms of node positions, while the latter reorders the sequence of tensor product operations. Together, the computational complexity of the tensor product is reduced from \(O(\lvert \mathcal{E} \rvert)\) to \(O(\lvert \mathcal{V} \rvert)\). $\otimes$ denotes the Clebsch-Gorden tensor product, and $\otimes^{\mathrm{6}j}$ denotes the CG tensor product where each path is parameterized by a weight governed by the Wigner-$6j$ coefficients.
(\textbf{b}) Illustration of  two equivalent ways to couple the tensor product of three representations: sequentially coupling two tensors before the third (left) or reordering the coupling sequence (right), with equivalence established via the Wigner $6j$ recoupling.} 

    \label{fig:mot}
\end{figure*}

\textbf{Notation.} Throughout this paper, we use  \(\ell\) and \(m\) to denote angular momentum quantum numbers associated with spherical harmonics \(Y_m^{(\ell)}\), where \(\ell \geq 0\) and \(-\ell \leq m \leq \ell\). All spherical harmonics are considered real-valued functions on \(\mathbb{R}^3\). Positions of nodes in \(\mathbb{R}^3\) are represented as  \(\mathbf{r}\), while node-level irreducible features are denoted \(\mathbf{h}_i \in \mathbb{R}^{s \times c}\), where \(s\) represents the spherical dimension and \(c\) the feature dimension per spherical component (i.e. number of channels). The operation \([...]^{(\ell)}\) is the \textit{projection operation} which extracts only the \(\ell\)-th order irreducible component from a representation or tensor product. Clebch-Gorden Tensor products of irreps are symbolized by \(\otimes\) (\textit{without} any superscripts), and its Wigner $6j$ counterpart is denoted by  \(\otimes^{6j}\). 

In this section, we establish the mathematical foundations necessary for constructing the Wigner-$6j$ Convolution. These include real-space \textit{solid spherical harmonics}, tensor products of irreducible representations (irreps), and Wigner $6j$ \textit{recoupling} theory. We commence by defining the solid spherical harmonics in real basis, which is commonly used in modern ML applications:
\begin{definition}[Solid Spherical Harmonics in Real Basis]
Let \(\vr=(x,y,z)\in\mathbb{R}^3\), \(r=\|\vr\|=\sqrt{x^2+y^2+z^2}\), and \((r,\theta,\phi)\) be the spherical coordinates with:
\[
\theta=\arccos\!\left(\frac{z}{r}\right),\qquad \phi=\operatorname{atan2}(y,x).
\]
The \emph{(regular) solid spherical harmonics} are homogeneous harmonic polynomials of degree \(\ell\) defined by:
\[
\mathcal{R}^{(\ell)}_m(\vr)\;=\; r^{\ell}\, Y^{(\ell)}_m(\hat{\vr}),\qquad \hat{\vr}=\vr/r,\quad \ell\ge 0,\; -\ell\le m\le \ell,
\]
where \(Y^{(\ell)}_m\) are real spherical harmonics on \(S^2\). Equivalently, in \((r,\theta,\phi)\),
\[
\mathcal{R}^{(\ell)}_m(r,\theta,\phi)=
\begin{cases}
k^{(\ell)}_{m}\, r^{\ell}\, P^{(\ell)}_{m}(\cos\theta)\,\cos(m\phi), & m>0,\\[4pt]
k^{(\ell)}_{0}\, r^{\ell}\, P^{(\ell)}_{0}(\cos\theta), & m=0,\\[4pt]
k^{(\ell)}_{|m|}\, r^{\ell}\, P^{(\ell)}_{|m|}(\cos\theta)\,\sin(|m|\phi), & m<0,
\end{cases}
\]
with \(P^{(\ell)}_{|m|}\) the associated Legendre polynomials and \(k^{(\ell)}_{m}\) normalization constants (chosen per the convention used in this work). 
% Although spherical harmonics originate in quantum mechanics as eigenfunctions of angular momentum operators, their use in machine learning is purely representational. In equivariant models, they serve as basis functions for encoding directional structure. 
For example, in \texttt{e3nn} implementation, the real spherical harmonics for \(\ell = 2\) take the following form in Cartesian coordinates:
\begin{small}
    \(
\mathcal{R}^{(2)}_{-2}(x,y,z) = xy,  \mathcal{R}^{(2)}_{-1}(x,y,z) = yz, \mathcal{R}^{(2)}_0(x,y,z) = 3z^2 - (x^2 + y^2 + z^2),
\mathcal{R}^{(2)}_1(x,y,z) = xz,  \mathcal{R}^{(2)}_2(x,y,z) = x^2 - y^2.
\)
\end{small} Under a rotation \(g \in \mathrm{SO}(3)\), these functions transform according to:
\(
\mathcal{R}^{(\ell)}_m(\mathbf{r}) \mapsto \sum_{m'=-\ell}^{(\ell)} D^{(\ell)}_{m, m'}(g) \, \mathcal{R}^{(\ell)}_{m'}(\mathbf{r}),
\)
where \(D^{(\ell)}_{m, m'}(g)\) are the Wigner \(D\)-matrices.
This transformation rule ensures that spherical harmonics of fixed degree \(\ell\) transform properly under the action of \(\mathrm{SO}(3)\). One useful property of solid spherical harmonics that will come in useful later is \(\mathcal{R}^{(1)}{(\mathbf{r}_{ij})} = \mathcal{R}^{(1)}{(\mathbf{r}_{i}) } -  \mathcal{R}^{(1)}{(\mathbf{r}_{j}) }. \) It is also worth noting that this equality holds universally if and only if $\ell = 1$.

\end{definition}

Next, we introduce the behavior of irreducible representations (irreps). A key principle is that the tensor product of two irreps is generally reducible, meaning it decomposes into a direct sum of other irreps. This decomposition mechanism is precisely what will allow us to relate the general irrep $\mathcal{R}^{(\ell)}(\cdot)$ back to  $\mathcal{R}^{(1)}(\cdot)$.

\begin{definition}[Tensor Products of Irreps]
\label{def:tensor_products}
Let $U^{(\ell_1)}$ and $U^{(\ell_2)}$ be irreducible representations (irreps) of $\mathrm{SO}(3)$. Their tensor product $U^{(\ell_1)} \otimes U^{(\ell_2)}$ decomposes into a direct sum of irreps:
    \(
U^{(\ell_1)} \otimes U^{(\ell_2)} = \bigoplus_{\ell = |\ell_1 - \ell_2|}^{\ell_1 + \ell_2} U^{(\ell)}.
\)
The decomposition is governed by the \textbf{Clebsch–Gordan coefficients}. Specifically, the tensor product, projected onto a specific irreducible component \( U^{(\ell_3)} \), is denoted as:
\begin{equation}
    \left[U^{(\ell_1)} \otimes U^{(\ell_2)}\right]^{(\ell_3)} = \sum_{\ell_1 m_1, \ell_2 m_2} C_{\ell_1 m_1, \ell_2 m_2}^{\ell_3 m_3} U^{(\ell_1)}_{m_1} U^{(\ell_2)}_{m_2}.
\end{equation}
\end{definition}

% To facilitate the coupling of angular momenta, we introduce the Wigner $3j$ and $6j$ symbols, which play critical roles in the representation theory of \(\mathrm{SO}(3)\).

% To obtain the Clebsh-Gorden coefficients, we introduce the Wigner $3j$ symbol:
% \begin{definition}[Wigner $3j$ Symbols]
% Wigner $3j$ symbols arise naturally in the coupling of angular momenta in quantum mechanics and the representation theory of the rotation group \( \mathrm{SO}(3) \). These symbols are denoted as:
% \(
%     \begin{pmatrix}
% \ell_1 & \ell_2 & \ell_3 \\
% m_1 & m_2 & m_3
% \end{pmatrix},\)
% where \( \ell_1, \ell_2, \ell_3 \) are nonnegative integers (i.e., \( 0, 1, 2, \dots \)) and \( m_1, m_2, m_3 \) are integers satisfying \( -\ell_i \leq m_i \leq \ell_i \) for each \( i = 1, 2, 3 \).
% The Wigner $3j$ symbol is defined to be zero unless the following conditions are satisfied:
% \begin{enumerate}
%     \item \textbf{Angular Momentum Triangle Condition}: \( |\ell_1 - \ell_2| \leq \ell_3 \leq \ell_1 + \ell_2 \),
%     \item \textbf{Magnetic Quantum Number Sum}: \( m_1 + m_2 + m_3 = 0 \).
% \end{enumerate}
% The relationship between the Wigner $3j$ symbols and the Clebsch-Gordan coefficients \( C^{\ell_3 m_3}_{\ell_1 m_1, \ell_2 m_2} \) is given by:\(
% C^{\ell_3 m_3}_{\ell_1 m_1 \ell_2 m_2} = (-1)^{\ell_1 - \ell_2 + m_3} \sqrt{2\ell_3 + 1}
% \begin{pmatrix}
% \ell_1 & \ell_2 & \ell_3 \\
% m_1 & m_2 & -m_3
% \end{pmatrix}.
% \)
% \end{definition}

\textbf{Commutativity of the Clebsch--Gordan Tensor Product.}
The tensor product of two irreducible representations (irreps) $U^{(a)}$ and $U^{(b)}$ of $\mathrm{SO}(3)$ is not strictly commutative as a bilinear operation on vector spaces: $U^{(a)} \otimes U^{(b)}$ is not identical to $U^{(b)} \otimes U^{(a)}$. Nonetheless, this operation is \emph{effectively} commutative at the level of irreducible decompositions. Interchanging the order of the factors does not change the set of irreps that appear, although it permutes the corresponding CG coefficients. In particular, we have:
\begin{equation}\label{eq:tensor_decomp_commutative}
    U^{(a)} \otimes U^{(b)} \cong U^{(b)} \otimes U^{(a)} \cong \bigoplus_{j=|a-b|}^{a+b} U^{(j)}.
\end{equation}

\textbf{Associativity of the Clebsch--Gordan Tensor Product.}
For irreducible representations (irreps) of \( \mathrm{SO}(3) \), the tensor product is associative up to a canonical isomorphism. Specifically, for any three irreps \( U^{(a)} \), \( U^{(b)} \), and \( U^{(c)} \), the following holds:
\[
(U^{(a)} \otimes U^{(b)}) \otimes U^{(c)} \cong U^{(a)} \otimes (U^{(b)} \otimes U^{(c)}).
\]
 While the set of resulting irreps is independent of the association order, the CG coefficients that appear in the decomposition do depend on the chosen coupling scheme. Transitions between different coupling orders are governed by Wigner \( 6j \) symbols, which express changes of basis  without modifying the underlying irreducible content.

\begin{definition}[Wigner $6j$ Symbol]
\label{def:Wigner_6j}
For three irreps $U^{(a)}$, $U^{(b)}$, and $U^{(c)}$ of $\mathrm{SO}(3)$, one can couple them either as $U^{(a)} \otimes (U^{(b)} \otimes U^{(c)})$ or as $(U^{(a)} \otimes U^{(b)}) \otimes U^{(c)}$. The Wigner 6$j$ symbol 
\(
\begin{Bmatrix} a & b & d \\ c & \ell & j \end{Bmatrix} 
\)
relates these two coupling schemes through the identity:
\begin{small}
   \begin{equation}
    \left[ U^{(a)} \otimes \left[ U^{(b)} \otimes U^{(c)} \right]^{(j)} \right]^{(\ell)}= \sum_{d} (-1)^{a + b + c + d}  \sqrt{(2d + 1)(2j + 1)}  
\begin{Bmatrix}
    a & b & d \\
    c & \ell & j
\end{Bmatrix}
\left[ U^{(a)} \otimes U^{(b)} \right]^{(d)} \otimes U^{(c)}.
\end{equation} 
\end{small}
To simplify notation, we abstract the recoupling process as follows:
\begin{equation}
\label{eq-6jtp}
U^{(a)} \otimes (U^{(b)} \otimes U^{(c)}) =  (U^{(a)} \otimes U^{(b)}) \otimes^{6j} U^{(c)}, 
\end{equation}
where \( \otimes^{6j} \) denotes a CG tensor product accompanied by a re-indexing via Wigner $6j$ coefficients.

\end{definition}
% To obtain the Wigner $6j$ coefficients:
% \begin{small}
% \(
% \begin{aligned}
%  \begin{Bmatrix}
% j_1 & j_2 & j_3 \\
% m_1 & m_2 & m_3
% \end{Bmatrix}
% &= \sum_{m_1, m_2, \dots, m_6} (-1)^{\sum_{k}(j_k - m_k)} 
% \begin{pmatrix}
% j_1 & j_2 & j_3 \\
% -m_1 & -m_2 & -m_3
% \end{pmatrix}
% \begin{pmatrix}
% j_1 & j_5 & j_6 \\
% m_1 & -m_5 & m_6
% \end{pmatrix} 
% \begin{pmatrix}
% j_4 & j_2 & j_6 \\
% m_4 & m_2 & -m_6
% \end{pmatrix}
% \begin{pmatrix}
% j_4 & j_5 & j_3 \\
% -m_4 & m_5 & m_3
% \end{pmatrix},
% \end{aligned}
% \)
% \end{small} where $\left\{\cdot\right\}$ denotes the Wigner $6j$ symbols and $\left(\cdot\right)$ represents the Wigner $3j$ symbols.

% \begin{equation*}
% \begin{Bmatrix}
% a & b & d \\[4pt]
% c & \ell & j
% \end{Bmatrix}
% =
% (-1)^{a+b+c+d+\ell+j}
% \sqrt{\prod_{k=1}^{4} \Delta_k}
% \sum_{z} (-1)^z \frac{(z+1)!}{\prod_{i=1}^{4} (z - \xi_i)!},
% \end{equation*}

% \noindent where:
% \begin{align*}
% \Delta_1 &= \Delta(a, b, d), \quad \Delta_2 = \Delta(a, \ell, j), \\
% \Delta_3 &= \Delta(c, b, \ell), \quad \Delta_4 = \Delta(c, \ell, d), \\
% \Delta(x, y, z) &= \frac{(x+y-z)!(x-y+z)!(-x+y+z)!}{(x+y+z+1)!}, \\
% \xi_1 &= a + b + d, \quad \xi_2 = a + \ell + j, \\
% \xi_3 &= c + b + \ell, \quad \xi_4 = c + \ell + d.
% \end{align*}

% \noindent The summation index $z$ satisfies $z \geq \max(\xi_1, \xi_2, \xi_3, \xi_4)$, ensuring all factorial terms are nonnegative. 
\section{Wigner $6j$ Convolution}

In this section, we introduce the \(\mathrm{SO}(3)\)-Equivariant Node convolution and demonstrate how Wigner $6j$ recoupling facilitates an efficient node-wise computation.

\begin{definition}[$\mathrm{SO}(3)$-Equivariant Node Convolution]
Let \(\mathbf{h}_i \in \mathbb{R}^{s \times c}\) denote the irreducible feature tensor of node \(i\), where \(s\) indexes the irreducible representation (irrep) type and \(c\) indexes the channels within each irrep. Let \(\mathcal{R}^{(\ell)}(\mathbf{r}_{ij})\) denote the degree-\(\ell\) spherical harmonic evaluated at the relative  direction. The \(\mathrm{SO}(3)\)-equivariant node convolution is via the CG tensor product between the source irreps and the spherical harmonics:
\(
\mathbf{h}_i \coloneqq \sum_{j \in \mathcal{N}(i)} \mathbf{h}_j \otimes \mathcal{R}^{(\ell)}(\mathbf{r}_{ij}).
\)
\end{definition}
We clarify that our formulation of the $\mathrm{SO}(3)$ convolution employs $\mathcal{R}(\cdot)$ rather than $Y(\cdot)$. The two formulations are related through a normalization factor. To realize the $Y(\cdot)$-based variant, this normalization factor can be absorbed into the attention coefficients, as detailed in Alg.~\ref{alg:Wigner $6j$_tp}.

% \paragraph{Wigner $6j$ convolution.} Building upon the attention-based convolution framework, we demonstrate that the convolution operation can be refactored into a node-wise computation using Wigner $6j$ symbols. Specifically, the convolution:
% \small{
% \begin{align*}
% \sum_{j \in \mathcal{N}(i)} \alpha_{ij} \mathbf{h}_j \otimes \mathcal{R}_m^{(\ell)}(\mathbf{r}_{ij})
% \end{align*}}
% can be expressed as:
% \small{
% \begin{align*}
% = \sum_{u=0}^\ell \left(\mathcal{R}^{(u)}(\mathbf{r}_i)\right) \otimes^{6j} \left( \sum_{j \in \mathcal{N}(i)} \alpha_{ij} \mathbf{h}_j  \otimes \left(\mathcal{R}^{(\ell - u)}(\mathbf{r}_j)\right) \right).
% \end{align*}}

% Here, \(\mathcal{R}^{(u)}(\mathbf{r}_i)\) represents all node-$i$ dependent factors formed from \(u\) harmonics centered at \(i\).
%  \(\mathcal{R}^{(\ell-u)}(\mathbf{r}_j)\) represents all node-$j$ dependent factors formed from \((\ell-u)\) harmonics centered at \(j\).

\textbf{Wigner $6j$ convolution.} Given the $\mathrm{SO}(3)$ convolution, we aim to demonstrate that the operation admits a node-wise factorization via Wigner $6j$ symbols. In particular, we show that the $\mathrm{SO}(3)$ convolution can be expressed as:
\begin{small}
    \begin{align*}
            \mathbf{h}_i = \sum_{j \in \mathcal{N}(i)} \, \underbrace{\left( \mathbf{h}_j \otimes \mathcal{R}^{(\ell)}(\mathbf{r}_{ij}) \right)}_{\text{ij-dependent}} = \sum_{u=0}^\ell   (-1)^{\ell-u} \binom{\ell}{u}
  \Bigl(
    \underbrace{\colorbox{blue!20}{$\mathcal{R}^{(u)}(\mathbf{r}_i)$}}_{\text{i-dependent}}
  \Bigr)
  \;\otimes^{6j}\;
  \Biggl(
    \sum_{j \in \mathcal{N}(i)}\;
      \underbrace{\colorbox{red!20}{$\mathbf{h}_j \otimes \bigl(\mathcal{R}^{(\ell - u)}(\mathbf{r}_j)\bigr)$}}_{\text{j-dependent}}
  \Biggr).
\end{align*}
\end{small}

The blue–boxed factors $\mathcal{R}^{(u)}(\vec r_i)$ aggregate all node‑$i$–specific terms, whereas the red–boxed factors $\mathbf h_j,$ $\mathcal{R}^{(\ell-u)}(\vec r_j)$ isolate the node‑$j$ contribution. This separation removes explicit edge dependencies, resulting in the number of tensor products in the network scaling with $O(|V|)$.
To build further intuition, we draw an analogy to factorization techniques in kernelized attention mechanisms~\citep{choromanski2020rethinking}, which achieve linear scaling by decoupling query-key interactions.
% Since \( \alpha_{ij} \) is a scalar that can be computed efficiently (as shown in Fig.~\ref{fig-costbreakdown}(a)), we will omit its further discussion in subsequent sections.

To establish this result, we introduce the concept of the \textit{Binomial Local Expansion}. The expansion is based on the key insight that any term $\mathcal{R}^{(\ell)}(\cdot)$ of arbitrary order $\ell$ can be expressed through iterative tensor products of the first-order term, $\mathcal{R}^{(1)}(\cdot)$. This effectively reduces the problem to the first-order case, where we can apply the previously introduced relation $\mathcal{R}^{(1)}(\mathbf{r}_{ij}) = \mathcal{R}^{(1)}(\mathbf{r}_{i}) - \mathcal{R}^{(1)}(\mathbf{r}_{j})$ to factor the edge-dependent expression into node-local terms. 
\begin{theorem}[Bionomial Local Expansion]
\label{thm:higher_order_terms}
Let $\ell=u \geq 1$. Every $\ell=u$ spherical harmonic $\mathcal{R}^{(l)}(\mathbf{r}_{ij})$ can be expressed as an irreducible subspace of the $u$-fold tensor product $(\mathcal{R}^{(1)}(\mathbf{r}_{ij}))^{\otimes u}$. When expanded in terms of node-local terms, this satisfies:
    \begin{align*}
\mathcal{R}^{(\ell)}(\mathbf{r}_{ij}) 
& = \sum_{u=0}^\ell (-1)^{\ell-u} \binom{\ell}{u} 
 \left[\left(\mathcal{R}^{(u)}(\mathbf{r}_i)\right) \otimes
\left(\mathcal{R}^{(\ell-u)}(\mathbf{r}_j)\right) \right]^{(\ell)},
\end{align*}

\end{theorem}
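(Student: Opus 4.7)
The strategy is to lift the $\ell=1$ identity $\mathcal{R}^{(1)}(\mathbf{r}_{ij}) = \mathcal{R}^{(1)}(\mathbf{r}_i) - \mathcal{R}^{(1)}(\mathbf{r}_j)$ to arbitrary $\ell$ via the representation-theoretic fact that every solid harmonic $\mathcal{R}^{(\ell)}(\mathbf{r})$ arises as the (up-to-scalar unique) top-weight component inside the $\ell$-fold Clebsch--Gordan power of the vector $\mathcal{R}^{(1)}(\mathbf{r})$. Concretely, I would first record the identification
\[
\mathcal{R}^{(\ell)}(\mathbf{r}) \;=\; c_\ell\, \bracket{\bigl(\mathcal{R}^{(1)}(\mathbf{r})\bigr)^{\otimes \ell}}^{(\ell)}
\]
for a convention-dependent constant $c_\ell$; this holds because the top irrep $V^{(\ell)}$ appears with multiplicity one inside $(V^{(1)})^{\otimes \ell}$ as its totally symmetric traceless subspace, and harmonicity plus homogeneity of degree $\ell$ pin down $\mathcal{R}^{(\ell)}$ in that subspace. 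Substituting $\mathbf{r}=\mathbf{r}_{ij}$ and the $\ell=1$ linear relation yields
\[
\mathcal{R}^{(\ell)}(\mathbf{r}_{ij}) \;=\; c_\ell\, \bracket{\bigl(\mathcal{R}^{(1)}(\mathbf{r}_i) - \mathcal{R}^{(1)}(\mathbf{r}_j)\bigr)^{\otimes \ell}}^{(\ell)}.
\]

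Next, I would apply the binomial theorem inside the projection. Although $A\otimes B$ and $B\otimes A$ are not literally equal as bilinear objects, the outer projection onto the top irrep $(\ell)$ factors through the totally symmetric traceless subspace of $(V^{(1)})^{\otimes \ell}$, so it is invariant under arbitrary permutations of the $\ell$ tensor slots. Consequently the binomial theorem applies verbatim inside the projection, producing
\[
\mathcal{R}^{(\ell)}(\mathbf{r}_{ij}) \;=\; c_\ell \sum_{u=0}^\ell (-1)^{\ell-u} \binom{\ell}{u}\, \bracket{\mathcal{R}^{(1)}(\mathbf{r}_i)^{\otimes u} \otimes \mathcal{R}^{(1)}(\mathbf{r}_j)^{\otimes (\ell-u)}}^{(\ell)}.
\]

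The remaining step is to refactor each summand. The $\mathbf{r}_i$-block $(V^{(1)})^{\otimes u}$ decomposes into irreps of degree at most $u$ and the $\mathbf{r}_j$-block $(V^{(1)})^{\otimes (\ell-u)}$ into irreps of degree at most $\ell-u$; by the CG selection rule a pairing $(u',u'')$ can only contribute to the outer irrep $(\ell)$ when $u'+u'' \ge \ell$, and the constraints $u'\le u$, $u''\le\ell-u$ force $(u',u'') = (u,\ell-u)$. Hence only the pair of top-weight subspaces survives, and invoking $\mathcal{R}^{(u)}(\mathbf{r}_i) = c_u \bracket{(\mathcal{R}^{(1)}(\mathbf{r}_i))^{\otimes u}}^{(u)}$ (and the analogue for $\mathbf{r}_j$) converts each surviving block into the claimed $\mathcal{R}^{(u)}(\mathbf{r}_i)$, $\mathcal{R}^{(\ell-u)}(\mathbf{r}_j)$ factors, up to an overall scalar $c_\ell/(c_u c_{\ell-u})$ per term.

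The main obstacle I anticipate is showing that this ratio of normalization constants equals exactly $1$ for every $0 \le u \le \ell$ under the convention adopted in the paper (e.g.\ the \texttt{e3nn} convention for real solid harmonics); this is not automatic since Clebsch--Gordan coefficients and harmonic normalizations interact non-trivially. The cleanest resolution is to recognize the target identity as the CG-coupled form of the classical \emph{regular solid harmonic addition theorem} applied to $\mathbf{r}_i$ and $-\mathbf{r}_j$, with the sign absorbed via $\mathcal{R}^{(\ell-u)}(-\mathbf{r}_j) = (-1)^{\ell-u}\mathcal{R}^{(\ell-u)}(\mathbf{r}_j)$. I would either cite that classical result directly, or verify the constants by induction on $\ell$ using the recursion $\bracket{\mathcal{R}^{(1)}(\mathbf{r}) \otimes \mathcal{R}^{(\ell-1)}(\mathbf{r})}^{(\ell)} \propto \mathcal{R}^{(\ell)}(\mathbf{r})$, fixing the proportionality by an explicit low-order sanity check (e.g.\ $\ell=2$ via the polynomials $\mathcal{R}^{(2)}_{-2}=xy$, $\mathcal{R}^{(2)}_{2}=x^2-y^2$, etc.) to confirm the overall scalar is $1$.
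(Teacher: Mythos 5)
Your proposal is correct and follows essentially the same route as the paper: lift the $\ell=1$ linearity $\mathcal{R}^{(1)}(\mathbf{r}_{ij})=\mathcal{R}^{(1)}(\mathbf{r}_i)-\mathcal{R}^{(1)}(\mathbf{r}_j)$ through the projection identity $\mathcal{R}^{(\ell)}=\bigl[(\mathcal{R}^{(1)})^{\otimes\ell}\bigr]^{(\ell)}$, apply the binomial expansion, and use the multiplicity-one / fully symmetric character of the top-weight component to collapse orderings. A few points where you add something useful: your CG selection-rule argument for why only the $(u,\ell-u)$ top-degree pair survives the final projection is cleaner and more explicit than the paper's closing remark (the paper essentially asserts this); and your care over the constants $c_\ell/(c_u c_{\ell-u})$ is warranted — the paper's appendix version of this theorem explicitly hedges with ``Equality holds up to a normalization constant depending on the basis choice,'' whereas the main-text statement asserts plain equality, so your suggestion to pin the constant via a low-order check or by citing the classical regular solid harmonic addition theorem (with the sign absorbed by parity) closes a gap the paper leaves somewhat informal. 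One thing the paper makes more explicit than you do is the \emph{reason} the projection kills ordering dependence: it invokes Schur's lemma on the multiplicity-one top irrep together with the unitarity of Wigner $6j$ recoupling (their ordering-invariance lemma), rather than appealing directly to ``factors through the totally symmetric traceless subspace,'' though of course these are equivalent statements.
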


\begin{proof}[Proof Sketch]
 This spherical harmonic $\mathcal{R}^{(\ell)}(\mathbf{r}_{ij})$ could be constructed by projecting the $\ell$-fold tensor product of the first-order harmonic $\mathcal{R}^{(1)}(\mathbf{r}_{ij})$ onto the subspace transforming as the irreducible representation (irrep) $\ell$ of SO(3). Recall that the projection operator is denoted by $[...]^{(\ell)}$ and using the identity $\mathcal{R}^{(1)}(\mathbf{r}_{ij}) = \mathcal{R}^{(1)}(\mathbf{r}_i) - \mathcal{R}^{(1)}(\mathbf{r}_j)$, the objective could be rewritten as $\mathcal{R}^{(\ell)}(\mathbf{r}_{ij}) = [(\mathcal{R}^{(1)}(\mathbf{r}_i) - \mathcal{R}^{(1)}(\mathbf{r}_j))^{\otimes \ell}]^{(\ell)}$.

We begin by expanding the tensor power $(\mathcal{R}^{(1)}(\mathbf{r}_i) - \mathcal{R}^{(1)}(\mathbf{r}_j))^{\otimes \ell}$, which produces a sum of $2^\ell$ tensor products. Each term corresponds to an ordered sequence $P \in \{i, j\}^\ell$, where each factor is either $\mathcal{R}^{(1)}(\mathbf{r}_i)$ or $\mathcal{R}^{(1)}(\mathbf{r}_j)$. Denote the corresponding tensor product as $T_P$. For example, if $P = (i, j, i)$, then $T_P = \mathcal{R}^{(1)}(\mathbf{r}_i) \otimes \mathcal{R}^{(1)}(\mathbf{r}_j) \otimes \mathcal{R}^{(1)}(\mathbf{r}_i)$. Initially, each  ordering $(*,*, \cdots, *)$ defines a distinct term. Later, we will show that the projection operator renders the result invariant to the ordering.  We write the full expansion as: $(\mathcal{R}^{(1)}(\mathbf{r}_i) - \mathcal{R}^{(1)}(\mathbf{r}_j))^{\otimes \ell} = \sum_{u=0}^{\ell} (-1)^{\ell-u} \sum_{P \in \Pi_u} T_P$, where $\Pi_u$ denotes the set of orderings containing exactly $u$ factors of $\mathcal{R}^{(1)}(\mathbf{r}_i)$ and $\ell - u$ factors of $\mathcal{R}^{(1)}(\mathbf{r}_j)$.
Applying the linear projection operator $[...]^{(\ell)}$ to this sum distributes the operator yields
$[(\mathcal{R}^{(1)}(\mathbf{r}_i) - \mathcal{R}^{(1)}(\mathbf{r}_j))^{\otimes \ell}]^{(\ell)} = \sum_{u=0}^{\ell} (-1)^{\ell-u} \sum_{P \in \Pi_u} [T_P]^{(\ell)}$.

The key insight comes from angular momentum coupling theory. Combining $\ell$ systems with angular momentum $1$ yields components with total angular momentum  ranging up to $\ell$. The subspace associated with the \textit{highest possible} angular momentum, $L=\ell$, is \textit{unique} and corresponds to the \textit{fully symmetric} combination of the individual factors. The projector $[...]^{(\ell)}$ isolates precisely this unique, symmetric component. As a result, the projected tensor \( [T_P]^{(\ell)} \) remains identical for all orderings \( P \in \Pi_u \), indicating that the projection depends solely on the multiplicities of the factors \( \mathcal{R}^{(1)}(\mathbf{r}_i) \) and \( \mathcal{R}^{(1)}(\mathbf{r}_j) \) in \( T_P \), rather than their ordering. The inner sum over the $\binom{\ell}{u}$ identical projected terms simplifies. Let $T_\mathrm{rep} = (\mathcal{R}^{(1)}(\mathbf{r}_i))^{\otimes u} \otimes (\mathcal{R}^{(1)}(\mathbf{r}_j))^{\otimes (\ell-u)}$ serve as a representative tensor product for the class $\Pi_u$. Then:
$\sum_{P \in \Pi_u} [T_P]^{(\ell)} = |\Pi_u| [T_\mathrm{rep}]^{(\ell)} = \binom{\ell}{u} [(\mathcal{R}^{(1)}(\mathbf{r}_i))^{\otimes u} \otimes (\mathcal{R}^{(1)}(\mathbf{r}_j))^{\otimes (\ell-u)}]^{(\ell)}$.

Substituting this simplification back into the expression for the projected tensor power yields:
$[(\mathcal{R}^{(1)}(\mathbf{r}_i) - \mathcal{R}^{(1)}(\mathbf{r}_j))^{\otimes \ell}]^{(\ell)} = \sum_{u=0}^{\ell} (-1)^{\ell-u} \binom{\ell}{u} [(\mathcal{R}^{(1)}(\mathbf{r}_i))^{\otimes u} \otimes (\mathcal{R}^{(1)}(\mathbf{r}_j))^{\otimes (\ell-u)}]^{(\ell)}$.

\end{proof}

\begin{theorem}[Node-Based Factorization via Wigner $6j$]
\label{thm:node_factorization_via_Wigner $6j$}
 $\mathrm{SO}(3)$ convolutions admit a factorization that separates the dependence on the central node $i$ from the aggregation over neighbors $j$, yielding the form:
\begin{align*}
\label{eq:factorization}
&\sum_{j \in \mathcal{N}(i)}  \mathbf{h}_j \otimes \mathcal{R}_m^{(\ell)}(\mathbf{r}_{ij}) = 
 \sum_{u=0}^\ell  (-1)^{\ell-u} \binom{\ell}{u} \left(\mathcal{R}^{(u)}(\mathbf{r}_i)\right) \otimes^{6j} \left( \sum_{j \in \mathcal{N}(i)}  
   \mathbf{h}_j  \otimes \left(\mathcal{R}^{(\ell - u)}(\mathbf{r}_j)\right) \right),
\end{align*}
where $\otimes^{6j}$ denotes a CG tensor product where the path weight is parameterized by the corresponding Wigner $6j$ coefficients.
\end{theorem}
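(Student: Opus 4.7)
The plan is to combine the Binomial Local Expansion (Theorem~\ref{thm:higher_order_terms}) with the Wigner $6j$ recoupling identity (Equation~\ref{eq-6jtp}), and then exploit bilinearity of the CG tensor product to pull the central-node factor $\mathcal{R}^{(u)}(\mathbf{r}_i)$ outside of the sum over neighbors. Since the target identity is stated in the abstract $\otimes^{6j}$ notation, I would not try to evaluate $6j$ symbols explicitly; the entire argument lives at the level of formal manipulations of irreducible couplings.

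First, I would substitute the Binomial Local Expansion into $\mathcal{R}^{(\ell)}(\mathbf{r}_{ij})$ inside the left-hand side, converting the edge-based harmonic into the finite sum $\sum_{u=0}^{\ell}(-1)^{\ell-u}\binom{\ell}{u}\bigl[\mathcal{R}^{(u)}(\mathbf{r}_i)\otimes\mathcal{R}^{(\ell-u)}(\mathbf{r}_j)\bigr]^{(\ell)}$. Because both the $u$-sum and the $j$-sum are finite and the outer CG tensor product with $\mathbf{h}_j$ is bilinear, I bring the $u$-sum and its scalar coefficients to the front. The inner summand becomes the triple-factor expression $\sum_{j\in\mathcal{N}(i)}\mathbf{h}_j\otimes\bigl[\mathcal{R}^{(u)}(\mathbf{r}_i)\otimes\mathcal{R}^{(\ell-u)}(\mathbf{r}_j)\bigr]^{(\ell)}$, in which $\mathbf{r}_i$ is still trapped inside the inner bracket alongside $\mathbf{r}_j$.

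Second, I would re-couple the three factors $\mathbf{h}_j$, $\mathcal{R}^{(u)}(\mathbf{r}_i)$, and $\mathcal{R}^{(\ell-u)}(\mathbf{r}_j)$ so that $\mathcal{R}^{(u)}(\mathbf{r}_i)$ becomes the outermost factor. Using commutativity of the CG tensor product (Equation~\ref{eq:tensor_decomp_commutative}) combined with the associativity identity in Definition~\ref{def:Wigner_6j}, the coupling scheme $\mathbf{h}_j\otimes\bigl(\mathcal{R}^{(u)}(\mathbf{r}_i)\otimes\mathcal{R}^{(\ell-u)}(\mathbf{r}_j)\bigr)$ is re-indexed to $\mathcal{R}^{(u)}(\mathbf{r}_i)\otimes^{6j}\bigl(\mathbf{h}_j\otimes\mathcal{R}^{(\ell-u)}(\mathbf{r}_j)\bigr)$. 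The phase $(-1)^{a+b+c+d}$, the normalization $\sqrt{(2d+1)(2j+1)}$, and the sum over intermediate irrep $d$ prescribed by Definition~\ref{def:Wigner_6j}, together with the permutation signs introduced by commutativity, are exactly the per-path weights that the paper packages into the single symbol $\otimes^{6j}$.

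Third, with $\mathcal{R}^{(u)}(\mathbf{r}_i)$ now sitting outside a $\otimes^{6j}$ whose right argument depends only on $j$, bilinearity lets me push the sum $\sum_{j\in\mathcal{N}(i)}$ into that right argument, producing $\mathcal{R}^{(u)}(\mathbf{r}_i)\otimes^{6j}\bigl(\sum_{j}\mathbf{h}_j\otimes\mathcal{R}^{(\ell-u)}(\mathbf{r}_j)\bigr)$. Collecting over $u$ with the weights $(-1)^{\ell-u}\binom{\ell}{u}$ reproduces the claimed right-hand side verbatim. The main obstacle is the bookkeeping in the second step: one must verify that the phases and normalizations from the explicit $6j$ identity, combined with the sign factors from commutativity and the outer projection onto the final output irrep, are consistently absorbed into $\otimes^{6j}$. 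Provided we take $\otimes^{6j}$ in the abstract sense defined by the paper (a CG product whose irrep-path coefficients are dressed by the appropriate $6j$ symbols), the theorem reduces to one application of the recoupling identity per $(u,\ell-u)$ summand, with no further non-routine calculation.
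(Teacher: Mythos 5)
Your proposal is correct and follows essentially the same route as the paper's proof: substitute the Binomial Local Expansion, exploit bilinearity to interchange the $u$-sum and $j$-sum, recouple the three-factor tensor product via commutativity plus the Wigner $6j$ identity so that $\mathcal{R}^{(u)}(\mathbf{r}_i)$ sits outside, and then pull the $j$-sum into the right argument. The only detail you gloss over that the paper makes explicit is that the projection $[\,\cdot\,]^{(\ell)}$ constrains the intermediate coupling index in the $6j$ symbol to $\ell$, but you do flag this class of bookkeeping concern, so the gap is cosmetic rather than substantive.
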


\begin{proof}[Proof Sketch]
We begin by substituting the spherical harmonic \( \mathcal{R}_m^{(\ell)}(\mathbf{r}_{ij}) \) using the binomial local expansion from  into the original \(\mathrm{SO}(3)\)-equivariant convolution expression, we obtain:
\begin{equation}
    \sum_{j \in \mathcal{N}(i)} \mathbf{h}_j \otimes \sum_{u=0}^\ell (-1)^{\ell - u} \binom{\ell}{u} \left[ \mathcal{R}^{(u)}(\mathbf{r}_i) \otimes \mathcal{R}^{(\ell - u)}(\mathbf{r}_j) \right]^{(\ell)}.
\end{equation}
By linearity of the tensor product, this expression becomes:
\begin{equation}
     \sum_{u=0}^\ell (-1)^{\ell-u} \binom{\ell}{u} \sum_{j \in \mathcal{N}(i)} \left( \mathbf{h}_j \otimes \left[ \mathcal{R}^{(u)}(\mathbf{r}_i) \otimes \mathcal{R}^{(\ell-u)}(\mathbf{r}_j) \right]^{(\ell)} \right).
\end{equation}

To reorganize this expression in terms of node-dependent features, we apply Wigner $6j$ recoupling. Letting
\(
A = \mathbf{h}_j,  B = \mathcal{R}^{(\ell - u)}(\mathbf{r}_j), C = \mathcal{R}^{(u)}(\mathbf{r}_i),
\)
The recoupling identity states:
\(
A \otimes (B \otimes C) = (A \otimes B) \otimes^{6j} C.
\) 
Since CG tensor products commute effectively, we can swap $B$  and $C$ before applying recoupling. This gives:
\begin{equation}
    \mathbf{h}_j \otimes \left( \mathcal{R}^{(u)}(\mathbf{r}_i) \otimes \mathcal{R}^{(\ell - u)}(\mathbf{r}_j) \right) = (\mathbf{h}_j \otimes \mathcal{R}^{(\ell - u)}(\mathbf{r}_j)) \otimes^{6j} \mathcal{R}^{(u)}(\mathbf{r}_i).
\end{equation}
Applying this recoupling within the sum, we arrive at the factorized form
% \(
% \sum_{u=0}^{\ell} \left(\mathcal{R}^{(u)}(\mathbf{r}_i)\right) \otimes^{6j} \left( \sum_{j \in \mathcal{N}(i)} \mathbf{h}_j \otimes \mathcal{R}^{(\ell - u)}(\mathbf{r}_j) \right),
% \) 
as claimed. Note that it is safe to apply the recoupling within a projection operator. This constraint can be implemented by fixing one of the intermediate coupling indices in the Wigner $6j$ symbol to $\ell$.
% \begin{align*}
% & \sum_{j \in \mathcal{N}(i)} \alpha_{ij} \mathbf{h}_j \otimes \mathcal{R}_m^{(\ell)}(\mathbf{r}_{ij}) \nonumber \\
% & \xrightarrow[\text{Wigner } 6j]{} \sum_{u=0}^\ell \mathcal{R}^{(u)}(\mathbf{r}_i) \otimes \left( \sum_{j \in \mathcal{N}(i)} \alpha_{ij} \mathbf{h}_j \otimes \mathcal{R}^{(\ell-u)}(\mathbf{r}_j) \right), \\
% &\sum_{j \in \mathcal{N}(i)}  \alpha_{ij} \mathbf{h}_j \otimes \mathcal{R}_m^{(\ell)}(\mathbf{r}_{ij}) = \nonumber \\
% & \sum_{u=0}^\ell \left(\mathcal{R}^{(u)}(\mathbf{r}_i)\right) \otimes^{6j} \left( \sum_{j \in \mathcal{N}(i)} \alpha_{ij} \mathbf{h}_j  \otimes \left(\mathcal{R}^{(\ell - u)}(\mathbf{r}_j)\right) \right).
% \end{align*}
\end{proof}
% \paragraph{Computational implications.} This factorization separates the contributions from node \(i\) and its neighbors \(j\) while maintaining \(\mathrm{SO}(3)\)-equivariance. Precomputing \(\mathcal{R}^{(u)}(\mathbf{r}_i)\) allows efficient aggregation over neighbors, reducing computational complexity without compromising symmetry.
We now formalize the key properties of the resulting Wigner $6j$ convolution. These results are stated in the following lemmas. Proofs are provided in Appendix~\ref{app:Proof-equivariance} and Appendix~\ref{app:time-complexity}, respectively.
\begin{lemma}[Equivariance of Wigner \texorpdfstring{$6j$}{6j} Convolution]
\label{lem:equivariance_wigner6j}
 The Wigner \(6j\) convolution operator (denoted as \(F\)),  is equivariant under the Euclidean group \( \mathrm{SE}(3) \). That is, for any rigid transformation \( g \in \mathrm{SE}(3) \), the output satisfies \( \mathcal{F}[g \cdot f] = D(g) \cdot \mathcal{F}[f] \).
\end{lemma}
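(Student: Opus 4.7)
The plan is to reduce equivariance of $\mathcal{F}$ to that of the standard $\mathrm{SO}(3)$-equivariant node convolution. The Node-Based Factorization theorem established above shows that $\mathcal{F}$ and the edge-based operator $\sum_{j \in \mathcal{N}(i)} \mathbf{h}_j \otimes \mathcal{R}^{(\ell)}(\mathbf{r}_{ij})$ are pointwise identical as functions of $\{\mathbf{h}_j\}$ and $\{\mathbf{r}_j\}$. Consequently, any $\mathrm{SE}(3)$-covariance property satisfied by the latter is automatically inherited by $\mathcal{F}$, so it suffices to verify equivariance for the simpler edge-based expression --- thereby converting the statement about a structurally involved $6j$-recoupled form into a textbook fact about tensor-product convolutions.

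For the rotation part I would write any $g \in \mathrm{SE}(3)$ as $\mathbf{r} \mapsto R\mathbf{r} + \mathbf{t}$ with $R \in \mathrm{SO}(3)$ and argue separately for $R$ and $\mathbf{t}$. Under $R$, the relative direction transforms as $\mathbf{r}_{ij} \mapsto R\mathbf{r}_{ij}$, which induces $\mathcal{R}^{(\ell)}(\mathbf{r}_{ij}) \mapsto D^{(\ell)}(R)\,\mathcal{R}^{(\ell)}(\mathbf{r}_{ij})$ by the transformation rule stated in the Solid Spherical Harmonics definition, while the node features transform as $\mathbf{h}_j \mapsto D(R)\mathbf{h}_j$ by assumption that $\mathbf{h}_j$ carries irreducible representations of $\mathrm{SO}(3)$. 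The Clebsch--Gordan tensor product is by construction an $\mathrm{SO}(3)$-intertwiner, so the bilinear pairing $(D(R)\mathbf{h}_j) \otimes (D^{(\ell)}(R)\,\mathcal{R}^{(\ell)}(\mathbf{r}_{ij}))$ equals $D_\mathrm{out}(R)\bigl(\mathbf{h}_j \otimes \mathcal{R}^{(\ell)}(\mathbf{r}_{ij})\bigr)$ on each irreducible output summand, and summation over $j$ commutes with this linear action. For translations, the key observation is that $\mathbf{r}_{ij}$ is invariant under a common shift and the feature irreps are trivial representations of the translation subgroup, so the edge-based sum is translation-invariant. Combining both cases yields $\mathcal{F}[g\cdot f] = D(g) \cdot \mathcal{F}[f]$ for all $g \in \mathrm{SE}(3)$.

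The main subtlety I anticipate is that a \emph{direct} verification on the Wigner $6j$ form is not transparent: the factors $\mathcal{R}^{(u)}(\mathbf{r}_i)$ and $\mathcal{R}^{(\ell-u)}(\mathbf{r}_j)$ depend on absolute node coordinates and are individually \emph{not} translation-invariant, so proving translation invariance from the factorized side would require showing that the $\mathbf{t}$-dependent polynomial cross-terms cancel across the binomial sum after Wigner $6j$ recoupling --- essentially reproving the algebraic identity underlying the Binomial Local Expansion. Invoking the Node-Based Factorization theorem precisely sidesteps this combinatorial cancellation: the equality with the manifestly translation-invariant right-hand side $\sum_j \mathbf{h}_j \otimes \mathcal{R}^{(\ell)}(\mathbf{r}_{ij})$ certifies that all such cross-terms must vanish without our having to verify it directly. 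A minor bookkeeping point to check is that the recoupling weight $\otimes^{6j}$, being a scalar depending only on the angular momenta $(a,b,c,d,\ell,j)$, is itself $\mathrm{SO}(3)$-invariant, so it does not interfere with the intertwining property used in the rotation step.
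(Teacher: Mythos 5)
Your proof takes a genuinely different route from the paper's. The paper's appendix argues directly on the factorized $6j$ form: it tracks how $\mathcal{R}^{(u)}(\mathbf{r}_i)$, $\mathbf{h}_j$, and $\mathcal{R}^{(\ell-u)}(\mathbf{r}_j)$ each acquire Wigner $D$-matrices under a rotation, defines the partial sum $S_u$, and uses the $\mathrm{SO}(3)$-invariance of the $6j$ coefficients to pull every $D$-factor out into a single $D_{\mathrm{out}}(R)$. You instead invoke the Node-Based Factorization theorem to identify $\mathcal{F}$ with the edge-based operator $\sum_{j}\mathbf{h}_j \otimes \mathcal{R}^{(\ell)}(\mathbf{r}_{ij})$ and verify equivariance of that simpler object. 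For the rotation part the two arguments are essentially equivalent in content. The meaningful difference is the translation part: the paper's direct proof explicitly restricts itself to $\mathrm{SO}(3)$ (it opens with ``equivariant under the action of the rotation group $\mathrm{SO}(3)$'' and closes with ``This completes the proof of $\mathrm{SO}(3)$-equivariance''), so it proves strictly less than the $\mathrm{SE}(3)$ statement in the lemma. Your reduction supplies the missing translation case cleanly, and your diagnosis of why a direct argument on the $6j$ side would be awkward --- $\mathcal{R}^{(u)}(\mathbf{r}_i)$ and $\mathcal{R}^{(\ell-u)}(\mathbf{r}_j)$ depend on absolute coordinates and are individually not translation-invariant, so the $\mathbf{t}$-dependent cross-terms must cancel across the binomial sum and $6j$ recoupling --- is precisely the right observation; the factorization theorem certifies that cancellation without one having to redo it by hand. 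One small caution to record: the Node-Based Factorization theorem as stated omits the attention weights $\alpha_{ij}$, while the operator actually analyzed in the paper's equivariance proof includes them; since the $\alpha_{ij}$ are constructed from $\mathrm{SE}(3)$-invariant scalars (distances, scalar channel features) the reduction still carries over, but you should state that assumption explicitly so the inherited-equivariance step is airtight.
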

\begin{lemma}[Time Complexity of Wigner \texorpdfstring{$6j$}{6j} Convolution]
\label{lem:complexity_wigner6j}
The time complexity of Wigner \(6j\) convolution is \( O((L^6 C + C^2 L^2) \vert \mathcal{V}\vert )   \), where \(L\) is the degree cutoff and \(C\) is the number of channels.
\end{lemma}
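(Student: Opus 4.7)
The plan is to tally the cost of each primitive operation in one forward pass of the node factorization from the preceding theorem, and to verify that every Clebsch--Gordan (CG) tensor product is attached to a node rather than an edge; the claimed bound should then fall out by summation.

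First, I would precompute the bank of solid spherical harmonics $\mathcal{R}^{(u)}(\mathbf{r}_j)$ for $0\le u\le L$ at each node $j$. Using standard recurrences for associated Legendre polynomials, the $\sum_{u=0}^{L}(2u+1)=O(L^2)$ real entries per node are produced in $O(L^2)$ time, contributing $O(|\mathcal{V}|L^2)$ in total. This is dominated by the subsequent tensor-product cost and only needs to be verified as non-binding.

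Next, I would bound the two groups of tensor products appearing in the factorization. For each node $j$ and each inner index $u$, the interior product $\mathbf{h}_j\otimes \mathcal{R}^{(\ell-u)}(\mathbf{r}_j)$ combines a $C$-channel irrep tensor with a channel-less spherical harmonic. Because the CG coefficients obey the selection rule $|\ell_1-\ell_2|\le \ell_3\le \ell_1+\ell_2$, the number of nonzero scalar multiplications summed over all admissible triples with degrees at most $L$ is $O(L^6)$ per channel, giving $O(L^6 C)$ per node. The exterior product $\mathcal{R}^{(u)}(\mathbf{r}_i)\otimes^{6j}(\cdot)$ has the same signature, and the Wigner $6j$ weights only multiply each CG path by a fixed scalar, so its cost is also $O(L^6 C)$ per node; the sums over $u$ and over output degree $\ell$ are absorbed in the $L^6$ path count. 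These two contributions total $O(|\mathcal{V}|L^6 C)$. After the interior products are formed, the neighbor sum $\sum_{j\in\mathcal{N}(i)}\mathbf{h}_j\otimes \mathcal{R}^{(\ell-u)}(\mathbf{r}_j)$ is a scatter-add of tensors of size $O(L^2 C)$, costing $O(L^2 C)$ per edge; under the standard bounded-degree assumption for molecular graphs (a finite cutoff radius gives $|\mathcal{E}|=O(|\mathcal{V}|)$), this step adds only a dominated $O(|\mathcal{V}|L^2 C)$. A per-irrep linear channel mixer, acting as a $C\times C$ map on each of the $O(L^2)$ irrep components, contributes the remaining $O(|\mathcal{V}|C^2 L^2)$, yielding the claimed $O\bigl((L^6 C+C^2 L^2)|\mathcal{V}|\bigr)$.

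The main obstacle to securing the node-scaling complexity is that aggregation is nominally edge-based; the key observation is that no CG tensor product occurs per edge, so under bounded connectivity the edge pass contributes only a lower-order linear-in-$L^2 C$ term. A secondary check is that $\otimes^{6j}$ does not inflate the per-product cost beyond constants, which follows because the $6j$ symbol enters as a fixed scalar per coupling path and the recoupled product respects the same selection rules as the original CG product. Together these observations convert the operation count from the usual $O(|\mathcal{E}|L^6 C)$ edge-based form to the node-based bound stated in the lemma.
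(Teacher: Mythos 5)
Your proposal is correct and follows essentially the same accounting strategy as the paper: tally per-node CG/$6j$ tensor-product cost at $\Theta(C L^6)$ by summing over admissible degree triples, and add a per-node $\Theta(C^2 L^2)$ linear self-mix, yielding $\Theta\bigl((L^6 C + C^2 L^2)|\mathcal{V}|\bigr)$.

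One place where you are actually more careful than the paper's own proof: you explicitly account for the neighbor aggregation $\sum_{j\in\mathcal{N}(i)} \mathbf{h}_j\otimes\mathcal{R}^{(\ell-u)}(\mathbf{r}_j)$, note that it is a scatter-add of $O(L^2 C)$-sized tensors (no CG products) costing $O(|\mathcal{E}|L^2 C)$, and observe that the bound as stated implicitly requires a bounded-degree assumption ($|\mathcal{E}|=O(|\mathcal{V}|)$ under a finite cutoff radius) for this step to be dominated by the $O(|\mathcal{V}|L^6 C)$ term. The paper's proof omits this step entirely, so your version fills a small but genuine hole in the argument. Your remark that $\otimes^{6j}$ cannot inflate the per-path cost, because the $6j$ symbol acts as a fixed scalar multiplier respecting the same selection rules, is also a useful explicit check that the paper folds silently into its per-triple cost estimate.
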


\textbf{Model Architecture.} 
Based on these, we propose \textit{E2Former}, a modular architecture  that alternates between E2Attention and feed-forward layers (Appendix Fig.~\ref{fig:arc}(a); additional architectural details provided in the Appendix~\ref{app:e2former-tech}). At its core lies a  convolution layer based on the Wigner \(6j\) convolution, which serves as a backend kernel to efficiently capture rotational symmetries. We highlight two key design considerations underlying E2Former. First, we observe that the attention computation constitutes only a small fraction of the overall runtime in an attention-based $\mathrm{SO(3)}$ convolution (Figure~\ref{fig-costbreakdown} (\textbf{a})). Leveraging this, we integrate the attention mechanism directly into the Wigner \(6j\) convolution (Algorithm~\ref{alg:Wigner $6j$_tp}). As a result, while the resulting model \textit{is not strictly linear}, the number of tensor product operations scales linearly with input size, enabling efficient computation. Second, E2Former is not defined solely by its use of Wigner \(6j\) convolutions. Rather, it represents a broader architectural principle that combines symmetry-aware design with practical engineering. The efficiency gains afforded by the Wigner \(6j\) kernel allow us to reallocate the computational budget toward increased expressivity—e.g., by incorporating deeper layers, wider hidden dimensions, more attention heads, and MACE higher-order interactions~\citep{batatia2022mace}. This trade-off between symmetry-driven modeling and architectural scalability is central to the design philosophy of E2Former.

% \section{Results}
% In this section, we first demonstrate the scaling behavior of the proposed Wigner-$6j$ convolution. We then evaluate E2Former, which heavily relies on Wigner-$6j$ convolution, across diverse chemical systems. 

 \begin{figure}[t]
    \centering 
    \includegraphics[width=\linewidth]{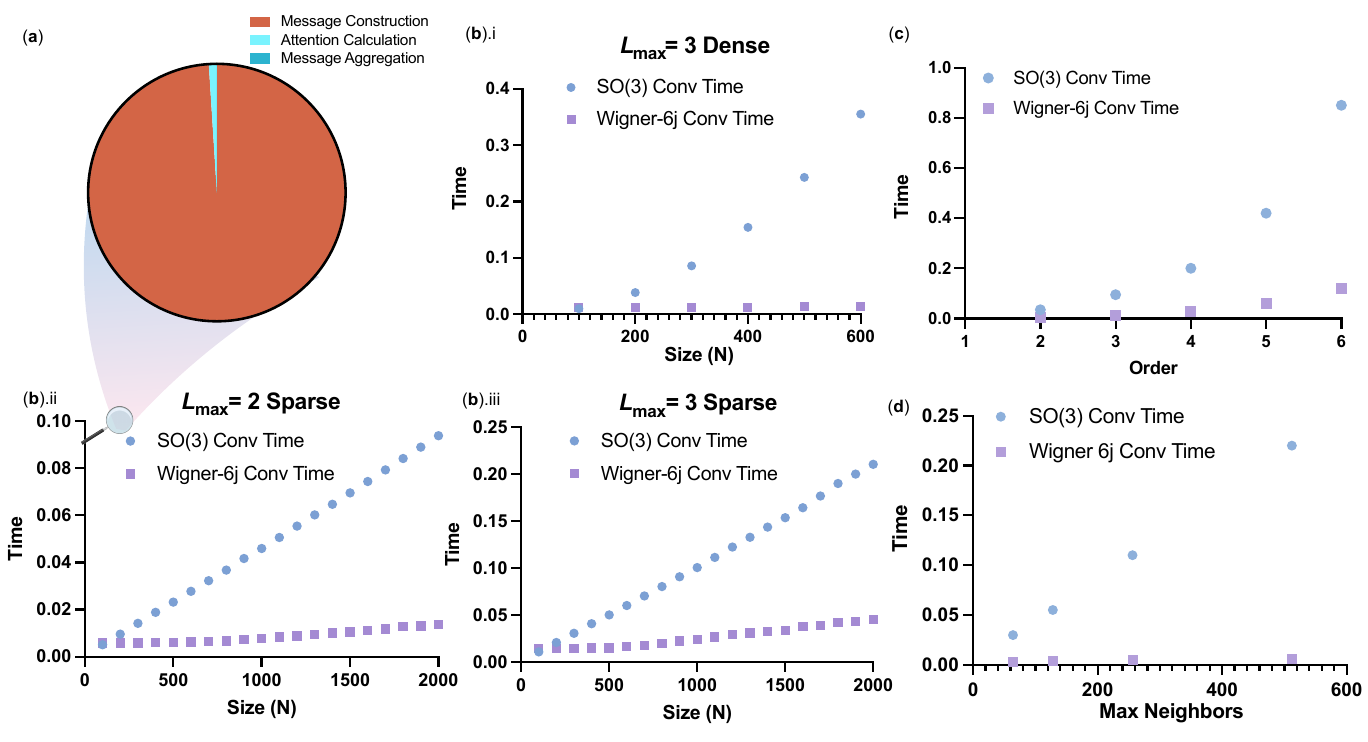}
    \vspace{-7mm}
    \caption{
(\textbf{a}) Breaking down the runtime of attention-based $\mathrm{SO}(3)$ convolutions shows that message construction is the slowest step. Calculating attention and combining messages take much less time.
(\textbf{b}) We compared the runtime of our Wigner $6j$ convolution (purple squares) against the standard $\mathrm{SO}(3)$ convolution (blue circles). Our method was consistently faster across different graph sizes ($N$), maximum angular momenta ($L_{\mathrm{max}}$), and sparsity levels (dense vs. sparse, see subplots b.i-iii). Full experimental details are in Sec.~\ref{sec:scale-analysis}.
(\textbf{c}) Runtime on 1000-node graphs as a function of angular momentum cutoff $L$ (up to $L_{\mathrm{max}}=6$).
(\textbf{d}) Runtime on 1000-node graphs with fixed $L_{\mathrm{max}}=3$, varying the maximum number of neighbors from 64 to 512. In (\textbf{b–d}), both methods yield \textbf{identical outputs}.}
\vspace{-5mm}
    \label{fig-costbreakdown}
\end{figure}

\section{Results}

 \subsection{Scaling Analysis of Wigner \texorpdfstring{$6j$}{6j} Conv and \texorpdfstring{$\mathrm{SO}(3)$}{SO(3)} Conv}

\label{sec:scale-analysis}
Here, we compare the runtime of Wigner $6j$ convolution (purple squares) and $\mathrm{SO}(3)$ convolution (blue circles). The two implementations are mathematically equivalent and, by construction, produce \textit{identical outputs} given the same molecular graphs. Specifically, we compare different  graph sizes $N$, maximum angular momenta $L_{\mathrm{max}}$, and connectivity patterns.  For dense graphs, defined as graphs where every node is connected to all other nodes, at $L_{\mathrm{max}} = 3$ (Fig.~\ref{fig-costbreakdown} (b.i)), the quadratic scaling of $\mathrm{SO}(3)$ convolution introduces a noticeable performance gap. Additionally, for sparse graphs, defined here as graphs with $k$-nearest neighbor connectivity ($k=32$), at $L_{\mathrm{max}} = 2$ and $L_{\mathrm{max}} = 3$ (Figs.~\ref{fig-costbreakdown} (b.ii) and (b.iii)), Wigner $6j$ convolution scales consistently better than the $\mathrm{SO}(3)$ convolution. Fig.~~\ref{fig-costbreakdown}c further shows the impact of increasing the angular momentum cutoff up to $L_{\mathrm{max}} = 6$ on 1000-node graphs, where our method consistently achieves approximately a 7× speed-up over the baseline. Finally, Fig.~~\ref{fig-costbreakdown}d demonstrates that as the number of neighbors per node increases from 64 to 512 (with $L_{\mathrm{max}}=3$), the speed-up from Wigner $6j$ convolution becomes even more pronounced.

\subsection{E2Former Results}
We evaluate E2Former which heavily utilizes the Wigner $6j$ convolution on three standard benchmarks—two catalysis datasets (OC20, OC22) and a molecular conformer dataset (SPICE)—and find that it achieves strong accuracy while maintaining computational efficiency.

\subsubsection{Performance on the OC20 Dataset}
\textbf{Dataset Description.}  The OC20 dataset~\citep{chanussot2021open} comprises 1.2 million DFT relaxations computed using the revised Perdew-Burke-Ernzerhof (RPBE) functional~\cite{hammer1999improved}. Each system, averaging 73 atoms, represents an adsorbate molecule on a catalyst surface and is designed for the Structure-to-Energy-and-Forces (S2EF) task. This task involves predicting the system's energy and per-atom forces, with performance evaluated based on the mean absolute error (MAE) of these predictions.  Following ~\citep{gasteiger2022gemnet,liao2023equiformer}, we use the \textbf{2M} subset for training, and evaluate on the \textit{validation split}. This choice also reflects practical computational constraints, as training on the full dataset requires significant time and resources. All reported results are taken directly from previous publications~\citep{gasteiger2022gemnet,liao2023equiformer}. We compare two model variants: the 33M-parameter version and the 67M-parameter version. A summary of the results is presented in Table~\ref{tab:oc20_2m}.  E2Former demonstrates strong performance across all model sizes, with 67M variant achieving results comparable to state-of-the-art methods. Notably, the Small variant (33M parameters) maintains competitive accuracy while offering significant computational advantages.

\vspace{-3mm}
\begin{table*}[htbp]
\centering
\caption{Performance on the OC20-\textbf{2M} dataset. Results are reported in Energy (meV) and Force (meV/\r{A}) mean absolute error (MAE). E2Former achieves competitive accuracy and computational efficiency. Approximate training GPU hours are measured on 32G NVIDIA V100 GPUs. The best results are bolded and the second best are highlighted with underline.}
\label{tab:oc20_2m}
\resizebox{0.8\columnwidth}{!}{%
\begin{tabular}{lcccccc}
\toprule
\multirow{2}{*}{Model} & \multirow{2}{*}{\# Params (M)} & \multirow{2}{*}{Training GPU Hours} & \multirow{2}{*}{Inference Speed (samples/sec)} & \multicolumn{2}{c}{Validation} \\
\cmidrule(lr){5-6}
& & & & Energy MAE (meV) & Force MAE (meV/\r{A}) \\
\midrule
GemNet-dT           & \textbf{31}   & \underline{900}   & \underline{50}  & 358  & 29.50 \\
GemNet-OC           & 38   & 1500  & 38  & 286  & 25.70 \\
SCN                 & 126  & 3000  & 5   & 279  & 21.90 \\
eSCN                & 51   & 2200  & 19  & 283  & \underline{20.50} \\
EquiformerV2        & 85   & 1800  & 19  & 285  & \textbf{20.46} \\
\rowcolor{lightblue}
E2Former 33M            & \underline{33}   & \textbf{800}  & \textbf{62}  & \underline{275}  & 21.90 \\
\rowcolor{lightblue}
E2Former 67M           & 67   & 1500  & 34  & \textbf{270}  & \underline{20.50} \\
\bottomrule
\end{tabular}}
\end{table*}
\vspace{-3mm}

\subsubsection{Performance on the OC22 Dataset}

The OC22 dataset~\cite{tran2023open} is specifically designed for studying oxide electrocatalysis. In contrast to OC20, OC22 features DFT total energies, which could serve as a general and versatile DFT surrogate, enabling investigations beyond adsorption energies. We train on the OC22 S2EF-Total task and measure energy and force MAE on the S2EF-Total validation splits. Table~\ref{tab:s2ef_performance} summarizes our results on the OC22 S2EF task. E2Former achieves competitive energy and force MAEs while enabling rapid training and inference. Notably, it converges in just 1,500 GPU hours—only one-third of the runtime required by the SOTA model.
\vspace{-3mm}
\begin{table*}[ht]
\centering
\caption{Performance on the OC22 S2EF task. Results are reported for Energy MAE (meV) and Force MAE (meV/Å) under In Distribution (ID) and Out-Of-Distribution (OOD) splits. Approximate training GPU hours are measured on 32G NVIDIA V100 GPUs. The best results are bolded and the second best are highlighted with underline.}
\label{tab:s2ef_performance}
\resizebox{0.8\columnwidth}{!}{%
\begin{tabular}{lcccccc}
\toprule
Model & \# Params (M) & Training GPU Hours & \multicolumn{2}{c}{Energy MAE (meV)} & \multicolumn{2}{c}{Force MAE (meV/Å)} \\
\cmidrule(lr){4-5} \cmidrule(lr){6-7}
& & & ID & OOD & ID & OOD \\
\midrule
GemNet-OC & 39  & - & 545  & 1011 & 30.00 & 40.00 \\
EquiformerV2                       & 122 & 4500 & \textbf{433}  & \textbf{629}  & \textbf{22.88} & \textbf{30.70} \\
\rowcolor{lightblue}
E2Former   67M                        & 67  & \textbf{1500}  & \underline{491} &  \underline{724}    & \underline{25.98} & \underline{36.45}      \\
\bottomrule
\end{tabular}}
\end{table*}
\vspace{-5mm}

\subsubsection{Performance on the SPICE Dataset}
 The SPICE dataset~\cite{eastman2023spice} comprises small organic molecules and encompasses a diverse array of chemical species with neutral formal charges. The geometries were generated through molecular dynamics simulations using classical force fields, followed by the sampling of various conformations. High-fidelity labeling was achieved at the $ \omega\mathrm{B97M}\text{-}\mathrm{D3(BJ)}/\mathrm{def2\text{-}TZVPPD}$
 level of calculations. This dataset includes configurations of up to 50 atoms. It was further augmented with larger molecules, ranging from 50 to 90 atoms, derived from the QMugs dataset~\cite{isert2022qmugs}, as well as water clusters obtained from simulations of liquid water. Approximately 85\% of the SPICE dataset was used for model training, while 15\% was allocated for model testing. We evaluate E2Former 33M on the SPICE dataset and a summary of the results is provided in Table~\ref{tab:spice}.

\vspace{-3mm}
\begin{table*}[!htbp]
    \centering
    \renewcommand{\arraystretch}{1.2}
        \caption{Performance comparison on the SPICE dataset with actual training time and dataset sizes. Results are reported in Energy (E, meV/atom) and Force (F, meV/\r{A}) MAE. Approximate training GPU hours are measured on 80G NVIDIA A100 GPUs.}\resizebox{\textwidth}{!}{%

    \begin{tabular}{lcccccccccccccccccc}
        \toprule
        \textbf{Dataset Name (Size)} & \textbf{Training Time} & \multicolumn{2}{c}{\textbf{PubChem (33884)}} & \multicolumn{2}{c}{\textbf{Monomers (889)}} & \multicolumn{2}{c}{\textbf{Dimers (13896)}} & \multicolumn{2}{c}{\textbf{Dipeptides (1025)}} & \multicolumn{2}{c}{\textbf{SolvatedAminoAcids (52)}} & \multicolumn{2}{c}{\textbf{Water (84)}} & \multicolumn{2}{c}{\textbf{Qmugs (144)}} & \multicolumn{2}{c}{\textbf{All}} \\
        \cmidrule(lr){3-4} \cmidrule(lr){5-6} \cmidrule(lr){7-8} \cmidrule(lr){9-10} \cmidrule(lr){11-12} \cmidrule(lr){13-14} \cmidrule(lr){15-16} \cmidrule(lr){17-18}
        & (gpu hours) & $E$ & $F$ & $E$ & $F$ & $E$ & $F$ & $E$ & $F$ & $E$ & $F$ & $E$ & $F$ & $E$ & $F$ & $E$ & $F$ \\
        \midrule
        MACE Small & 168 & 1.41 & 35.68 & 1.04 & 17.63 & 0.98 & 16.31 & 0.84 & 25.07 & 1.60 & 38.56 & 1.67 & 28.53 & 1.03 & 41.45 & 1.27 & 29.76 \\
        MACE Medium & 240 & 0.91 & 20.57 & 0.63 & 9.36 & 0.58 & 9.02 & 0.52 & 14.27 & 1.21 & 23.26 & 0.76 & 15.27 & 0.69 & 23.58 & 0.80 & 17.03 \\
        MACE Large & 336 & 0.88 & 14.75 & 0.59 & \textbf{6.58} & 0.54 & 6.62 & \textbf{0.42} & 10.19 & \textbf{0.98} & 19.43 & \textbf{0.83} & 13.57 & 0.45 & 16.93 & 0.77 & 12.26 \\
\rowcolor{lightblue}
         E2Former 33M &70& \textbf{0.67}& \textbf{8.9}& \textbf{0.49}& 7.1& \textbf{0.43}& \textbf{4.01}& 0.51& \textbf{5.63}& 1.1& \textbf{19.2}& 0.96& \textbf{13.52}& \textbf{0.65}& \textbf{10.2}& \textbf{0.60} & \textbf{7.46} \\

        % E2Former & \textbf{50} & \textbf{0.83} & \textbf{9.85} & \textbf{0.58} & 8.40 & \textbf{0.50} & \textbf{4.86} & 0.65 & \textbf{6.30} & 1.10 & 21.03 & \textbf{1.01} & 14.70 & \textbf{0.79} & \textbf{11.23} & \textbf{0.73} & \textbf{8.39} \\
        \bottomrule
    \end{tabular}}

    % \caption{Performance metrics for different test sets (bold values indicate the best results for each column)}
\label{tab:spice}
\end{table*}

% \begin{figure}[t]
%     \centering
%     \includegraphics[width=0.9\linewidth]{figures/scaling_plot.pdf}
%     \caption{Runtime and memory scaling analysis of E2Former compared to baselines. Grey lines indicate the cumulative number of atoms per batch. E2Former achieves consistent performance across system sizes.}
%     \label{fig:scaling}
% \end{figure}

% E2Former achieves state-of-the-art accuracy, particularly excelling in force prediction on complex systems like dipeptides and solvated molecules, where precision is critical for molecular dynamics applications.

E2Former achieved state-of-the-art performance in most subsets, particularly in datasets with ample data, such as PubChem and DEShaw370-Dimers. Furthermore, compared to the MACE-Large model, E2Former achieves approximately a \textit{fivefold} increase in training speed, thereby further validating its efficiency.

\subsection{Meomory and Efficiency Scaling}
To further probe efficiency, we evaluated computational performance in an even more extreme case: system scale. We benchmarked E2Former against MACE-Large and EquiformerV2 on systems containing up to 6,400 atoms (Table ~\ref{tab:scale}). E2Former consistently achieved the highest throughput, with its performance advantage becoming clearer as system size increased. At 3,200 atoms, E2Former processes data nearly three times faster than MACE-Large. Crucially, E2Former was the only model capable of handling simulations at the 6,400-atom scale, a size at which both MACE-Large and EquiformerV2 failed.
\vspace{-2mm}
\begin{table}[!htbp]
    \centering
    \small
    \renewcommand{\arraystretch}{1.2}
    \caption{Comparison of memory cost and efficiency on large-scale systems with up to 6,400 atoms.}
    % Resizing to \textwidth as it seems to be the intended width for comparison
    \vspace{-3mm}
    \resizebox{\textwidth}{!}{%
    \begin{tabular}{lrrrrrr}
        \toprule
        \textbf{Atom Number} & \multicolumn{3}{c}{\textbf{Memory Cost (GB)}} & \multicolumn{3}{c}{\textbf{Training Efficiency (Samples/Second)}} \\
        \cmidrule(lr){2-4} \cmidrule(lr){5-7}
        & \textbf{Equiformer V2-22M} & \textbf{MACE-Large-33M} & \textbf{E2Former-33M} & \textbf{Equiformer V2-22M} & \textbf{MACE-Large-33M} & \textbf{E2Former-33M} \\
        \midrule
        200 & 14.2 & 6 & \cellcolor{lightblue}3.9 & 1.81 & 5.1 & \cellcolor{lightblue}4.5 \\
        400 & 26.7 & 10.7 & \cellcolor{lightblue}7.6 & 1.27 & 2.91 & \cellcolor{lightblue}4.2 \\
        800 & 53 & 23 & \cellcolor{lightblue}16 & 0.68 & 1.61 & \cellcolor{lightblue}2.36 \\
        1600 & - & 38 & \cellcolor{lightblue}20 & 0.83 & 0.81 & \cellcolor{lightblue}2.46 \\
        3200 & - & 74 & \cellcolor{lightblue}40 & - & 0.41 & \cellcolor{lightblue}1.2 \\
        6400 & - & - & \cellcolor{lightblue}80 & - & - & \cellcolor{lightblue}0.59 \\
        \bottomrule
    \end{tabular}%
    \label{tab:scale}
    }
\end{table}
\vspace{-5mm}
\section{Molecular Dynamics Simulation}

In this section, we demonstrate the practical utility of E2Former in molecular dynamics simulations. While machine learning methods are extensively used to predict molecular and material properties, accurately simulating the behavior of systems over extended time periods remains a significant challenge. This task requires not only precise predictions at each time step but also long-term stability and performance comparable to established methods such as DFT and empirical potential models.
\begin{wrapfigure}{t}{0.70\textwidth}
    \centering
    \vspace{-3mm}
    \includegraphics[width=\linewidth]{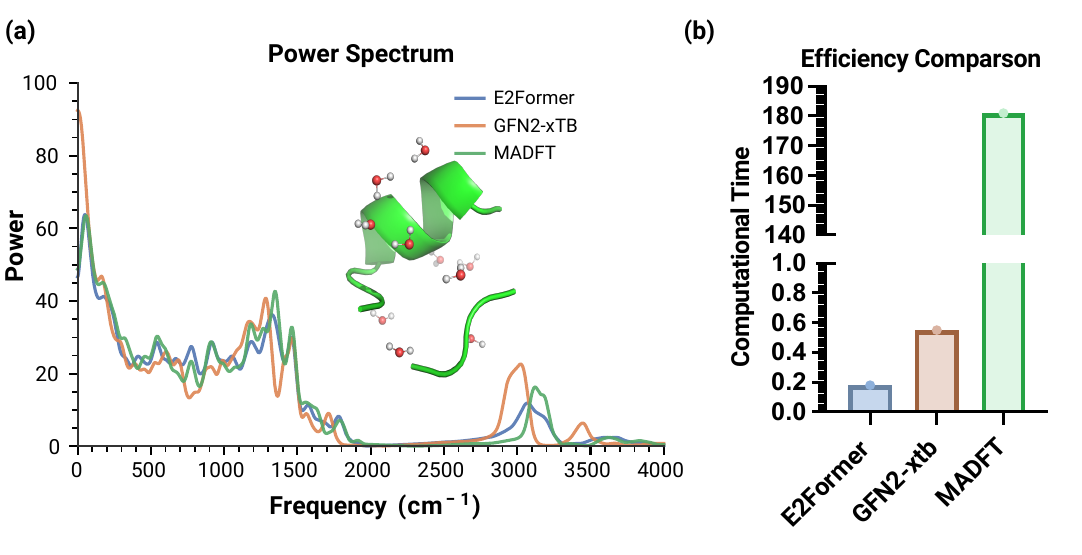}
    \vspace{-7mm}
    \caption{(a) Power spectra comparison across computational methods: E2Former (blue), GFN2-xTB (orange), and MADFT (green). The graph corresponds to a simulation at NVT ensemble, temperature \(T = 300 \, \text{K}\), with a time step of 1 fs. A structural overlay of the simulated system is displayed for context. (b) Efficiency comparison showing computational time for E2Former, GFN2-xtb, and MADFT. E2Former demonstrates the lowest computational time. The y-axis denotes the computation time for a single frame.}
    \vspace{-10mm}
    \label{fig:power-spectra}
\end{wrapfigure} 
We began by pretraining E2Former on a large in-house dataset derived from DeShaw (2M)~\cite{donchev2021quantum} and GEMS~\cite{unke2024biomolecular} (2.7M), constituting a \textit{foundational model} on machine-learning force field.

\subsection{Small-scale Amino Acid Systems}
To evaluate the model, we first performed an NVT (\(T = 300 \, \text{K}\)) simulation of an amino acid wrapped by water molecules (The structure is shown in Fig.~\ref{fig:power-spectra}(a)), totaling 253 atoms. This evaluation was conducted on a system equipped with a single NVIDIA A100 GPU and an AMD EPYC 7V13 24-core CPU. Over the course of 10,000 simulation time steps (1fs per step), we compared the trajectory 's power spectrum obtained from E2Former, CUDA-accelerated DFT~\cite{ju2024acceleration}- namely, the MADFT software and state-of-the-art empirical potential methods GFN2-xTB~\cite{bannwarth2019gfn2}. Fig.~\ref{fig:power-spectra}(a) illustrates the results.
% \begin{wrapfigure}{r}{0.20\textwidth}
% \centering
% \vspace{-6mm}
% \includegraphics[width=\linewidth]{}
% \vspace{-8mm}
% \caption{Efficiency comparison showing computational time for E2Former, GFN2-xtb, and MADFT. E2Former demonstrates the lowest computational time. The y-axis denotes the computation time for a single frame.}
% \label{fig:efficiency}
% \end{wrapfigure}
The evaluation demonstrates that E2Former exhibits long-term stability in molecular simulations. The power spectrum shows that E2Former predictions   align closely with those of the DFT baseline. In contrast, empirical method  shows significant deviations in high-frequency regions, particularly near 3,000 and 3,500 frequencies. These high-frequency components are critical as they provide insights into bond vibrations~\cite{ditler2022vibrational} and molecular stability~\cite{chng2024molecular}, underscoring the ability of E2Former to effectively extrapolate across the molecular potential energy surface.
To assess computational efficiency, we compare runtime across methods, which is depicted in Fig.~\ref{fig:power-spectra}(b). E2Former achieves a computational speed approximately 1,000 times faster than DFT, and around 2 times faster than GFN2-xTB. 
\subsection{Large-Scale 6000-atom Water Cluster}

\begin{wrapfigure}{t}{0.5\textwidth}
    \centering
    \vspace{-15mm}
    \includegraphics[width=\linewidth]{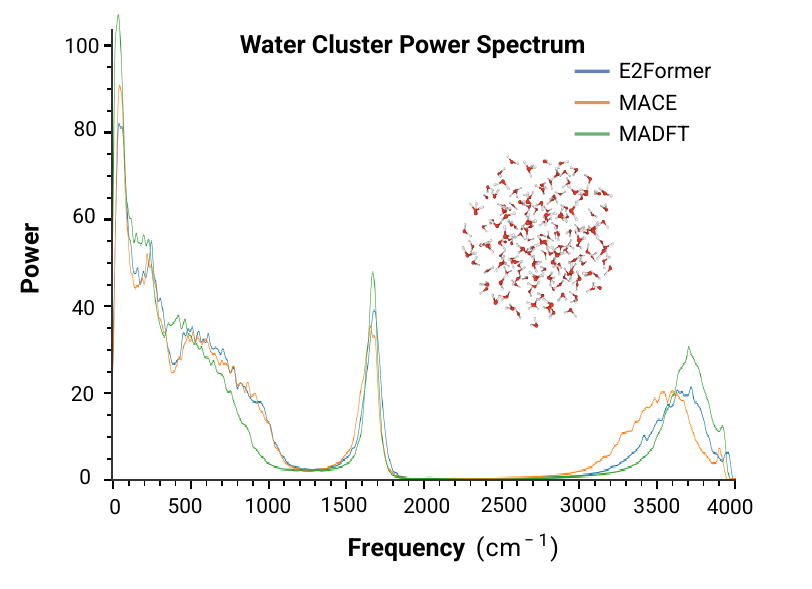}
    \vspace{-10mm}
    \caption{ Power spectra comparison across computational methods: E2Former (blue), MACE (orange), and MADFT (green). }
    \label{fig:power-spectra}
\end{wrapfigure}

To evaluate accuracy and efficiency at larger scales, we tested our model on a 6,000-atom water cluster by analyzing atomic vibration patterns. E2Former closely reproduced the reference DFT results, accurately capturing key spectral features: low-frequency modes (below 1000~cm$^{-1}$), the H--O--H bending mode (1650~cm$^{-1}$), and O--H stretching vibrations. In contrast, MACE-Large exhibited larger deviations, particularly for high-frequency stretching modes. We further validated our approach on a more complex system---a Chignolin peptide solvated in water (approximately 2,000 atoms, Fig~\ref{fig:chig})---successfully optimizing its structure while maintaining high force prediction accuracy (0.484~kcal/mol/\AA{} error).

\section{Related Work}
% \textbf{Equivariant Graph Neural Networks (EGNNs)} 
% Symmetry-aware architectures have become central to machine learning tasks where physical or geometric constraints are crucial. Foundational work on group-equivariant convolutional neural networks (ECNNs)~\citep{cohen2016group} established the importance of symmetry constraints, and subsequent theoretical results~\citep{kondor2018generalization} demonstrated that convolutional structures and equivariance are inherently linked. 

\textbf{Invariant GNNs.} Invariant geometric GNNs have driven state-of-the-art performance in predicting molecular and crystalline properties~\citep{schutt2018schnet,sanyal2018mt,chen2019graph,gasteiger2020directional,liu2022spherical,gasteiger2021gemnet,wang2022comenet,qu2024importance} and have been instrumental in advancing protein structure prediction~\citep{jumper2021highly}. 

\textbf{Cartesian Equivariant GNNs.} Building on invariance, Cartesian equivariant GNNs explicitly model transformations in $\mathbb{R}^3$, offering greater flexibility. These models have shown strong empirical results in similar domains~\citep{jing2020learning,satorras2021n,du2022se,simeon2024tensornet,aykent2025gotennet} and have recently evolved to include Cartesian equivariant transformer layers~\citep{frank2022so3krates}.

\textbf{Spherical Equivariant GNNs.} Complementing Cartesian approaches, spherical equivariant GNNs leverage spherical tensors to naturally handle rotational symmetries, relying on the representation theory of $\mathrm{SO}(3)$. Recent advancements include $\mathrm{SO}(3)$- and $\mathrm{SE}(3)$-equivariant transformer layers~\citep{fuchs2020se,liao2023equiformer}, efficient interatomic potential calculations~\citep{batzner2022e3,batatia2022mace,musaelian2023learning}, and optimizations that reduce convolutions in $\mathrm{SO}(3)$ to $\mathrm{SO}(2)$~\citep{passaro2023reducing}. 
These improvements have enabled strong performance in diverse applications, including geometry, physics, and chemistry~\citep{thomas2018tensor}, dynamic molecular modeling~\citep{anderson2019cormorant}, and fluid mechanical modeling~\citep{toshev20233}.

\section{Conclusion and Future Work}

We introduced E2Former, an efficient and scalable Transformer architecture for molecular modeling. By leveraging the Wigner $6j$ convolution, E2Former shifts computation from edges to nodes, reducing complexity from \( O(|\mathcal{E}|) \) to \( O(|\mathcal{V}|) \) while maintaining rotational equivariance and expressive power. E2Former demonstrated competitive performance across OC20, OC22, and SPICE benchmarks with significantly improved computational efficiency. Its scalability makes it suited for large-scale applications in biology, drug discovery, and materials science.

Future work could focus on optimizing E2Former for  hardware accelerators and integrating kernelized Euclidean attention~\cite{frank2024euclidean}. Combining Wigner-$6j$ Conv with $\mathrm{SO}(2)$ convolution could further bolster the model's efficiency. Scaling to the real-world all-atom protein systems will also be investigated.

\bibliography{refs}

\begin{thebibliography}{10}

\bibitem{anderson2019cormorant}
Brandon Anderson, Truong~Son Hy, and Risi Kondor.
\newblock Cormorant: Covariant molecular neural networks.
\newblock {\em Advances in neural information processing systems}, 32, 2019.

\bibitem{aykent2025gotennet}
Sarp Aykent and Tian Xia.
\newblock {GotenNet: Rethinking Efficient 3D Equivariant Graph Neural Networks}.
\newblock In {\em The Thirteenth International Conference on LearningRepresentations}, 2025.

\bibitem{bannwarth2019gfn2}
Christoph Bannwarth, Sebastian Ehlert, and Stefan Grimme.
\newblock Gfn2-xtb—an accurate and broadly parametrized self-consistent tight-binding quantum chemical method with multipole electrostatics and density-dependent dispersion contributions.
\newblock {\em Journal of chemical theory and computation}, 15(3):1652--1671, 2019.

\bibitem{bartok2013representing}
Albert~P Bart{\'o}k, Risi Kondor, and G{\'a}bor Cs{\'a}nyi.
\newblock On representing chemical environments.
\newblock {\em Physical Review B—Condensed Matter and Materials Physics}, 87(18):184115, 2013.

\bibitem{bartok2010gaussian}
Albert~P Bart{\'o}k, Mike~C Payne, Risi Kondor, and G{\'a}bor Cs{\'a}nyi.
\newblock Gaussian approximation potentials: The accuracy of quantum mechanics, without the electrons.
\newblock {\em Physical review letters}, 104(13):136403, 2010.

\bibitem{batatia2022mace}
Ilyes Batatia, David~Peter Kovacs, Gregor N.~C. Simm, Christoph Ortner, and Gabor Csanyi.
\newblock {MACE}: Higher order equivariant message passing neural networks for fast and accurate force fields.
\newblock In Alice~H. Oh, Alekh Agarwal, Danielle Belgrave, and Kyunghyun Cho, editors, {\em Advances in Neural Information Processing Systems}, 2022.

\bibitem{batzner2022e3}
Simon Batzner, Albert Musaelian, Lixin Sun, Mario Geiger, Jonathan~P Mailoa, Mordechai Kornbluth, Nicola Molinari, Tess~E Smidt, and Boris Kozinsky.
\newblock E(3)-equivariant graph neural networks for data-efficient and accurate interatomic potentials.
\newblock {\em Nature communications}, 13(1):2453, 2022.

\bibitem{biedenharn1984angular}
Lawrence~C Biedenharn, James~D Louck, and Peter~A Carruthers.
\newblock {\em Angular momentum in quantum physics: theory and application}.
\newblock 1984.

\bibitem{cellmer2011making}
Troy Cellmer, Marco Buscaglia, Eric~R Henry, James Hofrichter, and William~A Eaton.
\newblock Making connections between ultrafast protein folding kinetics and molecular dynamics simulations.
\newblock {\em Proceedings of the National Academy of Sciences}, 108(15):6103--6108, 2011.

\bibitem{chanussot2021open}
Lowik Chanussot, Abhishek Das, Siddharth Goyal, Thibaut Lavril, Muhammed Shuaibi, Morgane Riviere, Kevin Tran, Javier Heras-Domingo, Caleb Ho, Weihua Hu, et~al.
\newblock Open catalyst 2020 (oc20) dataset and community challenges.
\newblock {\em Acs Catalysis}, 11(10):6059--6072, 2021.

\bibitem{chen2019graph}
Chi Chen, Weike Ye, Yunxing Zuo, Chen Zheng, and Shyue~Ping Ong.
\newblock Graph networks as a universal machine learning framework for molecules and crystals.
\newblock {\em Chemistry of Materials}, 31(9):3564--3572, 2019.

\bibitem{chng2024molecular}
Choon-Peng Chng, Annette Dowd, Adam Mechler, and K~Jimmy Hsia.
\newblock Molecular dynamics simulations reliably identify vibrational modes in far-ir spectra of phospholipids.
\newblock {\em Physical Chemistry Chemical Physics}, 2024.

\bibitem{choromanski2020rethinking}
Krzysztof Choromanski, Valerii Likhosherstov, David Dohan, Xingyou Song, Andreea Gane, Tamas Sarlos, Peter Hawkins, Jared Davis, Afroz Mohiuddin, Lukasz Kaiser, et~al.
\newblock Rethinking attention with performers.
\newblock {\em arXiv preprint arXiv:2009.14794}, 2020.

\bibitem{ditler2022vibrational}
Edward Ditler and Sandra Luber.
\newblock Vibrational spectroscopy by means of first-principles molecular dynamics simulations.
\newblock {\em Wiley Interdisciplinary Reviews: Computational Molecular Science}, 12(5):e1605, 2022.

\bibitem{donchev2021quantum}
Alexander~G Donchev, Andrew~G Taube, Elizabeth Decolvenaere, Cory Hargus, Robert~T McGibbon, Ka-Hei Law, Brent~A Gregersen, Je-Luen Li, Kim Palmo, Karthik Siva, et~al.
\newblock Quantum chemical benchmark databases of gold-standard dimer interaction energies.
\newblock {\em Scientific data}, 8(1):55, 2021.

\bibitem{drautz2019atomic}
Ralf Drautz.
\newblock Atomic cluster expansion for accurate and transferable interatomic potentials.
\newblock {\em Physical Review B}, 99(1):014104, 2019.

\bibitem{du2022se}
Weitao Du, He~Zhang, Yuanqi Du, Qi~Meng, Wei Chen, Nanning Zheng, Bin Shao, and Tie-Yan Liu.
\newblock Se (3) equivariant graph neural networks with complete local frames.
\newblock In {\em International Conference on Machine Learning}, pages 5583--5608. PMLR, 2022.

\bibitem{dymuniversality}
Nadav Dym and Haggai Maron.
\newblock On the universality of rotation equivariant point cloud networks.
\newblock In {\em International Conference on Learning Representations}.

\bibitem{eastman2023spice}
Peter Eastman, Pavan~Kumar Behara, David~L Dotson, Raimondas Galvelis, John~E Herr, Josh~T Horton, Yuezhi Mao, John~D Chodera, Benjamin~P Pritchard, Yuanqing Wang, et~al.
\newblock Spice, a dataset of drug-like molecules and peptides for training machine learning potentials.
\newblock {\em Scientific Data}, 10(1):11, 2023.

\bibitem{edmonds1996angular}
Alan~Robert Edmonds.
\newblock {\em Angular momentum in quantum mechanics}, volume~4.
\newblock Princeton university press, 1996.

\bibitem{frank2024euclidean}
J~Thorben Frank, Stefan Chmiela, Klaus-Robert M{\"u}ller, and Oliver~T Unke.
\newblock Euclidean fast attention: Machine learning global atomic representations at linear cost.
\newblock {\em arXiv preprint arXiv:2412.08541}, 2024.

\bibitem{frank2022so3krates}
J~Thorben Frank, Oliver~T Unke, and Klaus-Robert M{\"u}ller.
\newblock So3krates--self-attention for higher-order geometric interactions on arbitrary length-scales.
\newblock {\em arXiv preprint arXiv:2205.14276}, 2022.

\bibitem{fuchs2020se}
Fabian Fuchs, Daniel Worrall, Volker Fischer, and Max Welling.
\newblock Se (3)-transformers: 3d roto-translation equivariant attention networks.
\newblock {\em Advances in Neural Information Processing Systems}, 33:1970--1981, 2020.

\bibitem{fulton2013representation}
William Fulton and Joe Harris.
\newblock {\em Representation theory: a first course}, volume 129.
\newblock Springer Science \& Business Media, 2013.

\bibitem{gasteiger2021gemnet}
Johannes Gasteiger, Florian Becker, and Stephan G{\"u}nnemann.
\newblock Gemnet: Universal directional graph neural networks for molecules.
\newblock {\em Advances in Neural Information Processing Systems}, 34:6790--6802, 2021.

\bibitem{gasteiger2020directional}
Johannes Gasteiger, Janek Gro{\ss}, and Stephan G{\"u}nnemann.
\newblock Directional message passing for molecular graphs.
\newblock {\em arXiv preprint arXiv:2003.03123}, 2020.

\bibitem{gasteiger2022gemnet}
Johannes Gasteiger, Muhammed Shuaibi, Anuroop Sriram, Stephan G{\"u}nnemann, Zachary Ulissi, C~Lawrence Zitnick, and Abhishek Das.
\newblock Gemnet-oc: developing graph neural networks for large and diverse molecular simulation datasets.
\newblock {\em arXiv preprint arXiv:2204.02782}, 2022.

\bibitem{e3nn}
Mario Geiger, Tess Smidt, Alby M., Benjamin~Kurt Miller, Wouter Boomsma, Bradley Dice, Kostiantyn Lapchevskyi, Maurice Weiler, Michał Tyszkiewicz, Simon Batzner, Dylan Madisetti, Martin Uhrin, Jes Frellsen, Nuri Jung, Sophia Sanborn, Mingjian Wen, Josh Rackers, Marcel Rød, and Michael Bailey.
\newblock Euclidean neural networks: e3nn, April 2022.

\bibitem{hammer1999improved}
BHLB Hammer, Lars~Bruno Hansen, and Jens~Kehlet N{\o}rskov.
\newblock Improved adsorption energetics within density-functional theory using revised perdew-burke-ernzerhof functionals.
\newblock {\em Physical review B}, 59(11):7413, 1999.

\bibitem{hohenberg1964inhomogeneous}
Pierre Hohenberg and Walter Kohn.
\newblock Inhomogeneous electron gas.
\newblock {\em Physical review}, 136(3B):B864, 1964.

\bibitem{hwang2015reaction}
Gyeong~S Hwang, Haley~M Stowe, Eunsu Paek, and Dhivya Manogaran.
\newblock Reaction mechanisms of aqueous monoethanolamine with carbon dioxide: a combined quantum chemical and molecular dynamics study.
\newblock {\em Physical Chemistry Chemical Physics}, 17(2):831--839, 2015.

\bibitem{isert2022qmugs}
Clemens Isert, Kenneth Atz, Jos{\'e} Jim{\'e}nez-Luna, and Gisbert Schneider.
\newblock Qmugs, quantum mechanical properties of drug-like molecules.
\newblock {\em Scientific Data}, 9(1):273, 2022.

\bibitem{jing2020learning}
Bowen Jing, Stephan Eismann, Patricia Suriana, Raphael John~Lamarre Townshend, and Ron Dror.
\newblock Learning from protein structure with geometric vector perceptrons.
\newblock In {\em International Conference on Learning Representations}, 2020.

\bibitem{ju2024acceleration}
Fusong Ju, Xinran Wei, Lin Huang, Andrew~J Jenkins, Leo Xia, Jia Zhang, Jianwei Zhu, Han Yang, Bin Shao, Peggy Dai, et~al.
\newblock Acceleration without disruption: Dft software as a service.
\newblock {\em Journal of Chemical Theory and Computation}, 20(24):10838--10851, 2024.

\bibitem{jumper2021highly}
John Jumper, Richard Evans, Alexander Pritzel, Tim Green, Michael Figurnov, Olaf Ronneberger, Kathryn Tunyasuvunakool, Russ Bates, Augustin {\v{Z}}{\'\i}dek, Anna Potapenko, et~al.
\newblock Highly accurate protein structure prediction with alphafold.
\newblock {\em Nature}, 596(7873):583--589, 2021.

\bibitem{karsai2018electron}
Ferenc Karsai, Manuel Engel, Espen Flage-Larsen, and Georg Kresse.
\newblock Electron--phonon coupling in semiconductors within the gw approximation.
\newblock {\em New Journal of Physics}, 20(12):123008, 2018.

\bibitem{kohn1965self}
Walter Kohn and Lu~Jeu Sham.
\newblock Self-consistent equations including exchange and correlation effects.
\newblock {\em Physical review}, 140(4A):A1133, 1965.

\bibitem{kundu2021quantum}
Arpan Kundu, Marco Govoni, Han Yang, Michele Ceriotti, Francois Gygi, and Giulia Galli.
\newblock Quantum vibronic effects on the electronic properties of solid and molecular carbon.
\newblock {\em Physical Review Materials}, 5(7):L070801, 2021.

\bibitem{kundu2022influence}
Arpan Kundu, Yunxiang Song, and Giulia Galli.
\newblock Influence of nuclear quantum effects on the electronic properties of amorphous carbon.
\newblock {\em Proceedings of the National Academy of Sciences}, 119(31):e2203083119, 2022.

\bibitem{lai1990exact}
Shan-Tao Lai and Ying-Nan Chiu.
\newblock Exact computation of the 3-j and 6-j symbols.
\newblock {\em Computer physics communications}, 61(3):350--360, 1990.

\bibitem{liao2023equiformer}
Yi-Lun Liao and Tess Smidt.
\newblock Equiformer: Equivariant graph attention transformer for 3d atomistic graphs.
\newblock In {\em International Conference on Learning Representations}, 2023.

\bibitem{equiformer_v2}
Yi-Lun Liao, Brandon Wood, Abhishek Das, and Tess Smidt.
\newblock Equiformerv2: Improved equivariant transformer for scaling to higher-degree representations.
\newblock {\em arxiv preprint arxiv:2306.12059}, 2023.

\bibitem{liu2022spherical}
Yi~Liu, Limei Wang, Meng Liu, Yuchao Lin, Xuan Zhang, Bora Oztekin, and Shuiwang Ji.
\newblock Spherical message passing for 3d molecular graphs.
\newblock In {\em International Conference on Learning Representations (ICLR)}, 2022.

\bibitem{luo2024enabling}
Shengjie Luo, Tianlang Chen, and Aditi~S. Krishnapriyan.
\newblock Enabling efficient equivariant operations in the fourier basis via gaunt tensor products.
\newblock In {\em The Twelfth International Conference on Learning Representations}, 2024.

\bibitem{maximon20103j}
Leonard~C Maximon.
\newblock 3j, 6j, 9j symbols., 2010.

\bibitem{musaelian2023learning}
Albert Musaelian, Simon Batzner, Anders Johansson, Lixin Sun, Cameron~J Owen, Mordechai Kornbluth, and Boris Kozinsky.
\newblock Learning local equivariant representations for large-scale atomistic dynamics.
\newblock {\em Nature Communications}, 14(1):579, 2023.

\bibitem{passaro2023reducing}
Saro Passaro and C~Lawrence Zitnick.
\newblock Reducing so (3) convolutions to so (2) for efficient equivariant gnns.
\newblock In {\em International Conference on Machine Learning}, pages 27420--27438. PMLR, 2023.

\bibitem{qu2024importance}
Eric Qu and Aditi Krishnapriyan.
\newblock The importance of being scalable: Improving the speed and accuracy of neural network interatomic potentials across chemical domains.
\newblock {\em Advances in Neural Information Processing Systems}, 37:139030--139053, 2024.

\bibitem{racah1942theory}
Giulio Racah.
\newblock Theory of complex spectra. ii.
\newblock {\em Physical Review}, 62(9-10):438, 1942.

\bibitem{sanyal2018mt}
Soumya Sanyal, Janakiraman Balachandran, Naganand Yadati, Abhishek Kumar, Padmini Rajagopalan, Suchismita Sanyal, and Partha Talukdar.
\newblock Mt-cgcnn: Integrating crystal graph convolutional neural network with multitask learning for material property prediction.
\newblock {\em arXiv preprint arXiv:1811.05660}, 2018.

\bibitem{satorras2021n}
V{\i}ctor~Garcia Satorras, Emiel Hoogeboom, and Max Welling.
\newblock E (n) equivariant graph neural networks.
\newblock In {\em International conference on machine learning}, pages 9323--9332. PMLR, 2021.

\bibitem{schutt2021equivariant}
Kristof Sch{\"u}tt, Oliver Unke, and Michael Gastegger.
\newblock Equivariant message passing for the prediction of tensorial properties and molecular spectra.
\newblock In {\em International Conference on Machine Learning}, pages 9377--9388. PMLR, 2021.

\bibitem{schutt2018schnet}
Kristof~T Sch{\"u}tt, Huziel~E Sauceda, P-J Kindermans, Alexandre Tkatchenko, and K-R M{\"u}ller.
\newblock Schnet--a deep learning architecture for molecules and materials.
\newblock {\em The Journal of Chemical Physics}, 148(24):241722, 2018.

\bibitem{simeon2024tensornet}
Guillem Simeon and Gianni De~Fabritiis.
\newblock Tensornet: Cartesian tensor representations for efficient learning of molecular potentials.
\newblock {\em Advances in Neural Information Processing Systems}, 36, 2024.

\bibitem{PhysRevResearch.3.L012002}
Tess~E. Smidt, Mario Geiger, and Benjamin~Kurt Miller.
\newblock Finding symmetry breaking order parameters with euclidean neural networks.
\newblock {\em Phys. Rev. Res.}, 3:L012002, Jan 2021.

\bibitem{szabo2012modern}
Attila Szabo and Neil~S Ostlund.
\newblock {\em Modern quantum chemistry: introduction to advanced electronic structure theory}.
\newblock Courier Corporation, 2012.

\bibitem{thomas2018tensor}
Nathaniel Thomas, Tess Smidt, Steven Kearnes, Lusann Yang, Li~Li, Kai Kohlhoff, and Patrick Riley.
\newblock Tensor field networks: Rotation-and translation-equivariant neural networks for 3d point clouds.
\newblock {\em arXiv preprint arXiv:1802.08219}, 2018.

\bibitem{toshev20233}
Artur~P Toshev, Gianluca Galletti, Johannes Brandstetter, Stefan Adami, and Nikolaus~A Adams.
\newblock E ($3 $) equivariant graph neural networks for particle-based fluid mechanics.
\newblock {\em arXiv preprint arXiv:2304.00150}, 2023.

\bibitem{tran2023open}
Richard Tran, Janice Lan, Muhammed Shuaibi, Brandon~M Wood, Siddharth Goyal, Abhishek Das, Javier Heras-Domingo, Adeesh Kolluru, Ammar Rizvi, Nima Shoghi, et~al.
\newblock The open catalyst 2022 (oc22) dataset and challenges for oxide electrocatalysts.
\newblock {\em ACS Catalysis}, 13(5):3066--3084, 2023.

\bibitem{unke2024biomolecular}
Oliver~T Unke, Martin St{\"o}hr, Stefan Ganscha, Thomas Unterthiner, Hartmut Maennel, Sergii Kashubin, Daniel Ahlin, Michael Gastegger, Leonardo Medrano~Sandonas, Joshua~T Berryman, et~al.
\newblock Biomolecular dynamics with machine-learned quantum-mechanical force fields trained on diverse chemical fragments.
\newblock {\em Science Advances}, 10(14):eadn4397, 2024.

\bibitem{varshalovich1987quantum}
DA~Varshalovich, AN~Moskalev, and VK~Khersonskii.
\newblock Quantum theory of angular momentum.
\newblock 1987.

\bibitem{wang2022comenet}
Limei Wang, Yi~Liu, Yuchao Lin, Haoran Liu, and Shuiwang Ji.
\newblock Com{EN}et: Towards complete and efficient message passing for 3d molecular graphs.
\newblock In Alice~H. Oh, Alekh Agarwal, Danielle Belgrave, and Kyunghyun Cho, editors, {\em Advances in Neural Information Processing Systems}, 2022.

\bibitem{wang2022visnet}
Yusong Wang, Shaoning Li, Xinheng He, Mingyu Li, Zun Wang, Nanning Zheng, Bin Shao, Tong Wang, and Tie-Yan Liu.
\newblock Visnet: a scalable and accurate geometric deep learning potential for molecular dynamics simulation.
\newblock {\em arXiv preprint arXiv:2210.16518}, 2022.

\bibitem{yamakov2002dislocation}
Vesselin Yamakov, Dieter Wolf, Simon~R Phillpot, Amiya~K Mukherjee, and Herbert Gleiter.
\newblock Dislocation processes in the deformation of nanocrystalline aluminium by molecular-dynamics simulation.
\newblock {\em Nature materials}, 1(1):45--49, 2002.

\end{thebibliography}
\bibliographystyle{plain}
\appendix
\newpage
\appendix

\DoToC
\newpage
\onecolumn

\section{Glossary of Notations}
\begin{table}[!htbp]
\centering
\caption{Glossary of Notations}
\label{tab:glossary}
\resizebox{1\columnwidth}{!}{%
\begin{tabular}{ll}
\toprule
\textbf{Symbol} & \textbf{Description} \\
\midrule
\(\mathcal{V}, \mathcal{E}\)         & Set of nodes (vertices) \(\mathcal{V}\) and edges \(\mathcal{E}\) in a molecular graph. \\
\(\vert \mathcal{V} \vert, \vert \mathcal{E} \vert\) & Number of nodes and edges in the graph, respectively. \\
\(N\)                               & Often used to denote \(\vert \mathcal{V} \vert\), the number of atoms (nodes). \\
\(d\)                               & Hidden feature dimension (number of channels) at each spherical degree. \\
\(s\)                               & Spherical dimension: the number of irreps (from \(\ell=0\) to \(\ell=L\)). \\
\(L, L_{\mathrm{max}}\)             & Maximum angular momentum (highest spherical degree). \\
\(\ell \ge 0\), \(m \in \{-\ell,\dots,\ell\}\) & Angular momentum quantum numbers for spherical harmonics. \\
\(\mathcal{R}_m^{(\ell)}(\mathbf{r})\)           & Real spherical harmonic of degree \(\ell\) and order \(m\), evaluated at position \(\mathbf{r}\). \\
\(U^{(\ell)}\)                      & An irreducible representation (irrep) of \(\mathrm{SO}(3)\) at angular momentum \(\ell\). \\
\(\mathbf{h}_i \in \mathbb{R}^{s \times d}\) 
                                    & Feature representation of node \(i\), containing scalar/vector components up to spherical degree \(L\). \\
\(\mathbf{r}_{ij} \in \mathbb{R}^3\)   & Relative position vector from node \(i\) to node \(j\). \\
\(\otimes\)                         & Clebsch–Gordan (CG) tensor product of two irreps. \\
\(\otimes^{6j}\)                    & Wigner \(6j\)-based tensor product (recoupling) of irreps, reorganizing the CG couplings. \\
\(\alpha_{ij}\)                     & Attention coefficient between node \(i\) and \(j\) in an equivariant attention layer. \\
\(C^{(\ell_3 m_3)}_{\ell_1 m_1, \,\ell_2 m_2}\) 
                                    & Clebsch–Gordan coefficient coupling two irreps \(\ell_1, \ell_2\) to an output irreps \(\ell_3\). \\
\(\displaystyle \begin{pmatrix}
\ell_1 & \ell_2 & \ell_3 \\
m_1 & m_2 & m_3
\end{pmatrix}\) 
                                    & Wigner \(3j\) symbol (equivalently related to Clebsch–Gordan coefficients). \\
\(\displaystyle \begin{Bmatrix}
j_1 & j_2 & j_3 \\
j_4 & j_5 & j_6
\end{Bmatrix}\)
                                    & Wigner \(6j\) symbol, governing recoupling of three angular momenta in different orders. \\
\(D^{(\ell)}(R)\)                   & Wigner \(D\)-matrix describing how spherical harmonics of degree \(\ell\) transform under rotation \(R\). \\
\(\mathrm{SO}(3)\)                  & 3D rotation group; E2Former is equivariant to transformations in \(\mathrm{SO}(3)\). \\
E2Former                            & The proposed \textbf{E}fficient and \textbf{E}quivariant Trans\textbf{former} architecture. \\
Wigner \(6j\) Conv                  & The core convolution module leveraging Wigner \(6j\) recoupling to shift edge-based operations to nodes. \\
\bottomrule
\end{tabular}}
\end{table}
\newpage

\section{Attention-based Wigner $6j$  Convolution}

We describe the algorithmic procedure for constructing the Attention-based Wigner $6j$ Convolution introduced in the main text. Assume that attention coefficients $\alpha$ have been precomputed, either through \texttt{Query}-\texttt{Key} inner products or via MLP-based attention mechanisms such as those in Equiformer v2~\cite{equiformer_v2}. We begin by precomputing the spherical harmonics $\mathcal{R}^{(\ell)}$ up to a maximum degree $L$ based on the input positions $\mathbf{r}$.

For each degree $k = 0, \dots, L$, we compute an attention-weighted tensor product between the input features $\mathbf{h}$ and the corresponding spherical harmonics $\mathcal{R}^{(k)}$. This intermediate representation is then modulated by the attention weights $\alpha$. The resulting tensors are subsequently recoupled using a Wigner $6j$ tensor product, where each recoupling path is parameterized by the Wigner $6j$ coefficients—offering a more flexible alternative to the standard Clebsch–Gordan coupling.

Finally, summing over all degrees $k$ yields the output irreducible representations (irreps), which are used to update the node embeddings.

\begin{algorithm}[!htbp]
   \caption{Wigner $6j$-Based Attention}
   \label{alg:Wigner $6j$_tp}
\begin{algorithmic}[1]
   \STATE \small{ {\bfseries Input:} Positions $\mathbf{r} \in \mathbb{R}^{N \times 3}$, input features $\mathbf{h} \in \mathbb{R}^{N \times H \times d \times s }$, attention weights $\alpha \in \mathbb{R}^{N \times N \times H}$ ($H$ for number for heads, $s$ for the number for spherical dimension, $d$ for the number of hidden channels), maximum order $L$}
   \STATE {\bfseries Output:} Output features $\mathbf{h}_{\text{out}} \in \mathbb{R}^{N \times H \times d \times s }$
   \STATE \textbf{Step 1: Precompute Spherical Harmonics}
   \FOR{$\ell = 0$ {\bfseries to} $L$}
           \STATE $\mathcal{R}^{(\ell)} \leftarrow \texttt{SH}(\ell, \mathbf{r})$
   \ENDFOR

\STATE \textbf{Step 2: Compute pairwise distances }
\STATE $D_{ij} \leftarrow \|\mathbf{r}_i - \mathbf{r}_j\|_2$  \hfill \textit{(pairwise Euclidean distance)}

   \STATE \textbf{Step 3: Compute Wigner $6j$ tp}
   \FOR{$k = 0$ {\bfseries to} $L$}
       \STATE Compute intermediate tensor product:
       \[
       \mathbf{T}_k \leftarrow \texttt{clebsch\_gorden\_tp}(\mathbf{h}, \mathcal{R}^{(k)})
       \]
    \STATE $\alpha^{(k)}_{ijh} \leftarrow \alpha_{ijh} / (D_{ij})^{k}$ \hfill \textit{(entrywise power normalization)}
       \STATE Apply attention weights:
       \[
       \mathbf{T}_k \leftarrow \texttt{einsum}(\text{``ijh, jhds $\rightarrow$ ihds"}, \alpha^{(k)}, \mathbf{T}_k)
       \]
       \STATE Recouple terms using Wigner $6j$ symbols:
       \[
       \mathbf{C}_k \leftarrow \texttt{Wigner6jTP}(\mathbf{T}_k, \mathcal{R}^{(L-k)})
       \]
       \STATE Update output:
       \(
       \mathbf{h}_{\text{out}} \leftarrow \mathbf{h}_{\text{out}} + (-1)^k \binom{L}{k} \mathbf{C}_k
       \)
   \ENDFOR
\end{algorithmic}
\end{algorithm}

\begin{remark}
The upshot is that \emph{(i)} the node-local spherical harmonics $\mathcal{R}^{(u)}(\vri)$ can be precomputed once for each node $i$, and \emph{(ii)} the partial sum $\sum_{j}\alpha_{ij}(\mathbf{h}_j \otimes \mathcal{R}^{(\ell-u)}(\vrj))$ can be treated as a single node-based operation. Thus, the number of tensor products is controlled by $|\mathcal{V}|$ (the number of nodes) rather than $|\mathcal{E}|$ (the number of edges). This is precisely the core reason E2Former achieves improved scalability compared to conventional $\mathrm{SO}(3)$-equivariant Transformers that do edge-level spherical harmonic products.
\end{remark}

\section{Wigner $3j$ and $6j$ Symbols}

The Wigner $3j$ and $6j$ symbols are fundamental constructs in the representation theory of the Lie group $\mathrm{SU}(2)$, intrinsically linked to the theory of angular momentum in quantum mechanics. These symbols emerge as crucial transformation coefficients when decomposing tensor products of irreducible representations of $\mathrm{SU}(2)$. They precisely encode the symmetry properties inherent in such decompositions, thereby providing a powerful computational framework for problems involving coupled representations.

\subsection{The Wigner $3j$ Symbol: Definition and Core Properties}
The Wigner $3j$ symbol is denoted by
\begin{equation}
    \begin{pmatrix}
    j_1 & j_2 & j_3 \\
    m_1 & m_2 & m_3
    \end{pmatrix},
\end{equation}
where $j_i$ are representation labels (angular momenta) and $m_i$ are their respective components (magnetic quantum numbers). For the symbol to be non-zero, two primary selection rules must be satisfied:
\begin{enumerate}
    \item Conservation of the $m$ quantum number: $m_1 + m_2 + m_3 = 0$.
    \item Triangle inequalities for the $j$ quantum numbers: $|j_1 - j_2| \leq j_3 \leq j_1 + j_2$, and its cyclic permutations. This ensures that the three angular momenta can form a closed vector triangle.
\end{enumerate}

The explicit algebraic form of the $3j$ symbol (due to Racah) is given by:
\begin{align}
    \begin{pmatrix}
    j_1 & j_2 & j_3 \\
    m_1 & m_2 & m_3
    \end{pmatrix} &= \delta_{m_1+m_2+m_3, 0} (-1)^{j_1 - j_2 - m_3} \nonumber \\
    &\quad \times \sqrt{\frac{(j_1 + j_2 - j_3)! (j_1 - j_2 + j_3)! (-j_1 + j_2 + j_3)!}{(j_1 + j_2 + j_3 +1)!}} \nonumber \\
    &\quad \times \sqrt{\prod_{k=1}^3 (j_k+m_k)!(j_k-m_k)!} \nonumber \\
    &\quad \times \sum_{z} \frac{(-1)^z}{z! (j_1+j_2-j_3-z)! (j_1-m_1-z)! (j_2+m_2-z)!} \nonumber \\
    &\quad \times \frac{1}{(j_3-j_2+m_1+z)! (j_3-j_1-m_2+z)!},
\end{align}
where the summation over the integer $z$ is constrained such that all factorial arguments remain non-negative. The initial $\delta$ factor enforces the $m$-conservation rule.

The $3j$ symbols possess several important symmetry properties:
\begin{itemize}
    \item Even permutation of columns leaves the symbol unchanged:
    $ \begin{pmatrix} j_1 & j_2 & j_3 \\ m_1 & m_2 & m_3 \end{pmatrix} = \begin{pmatrix} j_2 & j_3 & j_1 \\ m_2 & m_3 & m_1 \end{pmatrix} = \dots $
    \item Odd permutation of columns introduces a phase factor $(-1)^{j_1+j_2+j_3}$:
    $ \begin{pmatrix} j_2 & j_1 & j_3 \\ m_2 & m_1 & m_3 \end{pmatrix} = (-1)^{j_1+j_2+j_3} \begin{pmatrix} j_1 & j_2 & j_3 \\ m_1 & m_2 & m_3 \end{pmatrix}. $
    \item Time-reversal symmetry (negation of all $m_i$ values):
    \begin{equation}
    \begin{pmatrix} j_1 & j_2 & j_3 \\ -m_1 & -m_2 & -m_3 \end{pmatrix} = (-1)^{j_1 + j_2 + j_3} \begin{pmatrix} j_1 & j_2 & j_3 \\ m_1 & m_2 & m_3 \end{pmatrix}.
    \end{equation}
\end{itemize}
Furthermore, they satisfy crucial orthogonality relations, fundamental for their role as transformation coefficients:
\begin{equation}
    \sum_{m_1,m_2} (2j_3+1) \begin{pmatrix} j_1 & j_2 & j_3 \\ m_1 & m_2 & m_3 \end{pmatrix} \begin{pmatrix} j_1 & j_2 & j_3' \\ m_1 & m_2 & m_3' \end{pmatrix} = \delta_{j_3,j_3'} \delta_{m_3,m_3'},
\end{equation}
where the sum is over all allowed $m_1, m_2$ for fixed $j_1, j_2$. Another form is:
\begin{equation}
    \sum_{j_3, m_3} (2j_3+1) \begin{pmatrix} j_1 & j_2 & j_3 \\ m_1 & m_2 & m_3 \end{pmatrix} \begin{pmatrix} j_1 & j_2 & j_3 \\ m_1' & m_2' & m_3 \end{pmatrix} = \delta_{m_1,m_1'} \delta_{m_2,m_2'}.
\end{equation}

\subsection{The Wigner $6j$ Symbol: Recoupling Coefficients}
The Wigner $6j$ symbol, denoted $\{ \dots \}$, addresses the recoupling of three angular momenta. It arises when transforming between different sequential coupling schemes for combining three irreducible representations of $\mathrm{SU}(2)$. For instance, coupling $j_1$ and $j_2$ to an intermediate $j_{12}$, then coupling $j_{12}$ with $j_3$ to a final $J$, versus coupling $j_2$ and $j_3$ to $j_{23}$, then $j_1$ with $j_{23}$ to $J$.

It is defined through a sum over products of four $3j$ symbols:
\begin{align}
    \begin{Bmatrix} j_1 & j_2 & j_3 \\ j_4 & j_5 & j_6 \end{Bmatrix} &= \sum_{m_1,\dots,m_6} \sum_{m_1',\dots,m_3'} (-1)^{\sum_{k=1}^{6} (j_k - m_k)}
    \begin{pmatrix} j_1 & j_2 & j_3 \\ m_1 & m_2 & m_3 \end{pmatrix}
    \begin{pmatrix} j_1 & j_5 & j_6 \\ m_1' & -m_5 & m_6 \end{pmatrix} \nonumber \\
    & \quad \times \begin{pmatrix} j_4 & j_2 & j_6 \\ m_4 & m_2' & -m_6' \end{pmatrix}
    \begin{pmatrix} j_4 & j_5 & j_3 \\ -m_4' & m_5' & m_3' \end{pmatrix},
\end{align}
where the $m$ and $m'$ indices are appropriately summed while respecting the $3j$ symbol selection rules

The $6j$ symbol is directly related to the Racah $W$-coefficient by a phase factor:
\begin{equation}
    \begin{Bmatrix} j_1 & j_2 & j_3 \\ j_4 & j_5 & j_6 \end{Bmatrix} = (-1)^{j_1 + j_2 + j_4 + j_5} W(j_1 j_2 j_5 j_4; j_3 j_6).
\end{equation}
The Racah $W$-coefficient, $W(a b c d; e f)$, is the transformation coefficient between schemes $((a,b)e, d)c$ and $(a, (b,d)f)c$. Thus, the $6j$ symbol effectively captures the algebraic structure of associativity in tensor products of representations.

Symmetries of the $6j$ symbol are extensive:
\begin{itemize}
    \item Invariance under any permutation of its columns.
    \item Invariance under the exchange of upper and lower arguments in any two columns, e.g.:
    $ \begin{Bmatrix} j_1 & j_2 & j_3 \\ j_4 & j_5 & j_6 \end{Bmatrix} = \begin{Bmatrix} j_4 & j_5 & j_3 \\ j_1 & j_2 & j_6 \end{Bmatrix} $.
\end{itemize}
These 24 symmetries reflect the tetrahedral symmetry associated with the symbol, often visualized using Yutsis graphs. Each set $\{j_1, j_2, j_3\}$, $\{j_1, j_5, j_6\}$, $\{j_4, j_2, j_6\}$, and $\{j_4, j_5, j_3\}$ must satisfy the triangle inequalities for the $6j$ symbol to be non-zero.

An important orthogonality relation (one of several, including the Biedenharn-Elliott identity) is:
\begin{equation}
    \sum_{j_3} (2j_3+1)(2j_6+1) \begin{Bmatrix} j_1 & j_2 & j_3 \\ j_4 & j_5 & j_6 \end{Bmatrix} \begin{Bmatrix} j_1 & j_2 & j_3 \\ j_4 & j_5 & j_6' \end{Bmatrix} = \delta_{j_6,j_6'} \Delta(j_1, j_5, j_6) \Delta(j_4, j_2, j_6).
\end{equation}
Here, $\Delta(a,b,c) = 1$ if $a,b,c$ satisfy the triangle inequalities, and $0$ otherwise. 

A remarkable connection to geometry is provided by the Ponzano--Regge asymptotic formula. For large $j$ values, it relates the $6j$ symbol to the geometry of a tetrahedron whose edge lengths are $j_k+\tfrac{1}{2}$:
\begin{equation}
    \begin{Bmatrix} j_1 & j_2 & j_3 \\ j_4 & j_5 & j_6 \end{Bmatrix} \approx \frac{1}{\sqrt{12 \pi V}} \cos \left( \sum_{k=1}^6 (j_k + \tfrac{1}{2})\theta_k + \frac{\pi}{4} \right),
\end{equation}
where $V$ is the volume of the tetrahedron and $\theta_k$ are the dihedral angles. This formula bridges quantum angular momentum algebra with semi-classical geometric concepts.

\newpage
\section{Main Theorem Proofs}
\subsection{Binominal Local Expansion}
\begin{claim}[Multiplicity-One of $V^{(L_{\max})}$~\cite{varshalovich1987quantum,biedenharn1984angular}]
The irreducible representation $V^{(L_{\max})}$ appears with multiplicity one in the decomposition of $V^{(l_1)} \otimes \cdots \otimes V^{(l_k)}$. That is,
\[
V^{(l_1)} \otimes \cdots \otimes V^{(l_k)} \cong \bigoplus_{L} N_L V^{(L)}, \quad \text{with } N_{L_{\max}} = 1.
\]
\label{claim:Highest-weight}
\end{claim}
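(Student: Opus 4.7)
The plan is to deduce the multiplicity-one statement directly from highest-weight (equivalently, maximal-weight) theory of $\mathrm{SU}(2)$, which applies equally to $\mathrm{SO}(3)$ irreps via the standard correspondence, rather than through a character computation or an explicit induction on Clebsch--Gordan coefficients. The core observation is that the weight $L_{\max} = l_1 + \cdots + l_k$ can be realized in $V^{(l_1)} \otimes \cdots \otimes V^{(l_k)}$ in essentially only one way, and that this realization is a highest-weight vector.

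First I would fix the standard weight basis $\{e^{(l_i)}_{m_i}\}_{m_i = -l_i}^{l_i}$ for each factor $V^{(l_i)}$, so that the Cartan generator acts by $m_i$ on $e^{(l_i)}_{m_i}$. The tensor products $e^{(l_1)}_{m_1} \otimes \cdots \otimes e^{(l_k)}_{m_k}$ then form a weight basis of the tensor product, with weight $m_1 + \cdots + m_k$. Since $m_i \le l_i$ for each $i$, the unique way to reach weight $L_{\max}$ is $m_i = l_i$ for all $i$, so the $L_{\max}$-weight space is one-dimensional, spanned by $v_{+} := e^{(l_1)}_{l_1} \otimes \cdots \otimes e^{(l_k)}_{l_k}$.

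Next I would verify that $v_{+}$ is a highest-weight vector, i.e.\ annihilated by the raising operator $E_{+}$. This is immediate from the Leibniz rule for the action of $E_{+}$ on a tensor product: each summand in $E_{+}(v_{+})$ contains a factor $E_{+} e^{(l_i)}_{l_i} = 0$. Hence $v_{+}$ generates an irreducible submodule isomorphic to $V^{(L_{\max})}$ inside the tensor product.

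Finally, the multiplicity count follows from the standard fact that in any decomposition $V^{(l_1)} \otimes \cdots \otimes V^{(l_k)} \cong \bigoplus_L N_L V^{(L)}$, an irrep $V^{(L)}$ contributes to the $L_{\max}$-weight space only when $L \ge L_{\max}$, and when $L = L_{\max}$ it contributes exactly a one-dimensional subspace (the top weight line). Since the total $L_{\max}$-weight space has dimension one, we conclude $N_{L_{\max}} = 1$ and $N_L = 0$ for all $L > L_{\max}$. I do not anticipate a genuine obstacle here; the only point to handle carefully is making the bookkeeping of weight multiplicities precise, in particular ensuring that contributions from possible $V^{(L)}$ with $L > L_{\max}$ are ruled out by the triangle inequality applied inductively (or, more cleanly, by the same maximal-weight argument showing that no weight exceeding $L_{\max}$ can occur at all).
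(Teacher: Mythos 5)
The paper states this claim as a cited classical fact (attributing it to Varshalovich and Biedenharn--Louck) and does not supply a proof of its own, so there is nothing in the paper to compare against directly. Your highest-weight argument is the standard and correct proof: since each factor $V^{(l_i)}$ has weights bounded above by $l_i$, the weight $L_{\max}=\sum_i l_i$ occurs only for $m_i=l_i$ (giving a one-dimensional weight space) and no weight exceeding $L_{\max}$ occurs at all, so $N_L=0$ for $L>L_{\max}$ and hence the single dimension of the top-weight space forces $N_{L_{\max}}=1$. The observation that $v_+$ is annihilated by $E_+$ is not strictly needed once you note no weight above $L_{\max}$ occurs, but it is a harmless and standard way to identify the generated submodule, and the closing remark about ruling out $L>L_{\max}$ by the maximal-weight bound correctly closes the gap you flag.
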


\begin{claim}[Schur’s Lemma~\cite{fulton2013representation}]
Let $V, W$ be irreducible representations of a group $G$ over an algebraically closed field. If $T: V \rightarrow W$ is a $G$-equivariant linear map, then either $T = 0$, or $V \cong W$ and $T$ is a scalar multiple of the identity.
\label{claim:schur}
\end{claim}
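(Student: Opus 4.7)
The plan is to give the standard two-step proof of Schur's Lemma using the fact that kernels and images of $G$-equivariant maps are $G$-invariant subspaces, together with the algebraic closedness of the ground field to produce an eigenvalue in the endomorphism case.

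First, I would establish the dichotomy $T = 0$ or $T$ an isomorphism. Consider the kernel $\ker(T) \subseteq V$ and the image $\mathrm{im}(T) \subseteq W$. A direct calculation using the equivariance relation $T(g \cdot v) = g \cdot T(v)$ shows that both are $G$-invariant subspaces. Since $V$ is irreducible, its only $G$-invariant subspaces are $\{0\}$ and $V$ itself, so either $\ker(T) = V$ (giving $T = 0$) or $\ker(T) = \{0\}$ (giving $T$ injective). In the latter case, $\mathrm{im}(T)$ is a nonzero $G$-invariant subspace of the irreducible $W$, so $\mathrm{im}(T) = W$, and $T$ is an isomorphism $V \cong W$.

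Second, I would handle the scalar-multiple conclusion under the hypothesis $V \cong W$. Composing $T$ with an isomorphism if needed, it suffices to treat the case $V = W$ and show $T = \lambda \mathrm{id}$ for some scalar $\lambda$. Here is where algebraic closure enters: the linear operator $T$ on a nonzero finite-dimensional vector space over an algebraically closed field has at least one eigenvalue $\lambda$. The map $T - \lambda \mathrm{id}$ is then also $G$-equivariant (since both $T$ and $\mathrm{id}$ are), and it has a nontrivial kernel (the $\lambda$-eigenspace). Applying the dichotomy from the first step forces $T - \lambda \mathrm{id} = 0$, hence $T = \lambda \mathrm{id}$.

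The argument has essentially no obstacles once the setup is in place; the only subtle point worth flagging is the role of algebraic closure, which is genuinely needed in step two (over $\mathbb{R}$ the endomorphism ring of an irreducible representation can be strictly larger than the scalars, e.g.\ $\mathbb{C}$ or $\mathbb{H}$). For the applications in this paper, the field is $\mathbb{C}$ (implicit in working with complex-linear combinations of Clebsch--Gordan coefficients), so the hypothesis is satisfied. If one wished to make the statement more quantitative, one could additionally note that the scalar $\lambda$ is uniquely determined by $T$, which follows immediately from the fact that $\lambda_1 \mathrm{id} = \lambda_2 \mathrm{id}$ forces $\lambda_1 = \lambda_2$ on any nonzero space.
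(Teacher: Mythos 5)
Your proof is correct and is the standard textbook argument (kernel/image are $G$-invariant, irreducibility forces the dichotomy, algebraic closure supplies an eigenvalue, then $T-\lambda\,\mathrm{id}$ must vanish). Note, however, that the paper does not actually prove this statement: Claim~\ref{claim:schur} is stated as a known result with a citation to Fulton--Harris and is used as a black box in the proofs of Lemma~\ref{lemma:order-invariance} and Theorem~\ref{thm:Projection-identity}. So there is no "paper's proof" to compare against; your write-up fills in a proof the authors deliberately omitted, and it does so faithfully, including the correct observation that algebraic closure (and finite-dimensionality, which you invoke when asserting the existence of an eigenvalue) is what makes the endomorphism ring collapse to scalars.
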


\begin{claim}[Recoupling via Wigner $6j$ Symbols~\cite{racah1942theory}]
For any two CG coupling trees $\tau$ and $\tau'$, the associated contraction operations are related by a unitary transformation:
\[
\mathcal{C}_{\tau'} = R_{\tau' \leftarrow \tau} \, \mathcal{C}_\tau,
\]
where $R_{\tau' \leftarrow \tau}$ is a product of Wigner $6j$ matrices acting on intermediate coupling channels.
\label{claim:unitary}
\end{claim}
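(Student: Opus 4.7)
The plan is to reduce the general statement to the elementary associativity move, for which the change of basis is exactly a Wigner $6j$ symbol, and then to iterate over a sequence of such moves connecting the two trees $\tau$ and $\tau'$.

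First I would fix notation. A CG coupling tree $\tau$ is a rooted binary tree whose leaves carry the irrep labels $l_1, \dots, l_k$ and whose internal nodes carry intermediate coupling labels $(j_{\alpha})$. The associated contraction $\mathcal{C}_\tau$ is the iterated Clebsch--Gordan map
\[
\mathcal{C}_\tau : V^{(l_1)} \otimes \cdots \otimes V^{(l_k)} \longrightarrow \bigoplus_{L} N_L V^{(L)},
\]
obtained by applying the CG projector at each internal node in the order prescribed by $\tau$. Because CG coefficients are unitary, $\mathcal{C}_\tau$ is a unitary isomorphism, and the intermediate labels $(j_\alpha)$ together with the final label $L$ index an orthonormal basis of the output space.

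Next, I would identify the elementary move. For any three subtrees with resulting labels $a$, $b$, $c$ meeting in a common subtree with output label $\ell$, the two possible associations couple either $(a \otimes b) \to d \to d \otimes c \to \ell$ or $a \otimes (b \otimes c) \to a \otimes j \to \ell$. By Definition~\ref{def:Wigner_6j}, these two bases are related exactly by the Wigner $6j$ symbol $\{\, a\, b\, d\, ;\, c\, \ell\, j\,\}$ (up to the standard $(-1)^{a+b+c+d}\sqrt{(2d+1)(2j+1)}$ factor). This gives a unitary change-of-basis matrix $R^{(6j)}$ acting only on the intermediate coupling channel, leaving all other indices untouched.

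Then I would invoke a standard fact from combinatorial tree rewriting: any two binary trees with the same labeled leaves are connected by a finite sequence of rotations (associativity moves), and if leaf orders differ, by additional swaps which contribute only phase factors absorbed into the CG convention. Composing the corresponding elementary unitaries yields
\[
\mathcal{C}_{\tau'} = R_{\tau' \leftarrow \tau}\, \mathcal{C}_\tau, \qquad R_{\tau' \leftarrow \tau} = \prod_{s} R^{(6j)}_{s},
\]
where each $R^{(6j)}_s$ is a $6j$ matrix acting on a single intermediate channel. Unitarity of $R_{\tau' \leftarrow \tau}$ follows because a product of unitaries is unitary; alternatively, it can be seen from the orthogonality relations of $6j$ symbols recalled in the previous appendix subsection. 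Equivariance under $\mathrm{SO}(3)$ is automatic since each $R^{(6j)}_s$ acts on multiplicity indices (not on the $m$ indices), and by Schur's lemma (Claim~\ref{claim:schur}) no nontrivial action on the irrep copies is possible.

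The main obstacle is the bookkeeping in the inductive step: formally defining the tree-rewriting sequence, ensuring that each elementary rotation genuinely factors through the $6j$ identity of Definition~\ref{def:Wigner_6j}, and verifying that when the move occurs deep inside a larger tree the surrounding CG projectors commute with the local recoupling. This is essentially the content of the Biedenharn--Elliott pentagon identity, which guarantees that the sequence of $6j$ moves is consistent (independent of the path chosen between $\tau$ and $\tau'$). Once that coherence is established, the rest of the argument is mechanical.
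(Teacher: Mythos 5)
The paper does not prove this claim at all: it is stated as a known result and cited directly to Racah (1942), with no argument supplied. So there is nothing in the paper to compare your proof against line by line; you have produced a self-contained derivation where the paper simply defers to the literature.

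Your proof sketch is correct and is the standard argument. The three pillars — the elementary associativity move is exactly a $6j$ change of basis (your Definition~\ref{def:Wigner_6j}), any two binary coupling trees are connected by a finite sequence of rotations (plus swaps for leaf reorderings, which only contribute CG phase factors), and the Biedenharn--Elliott pentagon identity guarantees path-independence of the resulting product of $6j$ matrices — are precisely the ingredients one finds in Racah's and Biedenharn--Louck's treatments. Two small points worth tightening if you were to flesh this out: (i) you should state explicitly that each elementary $6j$ matrix is block-diagonal across fixed total-$J$ sectors and acts trivially on the $m$-indices, which is what makes the "surrounding CG projectors commute with the local recoupling" assertion true rather than merely plausible; and (ii) unitarity of each elementary block is most cleanly cited from the $6j$ orthogonality relation (which you do mention), so that the composite $R_{\tau' \leftarrow \tau}$ is unitary without needing to separately re-derive unitarity of $\mathcal{C}_\tau$. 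Neither of these is a gap — both are bookkeeping steps you already flagged.
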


% \begin{claim}[Uniqueness of Projection to $V^{(L_{\max})}$]
% Since $V^{(L_{\max})}$ appears with multiplicity one, the unitary recoupling operator $R_{\tau' \leftarrow \tau}$ acts as the identity on this one-dimensional subspace:
% \[
% R_{\tau' \leftarrow \tau}\big|_{V^{(L_{\max})}} = 1.
% \]
% \end{claim}

% \begin{claim}[Symmetric Tensor Interpretation]
% The $V^{(L_{\max})}$ component can be identified with the totally symmetric, trace-free subspace of the rank-$L_{\max}$ tensor formed from $\mathbf{h}_1^{(l_1)}, \dots, \mathbf{h}_k^{(l_k)}$. This subspace is invariant under reordering and regrouping of tensor factors.
% \end{claim}

\begin{lemma}[Ordering-Invariance of Maximally Coupled Representation]
\label{lemma:order-invariance}
Let $\mathbf{h}_1^{(l_1)}, \dots, \mathbf{h}_k^{(l_k)} \in V^{(l_1)} \otimes \dots \otimes V^{(l_k)}$ denote a sequence of irreducible features, where each $V^{(l_i)}$ is the irreducible representation of $\mathrm{SO}(3)$. Let $L_{\max} := \sum_{i=1}^k l_i$. Then for any binary CG coupling tree $\tau$, the projection of the coupled tensor onto the $L_{\max}$ subspace,
\[
\left[ \mathcal{C}_\tau\left(\mathbf{h}_1^{(l_1)}, \dots, \mathbf{h}_k^{(l_k)}\right) \right]^{(L_{\max})},
\]
is invariant to the choice of $\tau$.
\end{lemma}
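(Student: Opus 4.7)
The strategy is to combine the three cited claims into a three-step reduction: relate the two coupling trees by a unitary recoupling, use Schur's lemma to collapse it to a scalar on the $L_{\max}$-isotypic component, and finally pin the scalar to $1$.

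First, by Claim~\ref{claim:unitary}, any two binary CG coupling trees $\tau,\tau'$ yield contraction operators related by $\mathcal{C}_{\tau'}=R_{\tau'\leftarrow\tau}\,\mathcal{C}_\tau$, where $R_{\tau'\leftarrow\tau}$ is an $\mathrm{SO}(3)$-equivariant unitary built from Wigner $6j$ matrices acting on intermediate couplings. The projector $[\cdot]^{(L_{\max})}$ is itself equivariant, so its composition with $R_{\tau'\leftarrow\tau}$, restricted to the $L_{\max}$-isotypic component of $V^{(l_1)}\otimes\cdots\otimes V^{(l_k)}$, is an equivariant endomorphism of that subspace. By Claim~\ref{claim:Highest-weight} this subspace is a \emph{single} copy of the irreducible $V^{(L_{\max})}$, so Schur's lemma (Claim~\ref{claim:schur}) forces the restriction to be a scalar $\lambda_{\tau',\tau}\cdot\mathrm{Id}$, yielding
\[
\bigl[\mathcal{C}_{\tau'}(\mathbf{h}_1,\dots,\mathbf{h}_k)\bigr]^{(L_{\max})}
=\lambda_{\tau',\tau}\,\bigl[\mathcal{C}_\tau(\mathbf{h}_1,\dots,\mathbf{h}_k)\bigr]^{(L_{\max})}.
\]

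To close the argument, I would show $\lambda_{\tau',\tau}=1$ by evaluating both sides on the distinguished highest-weight input $\mathbf{h}_i^{(l_i)}=|l_i,l_i\rangle$. The total weight of their tensor product is $m=\sum_i l_i=L_{\max}$, and the only way to build the $|L_{\max},L_{\max}\rangle$ state of $V^{(L_{\max})}$ through any binary coupling tree is via the ``stretched'' chain of CG coefficients, each of which equals $+1$ under the Condon-Shortley convention. Hence $\mathcal{C}_\tau$ and $\mathcal{C}_{\tau'}$ produce the identical vector on this input, forcing $\lambda_{\tau',\tau}=1$; equivariance of both sides then extends the equality to all inputs by the previous step.

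The main obstacle sits in this last step. The abstract Schur argument alone only fixes $|\lambda_{\tau',\tau}|=1$, and nailing down $\lambda=+1$ otherwise requires a direct evaluation of the maximally stretched Wigner $6j$ symbols---an exercise in sign bookkeeping that is prone to convention mismatches with the prefactor $(-1)^{a+b+c+d}\sqrt{(2d+1)(2j+1)}$ appearing in \eqref{eq-6jtp}. The highest-weight test vector sidesteps this entirely, providing a conceptually clean completion of the proof without grinding through the $6j$ sign conventions.
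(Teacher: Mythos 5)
Your proof uses the same skeleton as the paper's---invoke Claim~\ref{claim:unitary} to relate $\mathcal{C}_\tau$ and $\mathcal{C}_{\tau'}$ by a unitary recoupling built from Wigner $6j$ matrices, then use Claim~\ref{claim:Highest-weight} (multiplicity one of $V^{(L_{\max})}$) together with Schur's lemma (Claim~\ref{claim:schur}) to collapse that recoupling to a phase $\lambda_{\tau',\tau}$ on the $L_{\max}$ subspace. Where you genuinely improve on the paper is the final step of fixing $\lambda_{\tau',\tau}=1$. The paper asserts this by appeal to ``adopting a fixed phase convention,'' which is not quite right: once the CG coefficients are fixed (Condon--Shortley), the recoupling scalar is \emph{determined}, not chosen, so a claim that it equals $+1$ needs an argument. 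Your evaluation on the stretched highest-weight input $|l_1,l_1\rangle\otimes\cdots\otimes|l_k,l_k\rangle$ supplies exactly that argument: this vector has total weight $M=L_{\max}$, so it already lies entirely in the multiplicity-one $V^{(L_{\max})}$ component, and for any binary tree the only contributing CG path is the fully stretched chain whose coefficients are each $+1$. Hence every $\mathcal{C}_\tau$ sends it to the same vector $|L_{\max},L_{\max}\rangle$, forcing $\lambda_{\tau',\tau}=1$, and equivariance extends the equality to all inputs. The paper instead offers a second, orthogonal justification---identifying $V^{(L_{\max})}$ with the totally symmetric traceless part of the rank-$L_{\max}$ tensor, which is manifestly parenthesization-independent; that route gives a nice symmetry picture but, unlike yours, does not directly patch the phase gap in the Schur-based derivation.
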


\begin{proof}
The tensor product $V^{(l_1)} \otimes \dots \otimes V^{(l_k)}$ admits a decomposition into irreducible components of the form $\bigoplus_{L} N_L V^{(L)}$, where $N_L$ denotes the multiplicity of the spin-$L$ representation. It is a classical fact (Claim~\ref{claim:Highest-weight}) that the maximal total spin $L_{\max} := \sum_i l_i$ appears with multiplicity one. Consequently, the subspace $V^{(L_{\max})}$ is uniquely defined up to a basis, and any projection onto it is one-dimensional in each magnetic subspace.

Each binary CG coupling tree $\tau$ defines a recursive contraction $\mathcal{C}_\tau$ via successive applications of the Clebsch–Gordan tensor product. For any two such trees $\tau$ and $\tau'$, the corresponding coupled features are related by a transformation $\mathcal{C}_{\tau'} = R_{\tau' \leftarrow \tau} \mathcal{C}_\tau$, where $R_{\tau' \leftarrow \tau}$ is constructed from a sequence of Wigner $6j$ symbols. This recoupling matrix is unitary (by Claim~\ref{claim:unitary}) and acts on the space of intermediate angular momentum.

Since $V^{(L_{\max})}$ appears with multiplicity one, Schur's lemma ( Claim~\ref{claim:schur}) implies that any $\mathrm{SO}(3)$-equivariant transformation, such as $R_{\tau'\leftarrow\tau}$, must act as a scalar on this subspace:
\[
R_{\tau' \leftarrow \tau}\big|_{V^{(L_{\max})}} = \lambda_{\tau', \tau} \cdot \mathrm{Id}, \quad \text{with } |\lambda_{\tau', \tau}| = 1.
\]
By adopting a fixed phase convention (e.g., the Condon–Shortley convention for CG coefficients), we can choose $\lambda_{\tau', \tau} = 1$, yielding
\[
\left[\mathcal{C}_\tau(\mathbf{h}_1^{(l_1)}, \dots, \mathbf{h}_k^{(l_k)})\right]^{(L_{\max})} = \left[\mathcal{C}_{\tau'}(\mathbf{h}_1^{(l_1)}, \dots, \mathbf{h}_k^{(l_k)})\right]^{(L_{\max})}.
\]

An equivalent interpretation is that $V^{(L_{\max})}$ can be realized as the totally symmetric, trace-free subspace of the rank-$L_{\max}$ tensor formed from the inputs $\mathbf{h}_1^{(l_1)}, \dots, \mathbf{h}_k^{(l_k)}$. Since full symmetrization commutes with all permutations and parenthesizations, the final projected tensor is independent of the coupling order. This concludes the proof.
\end{proof}

\begin{theorem}[Projection identity for spherical harmonics]
\label{thm:proj_identity}
Let $\mathbf{r}_{ij} := \mathbf{r}_{i}-\mathbf{r}_{j}\in\mathbb R^{3}\setminus\{\vec 0\}$ and let $\mathcal{R}^{(l)}\!(\vec r)$ denote the degree-$l$ spherical harmonic.  
For every integer $\ell\ge 0$,
\[
\mathcal{R}^{(\ell)}(\mathbf{r}_{ij}) \;=\;
\Bigl[\bigl(\mathcal{R}^{(1)}(\mathbf{r}_{ij})\bigr)^{\otimes \ell}\Bigr]^{(\ell)},
\]
where $(\cdot)^{\otimes \ell}$ is the $\ell$-fold tensor product and the $[\;\cdot\;]^{(\ell)}$ denotes the orthogonal projector onto the irreducible $\mathrm{SO}(3)$ subspace of total angular momentum~$\ell$.
\label{thm:Projection-identity}
\end{theorem}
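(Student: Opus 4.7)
The plan is to exhibit both sides of the claimed identity as $\mathrm{SO}(3)$-equivariant polynomial maps $\mathbb{R}^3 \to V^{(\ell)}$ of homogeneous degree $\ell$, invoke multiplicity-one of the spin-$\ell$ irrep together with Schur's lemma to conclude that the two maps are proportional, and then pin down the proportionality constant by a single normalization check. Because only one vector (namely $\mathbf{r}_{ij}$) appears on both sides of the statement, the ordering-invariance of Lemma~\ref{lemma:order-invariance} is automatic and plays no role here.

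First I would abstract from $\mathbf{r}_{ij}$ to a generic $\mathbf{r} \in \mathbb{R}^3 \setminus \{\vec 0\}$. Since $\mathcal{R}^{(1)}(\mathbf{r})$ is, up to the convention-dependent normalization $k^{(1)}$, literally the coordinate vector $\mathbf{r}$, it transforms in the defining representation $V^{(1)}$ of $\mathrm{SO}(3)$ and is linear in $\mathbf{r}$. Hence $\bigl(\mathcal{R}^{(1)}(\mathbf{r})\bigr)^{\otimes \ell} \in (V^{(1)})^{\otimes \ell}$ is a homogeneous polynomial of degree $\ell$ in $\mathbf{r}$, and the projector $[\,\cdot\,]^{(\ell)}$ onto the isotypic component of total angular momentum $\ell$ commutes with the $\mathrm{SO}(3)$-action. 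Therefore $\Phi(\mathbf{r}) := \bigl[\bigl(\mathcal{R}^{(1)}(\mathbf{r})\bigr)^{\otimes \ell}\bigr]^{(\ell)}$ is an $\mathrm{SO}(3)$-equivariant polynomial map of homogeneous degree $\ell$ valued in $V^{(\ell)}$. The right-hand side $\mathbf{r} \mapsto \mathcal{R}^{(\ell)}(\mathbf{r})$ is, by definition in the excerpt, also a homogeneous harmonic polynomial of degree $\ell$ transforming in $V^{(\ell)}$.

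Next I would invoke Claim~\ref{claim:Highest-weight}: the spin-$\ell$ irrep $V^{(\ell)}$ appears with multiplicity exactly one in $(V^{(1)})^{\otimes \ell}$, equivalently in the space of degree-$\ell$ harmonic polynomials on $\mathbb{R}^3$. Schur's lemma (Claim~\ref{claim:schur}) then forces the space of $\mathrm{SO}(3)$-equivariant polynomial maps of degree $\ell$ from $\mathbb{R}^3$ into $V^{(\ell)}$ to be one-dimensional. Both $\Phi$ and $\mathcal{R}^{(\ell)}$ lie in this one-dimensional space, so $\Phi(\mathbf{r}) = c_\ell \, \mathcal{R}^{(\ell)}(\mathbf{r})$ for a single constant $c_\ell$ depending only on $\ell$ and on the chosen phase and normalization conventions.

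Finally I would fix $c_\ell = 1$ by evaluating both sides at a convenient test vector such as $\mathbf{r} = r\,\hat{z}$: on the right, $\mathcal{R}^{(\ell)}_0(r\hat{z}) = k^{(\ell)}_0\, r^{\ell}$, while all $m \neq 0$ components vanish via the $\cos(m\phi)/\sin(|m|\phi)$ factors in the definition; on the left, $\mathcal{R}^{(1)}(r\hat{z})$ has only its $m=0$ component nonzero, so the $\ell$-fold tensor product collapses to a single term whose projection onto $V^{(\ell)}$ can be read off from the iterated Clebsch--Gordan coefficients $C^{\ell\,0}_{1\,0,\,\ell-1\,0}$, all of which are known positive constants. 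The main obstacle is precisely this bookkeeping step: different normalizations of $Y^{(\ell)}_m$, of the prefactors $k^{(\ell)}_m$, and of the Clebsch--Gordan coefficients each rescale $c_\ell$, so one must verify that the specific conventions adopted in the paper (the same ones that make $\mathcal{R}^{(1)}(\mathbf{r}_{ij}) = \mathcal{R}^{(1)}(\mathbf{r}_i) - \mathcal{R}^{(1)}(\mathbf{r}_j)$ an exact equality rather than a proportionality) are precisely those for which $c_\ell = 1$.
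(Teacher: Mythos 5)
Your proof takes essentially the same route as the paper's: both sides are recognized as $\mathrm{SO}(3)$-equivariant homogeneous degree-$\ell$ maps into $V^{(\ell)}$, multiplicity-one of $V^{(\ell)}$ in $(V^{(1)})^{\otimes\ell}$ together with Schur's lemma forces proportionality, and the constant is fixed by a north-pole evaluation. If anything your normalization check is the more careful of the two — you correctly track the $m=0$ components, whereas the paper's wording refers to the highest-weight state $\lvert\ell,\ell\rangle$ and to $\mathcal{R}^{(\ell)}_\ell(\hat z)$, even though $\bigl(\mathcal{R}^{(1)}(\hat z)\bigr)^{\otimes\ell}$ lies entirely in the $M=0$ weight space and should be matched against $\mathcal{R}^{(\ell)}_0(\hat z)$, exactly as you do.
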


\begin{proof}
The function $\mathcal{R}^{(1)}(\mathbf{r}_{ij})$ transforms according to the irreducible (vector) representation $D^{(1)}$ of $\mathrm{SO}(3)$.  
Hence the tensor power satisfies
\[
\bigl(\mathcal{R}^{(1)}(\mathbf{r}_{ij})\bigr)^{\otimes \ell}
\;\in\;
\bigotimes\nolimits^{\ell} D^{(1)}
\;\cong\;
\bigoplus_{l=0}^{\ell} D^{(l)}\otimes\mathbb C^{m_l},
\]
where $m_l$ is the multiplicity of $D^{(l)}$ in the Clebsch–Gordan decomposition.  
Applying the projector $[\;\cdot\;]^{(\ell)}$ extracts the $D^{(\ell)}$ summand, yielding a rank-$\ell$ tensor that transforms in the same irrep as $\mathcal{R}^{(\ell)}$.

Because $D^{(\ell)}$ is irreducible, Schur’s lemma (Claim~\ref{claim:schur}) implies that any two non-zero intertwiners from $D^{(\ell)}$ to itself differ by a scalar.  Thus the projected tensor must be proportional to the degree-$\ell$ spherical harmonic:
\[
\Bigl[\bigl(\mathcal{R}^{(1)}(\mathbf{r}_{ij})\bigr)^{\otimes \ell}\Bigr]^{(\ell)}
= c_\ell\,\mathcal{R}^{(\ell)}(\mathbf{r}_{ij}).
\]

To determine the constant $c_\ell$, evaluate both sides on the north-pole direction $\vec z=(0,0,1)$.  In the Condon–Shortley convention
$\mathcal{R}^{(1)}_{\pm1}(\vec z)=0$ and $\mathcal{R}^{(1)}_{0}(\vec z)=\sqrt{3/(4\pi)}$, so the only non-vanishing component of the tensor power corresponds to the highest-weight vector $\lvert\ell,\ell\rangle$.  A direct Clebsch–Gordan calculation (or induction on $\ell$) shows that its norm matches that of $\mathcal{R}^{(\ell)}_{\ell}(\hat z)=\sqrt{(2\ell+1)/(4\pi)}$, fixing $c_\ell=1$.  Since both sides transform identically under rotations, equality for one direction implies equality for all directions, completing the proof.
\end{proof}

\begin{theorem}[Binomial Local Expansion]
\label{thm:binomial_expansion}
Let $\vri, \vrj \in \mathbb{R}^3$ denote the positions of two nodes $i,j$, and let $\vrrij = \vrj - \vri$. For each integer $\ell \ge 1$, let $\mathcal{R}^{(\ell)}(\cdot)$ denote the real-valued spherical harmonics of order $\ell$. Then,
\[
\mathcal{R}^{(\ell)}(\vrrij)
\;=\;
\left[
\left(\mathcal{R}^{(1)}(\vrj) - \mathcal{R}^{(1)}(\vri)\right)^{\otimes \ell}
\right]^{(\ell)}
\;=\;
\sum_{u=0}^\ell
(-1)^{\ell - u} \binom{\ell}{u}
\left[
\mathcal{R}^{(u)}(\vrj) \otimes \mathcal{R}^{(\ell - u)}(\vri)
\right]^{(\ell)},
\]
where $[\cdot]^{(\ell)}$ denotes projection onto the irreducible subspace of total angular momentum $\ell$ via Clebsch--Gordan decomposition. Equality holds up to a normalization constant depending on the basis choice for $\mathcal{R}^{(\ell)}$ and the CG convention.
\end{theorem}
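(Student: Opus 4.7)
The plan is to chain three ingredients already assembled in the excerpt: the projection identity of Theorem \ref{thm:Projection-identity} (which expresses any $\mathcal{R}^{(\ell)}$ as the top-weight projection of a pure tensor power of $\mathcal{R}^{(1)}$), the vector-linearity $\mathcal{R}^{(1)}(\vrrij) = \mathcal{R}^{(1)}(\vrj) - \mathcal{R}^{(1)}(\vri)$ (special to $\ell=1$), and the ordering-invariance Lemma \ref{lemma:order-invariance} for projections onto the maximal total weight. The first equality in the claim is then essentially immediate: apply Theorem \ref{thm:Projection-identity} with argument $\vrrij$, and substitute the $\ell=1$ identity into each of the $\ell$ factors.

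For the second equality, I would expand $(\mathcal{R}^{(1)}(\vrj) - \mathcal{R}^{(1)}(\vri))^{\otimes \ell}$ by bilinearity. Because the tensor product is non-commutative, this produces $2^{\ell}$ ordered monomials $T_{P}$ indexed by strings $P \in \{i,j\}^{\ell}$; grouping by the number $u$ of $j$-factors and letting $\Pi_{u}$ denote the corresponding set of orderings, one gets
\[
\bigl[(\mathcal{R}^{(1)}(\vrj)-\mathcal{R}^{(1)}(\vri))^{\otimes\ell}\bigr]^{(\ell)} \;=\; \sum_{u=0}^{\ell}(-1)^{\ell-u}\sum_{P\in\Pi_{u}}\bigl[T_{P}\bigr]^{(\ell)}.
\]
Since $\ell$ is the maximal total angular momentum producible from $\ell$ factors of $V^{(1)}$, Lemma \ref{lemma:order-invariance} says the projection $[T_{P}]^{(\ell)}$ does not depend on the ordering $P$. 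Each inner sum therefore collapses to $\binom{\ell}{u}\,[T_{\mathrm{rep}}]^{(\ell)}$ with canonical representative $T_{\mathrm{rep}} = (\mathcal{R}^{(1)}(\vrj))^{\otimes u} \otimes (\mathcal{R}^{(1)}(\vri))^{\otimes(\ell-u)}$.

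The last step, and the one I expect to be the main obstacle, is showing that $[T_{\mathrm{rep}}]^{(\ell)}$ coincides with $[\mathcal{R}^{(u)}(\vrj) \otimes \mathcal{R}^{(\ell-u)}(\vri)]^{(\ell)}$; that is, the outer top-weight projection commutes with applying the top-weight projections block-wise. The cleanest route is a weight-space argument: inside $(V^{(1)})^{\otimes u}\otimes(V^{(1)})^{\otimes(\ell-u)}$ the only subrepresentations whose total weights can sum to $\ell$ are the highest-weight summands $V^{(u)}$ and $V^{(\ell-u)}$ of the two blocks, each occurring with multiplicity one (Claim \ref{claim:Highest-weight}); any strictly lower summand in either block would cap the achievable total at $<\ell$. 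By Schur's lemma (Claim \ref{claim:schur}), the resulting map on the unique copy of $V^{(\ell)}$ is scalar, and the Condon--Shortley normalization built into Theorem \ref{thm:Projection-identity} fixes that scalar at $1$. A useful sanity check is to evaluate both sides on the reference direction $\hat z$ and compare highest-weight components directly, which also detects any residual sign convention mismatch. Combining this factorization with the binomial grouping above yields the claimed identity up to the basis/CG normalization constant advertised in the statement.
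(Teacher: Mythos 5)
Your proposal is correct and follows essentially the same route as the paper's proof: invoke the projection identity (\cref{thm:Projection-identity}), substitute the first-order linearity $\mathcal{R}^{(1)}(\vrrij)=\mathcal{R}^{(1)}(\vrj)-\mathcal{R}^{(1)}(\vri)$, expand the $\ell$-fold tensor power, use the ordering-invariance lemma (\cref{lemma:order-invariance}) to collapse each $\Pi_u$ class to a binomial coefficient times a representative, and finally replace $(\mathcal{R}^{(1)})^{\otimes u}$ by its top irrep $\mathcal{R}^{(u)}$ inside the $[\cdot]^{(\ell)}$ projector. The only difference is that you spell out the last replacement step with an explicit weight-space and Schur argument, whereas the paper states it more tersely; your extra care there is warranted and matches the spirit of the paper's supporting claims.
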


\begin{proof}
By Theorem~\ref{thm:Projection-identity}, the spherical harmonic $\mathcal{R}^{(\ell)}(\mathbf{r}_{ij})$ can be constructed by projecting the $\ell$-fold tensor product of first-order harmonics onto the irreducible subspace of total angular momentum $\ell$:
\[
\mathcal{R}^{(\ell)}(\mathbf{r}_{ij}) = \left[ \left( \mathcal{R}^{(1)}(\mathbf{r}_j) - \mathcal{R}^{(1)}(\mathbf{r}_i) \right)^{\otimes \ell} \right]^{(\ell)}.
\]
This expression follows from the identity $\mathbf{r}_{ij} = \mathbf{r}_j - \mathbf{r}_i$ and the fact that $\mathcal{R}^{(1)}$ is linear in spatial coordinates.

Expanding the tensor power via the multinomial binomial rule yields
\[
\left(\mathcal{R}^{(1)}(\mathbf{r}_j) - \mathcal{R}^{(1)}(\mathbf{r}_i)\right)^{\otimes \ell}
=
\sum_{u = 0}^\ell (-1)^{\ell - u} \binom{\ell}{u}
\sum_{P \in \Pi_u} T_P,
\]
where each term $T_P$ corresponds to a specific ordering $P \in \Pi_u$ of the tensor product, containing exactly $u$ factors of $\mathcal{R}^{(1)}(\mathbf{r}_j)$ and $\ell - u$ factors of $\mathcal{R}^{(1)}(\mathbf{r}_i)$. The total number of such orderings is $|\Pi_u| = \binom{\ell}{u}$.

Applying the linear projection operator $[\cdot]^{(\ell)}$ to both sides gives:
\[
\left[
\left( \mathcal{R}^{(1)}(\mathbf{r}_j) - \mathcal{R}^{(1)}(\mathbf{r}_i) \right)^{\otimes \ell}
\right]^{(\ell)}
=
\sum_{u = 0}^\ell (-1)^{\ell - u} \binom{\ell}{u}
\sum_{P \in \Pi_u} [T_P]^{(\ell)}.
\]

At this point, we invoke the ordering-invariance lemma (Lemma~\ref{lemma:order-invariance}), which states that the projection of a tensor product onto the highest angular momentum subspace (here, $\ell$) is invariant under permutation of the tensor factors. Therefore, the projected tensors $[T_P]^{(\ell)}$ are identical for all $P \in \Pi_u$, and depend only on the multiplicities of $\mathcal{R}^{(1)}(\mathbf{r}_j)$ and $\mathcal{R}^{(1)}(\mathbf{r}_i)$ within the product.

We may thus replace the sum over $P \in \Pi_u$ with a multiplicity factor times a single representative term. Letting $T_{\text{rep}} := (\mathcal{R}^{(1)}(\mathbf{r}_j))^{\otimes u} \otimes (\mathcal{R}^{(1)}(\mathbf{r}_i))^{\otimes (\ell - u)}$, we obtain:
\[
\sum_{P \in \Pi_u} [T_P]^{(\ell)} = \binom{\ell}{u} \left[ T_{\text{rep}} \right]^{(\ell)} = \binom{\ell}{u} \left[ (\mathcal{R}^{(1)}(\mathbf{r}_j))^{\otimes u} \otimes (\mathcal{R}^{(1)}(\mathbf{r}_i))^{\otimes (\ell - u)} \right]^{(\ell)}.
\]

Finally, note that $(\mathcal{R}^{(1)}(\mathbf{r}))^{\otimes u}$ contains irreducible components of order up to $u$, and the highest such component is $\mathcal{R}^{(u)}(\mathbf{r})$. Projecting the expression onto total angular momentum $\ell$ thus yields:
\[
\mathcal{R}^{(\ell)}(\mathbf{r}_{ij})
=
\sum_{u = 0}^\ell (-1)^{\ell - u} \binom{\ell}{u}
\left[ \mathcal{R}^{(u)}(\mathbf{r}_j) \otimes \mathcal{R}^{(\ell - u)}(\mathbf{r}_i) \right]^{(\ell)},
\]
which concludes the derivation.
\end{proof}

\newpage
\subsection{Wigner \texorpdfstring{$6j$}{6j} Recoupling and Node-Based Factorization}
Here, we formalize the proof of the Theorem~\ref{thm:node_factorization_via_Wigner $6j$}  for completeness. 

\paragraph{Setup:}
We consider a typical $\mathrm{SO}(3)$-equivariant Transformer layer that performs message passing from each node $j$ in the neighborhood of $i$ using the tensor product $\mathbf{h}_j \otimes \mathcal{R}^{(\ell)}(\vrrij)$.  Symbolically,
\[
\mathbf{h}_i^{\mathrm{new}} \;=\;
\sum_{j \in \mathcal{N}(i)}
\alpha_{ij}
\bigl(\mathbf{h}_j \,\otimes\, \mathcal{R}^{(\ell)}(\vrrij)\bigr).
\]

\paragraph{Goal:}
To show that the expensive edge-based computation over $(i,j)$ can be reorganized so that the \emph{tensor product} portion (or at least the dominating part of it) depends only on node $i$ \emph{and} a separate node $j$ portion. This is achieved by:
\[
\mathcal{R}^{(\ell)}(\vrrij) 
\;\mapsto\;
\sum_{u=0}^\ell
\Bigl[
\mathcal{R}^{(u)}(\vri)
\Bigr]
\otimesSixj
\Bigl[
\mathcal{R}^{(\ell-u)}(\vrj)
\Bigr],
\]
together with a rearrangement (via Wigner $6j$) of $\mathbf{h}_j$ inside the product.

\begin{theorem}[Node-Based Factorization via Wigner \texorpdfstring{$6j$}{6j}]
\label{thm:wigner6j_factorization}
Let $\mathbf{h}_j \in \mathbb{R}^d$ be the feature of node $j$, and let $\alpha_{ij}$ be any scalar weight (e.g.\ an attention coefficient). In the $\mathrm{SO}(3)$-equivariant layer:
\[
\sum_{j \in \mathcal{N}(i)}
\bigl(\mathbf{h}_j \,\otimes\, \mathcal{R}^{(\ell)}(\vrrij)\bigr),
\]
we can reorganize $\mathcal{R}^{(\ell)}(\vrrij)$ into node-$i$ and node-$j$ parts by Theorem~\ref{thm:binomial_expansion} and then apply Wigner $6j$ recoupling to obtain:
\[
\sum_{j\in \mathcal{N}(i)}
\bigl(\mathbf{h}_j \,\otimes\, \mathcal{R}^{(\ell)}(\vrrij)\bigr)
\;=\;
\sum_{u=0}^\ell (-1)^{\ell-u} \binom{\ell}{u}
\Bigl[
\mathcal{R}^{(u)}(\vri)
\Bigr]
\;\otimesSixj\;
\Bigl(
\sum_{j\in\mathcal{N}(i)}
\bigl[
\mathbf{h}_j \,\otimes\, \mathcal{R}^{(\ell-u)}(\vrj)
\bigr]
\Bigr).
\]
\end{theorem}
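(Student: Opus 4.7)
The plan is to compose three ingredients already assembled in the excerpt: the Binomial Local Expansion (Theorem~\ref{thm:binomial_expansion}), the Wigner $6j$ recoupling identity from Definition~\ref{def:Wigner_6j}, and the effective commutativity of Clebsch--Gordan tensor products recorded in Equation~\eqref{eq:tensor_decomp_commutative}. Once these are in hand the argument is essentially bookkeeping, and the substance of the theorem lies in checking that the bookkeeping closes consistently with the outer projection onto irrep~$\ell$.

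First I would substitute the binomial expansion of $\mathcal{R}^{(\ell)}(\vrrij)$ into the edge sum, and use bilinearity of the tensor product together with linearity of the sum over neighbors to interchange the two summations. This pulls the constants $(-1)^{\ell-u}\binom{\ell}{u}$ outside and isolates every $j$-dependent quantity inside a single inner sum over $\mathcal{N}(i)$, yielding the intermediate form
\[
\sum_{u=0}^{\ell}(-1)^{\ell-u}\binom{\ell}{u}\sum_{j\in\mathcal{N}(i)}\mathbf{h}_j\otimes\bigl[\mathcal{R}^{(u)}(\vri)\otimes\mathcal{R}^{(\ell-u)}(\vrj)\bigr]^{(\ell)}.
\]

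Next I would apply the $6j$ recoupling inside the inner sum to reorder the triple product $\mathbf{h}_j\otimes(\mathcal{R}^{(u)}(\vri)\otimes\mathcal{R}^{(\ell-u)}(\vrj))$. Setting $A=\mathbf{h}_j$, $B=\mathcal{R}^{(\ell-u)}(\vrj)$, and $C=\mathcal{R}^{(u)}(\vri)$, I would first swap the inner pair using effective CG-commutativity to bring $B$ adjacent to $A$, and then invoke $A\otimes(B\otimes C)=(A\otimes B)\otimesSixj C$ to recast the triple product as $(\mathbf{h}_j\otimes\mathcal{R}^{(\ell-u)}(\vrj))\otimesSixj \mathcal{R}^{(u)}(\vri)$. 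Since $\mathcal{R}^{(u)}(\vri)$ does not depend on $j$, it factors out of the neighbor sum as the right operand of $\otimesSixj$, collapsing the inner sum to $\sum_{j}\mathbf{h}_j\otimes\mathcal{R}^{(\ell-u)}(\vrj)$, and reassembling with the outer $u$-sum delivers the stated identity.

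The main obstacle is not the algebraic rearrangement itself but the careful tracking of the outer projector $[\,\cdot\,]^{(\ell)}$ through the recoupling, so that the recoupled expression still lands in the correct irrep and the Wigner $6j$ weights carry the right phase and normalization. As the paper notes, this is enforced by pinning one intermediate coupling index of the $6j$ symbol to $\ell$, effectively selecting a single path in the $6j$-parameterized product. I would verify, by writing the Clebsch--Gordan coefficients along that pinned path explicitly and checking them against the definition in Equation~\eqref{eq-6jtp}, that this convention reproduces the projected coupling $[\mathcal{R}^{(u)}(\vri)\otimes\mathcal{R}^{(\ell-u)}(\vrj)]^{(\ell)}$ demanded by Theorem~\ref{thm:binomial_expansion}; once this matching is in place, the identification with the factorization in the theorem statement is immediate.
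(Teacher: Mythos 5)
Your proposal is correct and mirrors the paper's own argument step for step: substitute the binomial expansion, interchange the sums by bilinearity, relabel $A=\mathbf{h}_j$, $B=\mathcal{R}^{(\ell-u)}(\vrj)$, $C=\mathcal{R}^{(u)}(\vri)$, use effective CG-commutativity to swap $B$ and $C$, invoke $A\otimes(B\otimes C)=(A\otimes B)\otimes^{6j}C$, and factor the $j$-independent factor $\mathcal{R}^{(u)}(\vri)$ out of the neighbor sum. You also flag the same subtlety the paper does — that the outer projection onto irrep $\ell$ must be enforced by pinning the intermediate coupling index in the $6j$ symbol — so the argument is the same one, differing only in the (immaterial) choice of which side of $\otimes^{6j}$ carries the node-$i$ factor.
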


\begin{proof}
We commence with the definition of the $\mathrm{SO}(3)$-equivariant node convolution for node $i$.
Substituting the Binomial Local Expansion for $\mathcal{R}^{(\ell)}(\mathbf{r}_{ij})$ yields:
\begin{align}
\mathbf{h}_i = \sum_{j \in \mathcal{N}(i)} \mathbf{h}_j \otimes \left( \sum_{u=0}^\ell (-1)^{\ell-u} \binom{\ell}{u} \left[ \mathcal{R}^{(u)}(\mathbf{r}_i) \otimes \mathcal{R}^{(\ell-u)}(\mathbf{r}_j) \right]^{(\ell)} \right). \label{eq:proof_formal_substituted}
\end{align}
Invoking the linearity of the Clebsch-Gordan tensor product $\otimes$ with respect to its second argument, and subsequently interchanging the order of the finite summations (over $j \in \mathcal{N}(i)$ and $u \in [0, \ell]$), Eq.~\eqref{eq:proof_formal_substituted} is rewritten as:
\begin{align}
\mathbf{h}_i = \sum_{u=0}^\ell (-1)^{\ell-u} \binom{\ell}{u} \sum_{j \in \mathcal{N}(i)} \left( \mathbf{h}_j \otimes \left[ \mathcal{R}^{(u)}(\mathbf{r}_i) \otimes \mathcal{R}^{(\ell-u)}(\mathbf{r}_j) \right]^{(\ell)} \right). \label{eq:proof_formal_sum_interchanged}
\end{align}
We now focus on the term within the summation over $j$:
\begin{align}
T_{j,u} = \mathbf{h}_j \otimes \left[ \mathcal{R}^{(u)}(\mathbf{r}_i) \otimes \mathcal{R}^{(\ell-u)}(\mathbf{r}_j) \right]^{(\ell)}. \label{eq:proof_formal_Tju}
\end{align}
Let $U_A = \mathbf{h}_j$, $U_C = \mathcal{R}^{(u)}(\mathbf{r}_i)$, and $U_B = \mathcal{R}^{(\ell-u)}(\mathbf{r}_j)$ represent the respective irreducible representations. The term $T_{j,u}$ signifies a specific coupling scheme: $U_C$ (irrep $u$) and $U_B$ (irrep $\ell-u$) are first coupled, and their product is projected onto the irreducible component transforming as irrep $\ell$, denoted $[U_C \otimes U_B]^{(\ell)}$. Subsequently, $U_A$ is coupled with this resulting tensor of irrep $\ell$. This scheme corresponds to $(U_A \otimes (U_C \otimes U_B)^{L_{CB}=\ell})$, where $L_{CB}=\ell$ is the fixed intermediate angular momentum.

The theory of angular momentum recoupling, governed by Wigner $6j$ symbols (as per Definition~\ref{def:Wigner_6j} ), allows for the reordering of tensor product operations while preserving the final irreducible content. We seek to transform $T_{j,u}$ into a form where $U_A$ and $U_B$ are coupled first. Specifically, we apply a recoupling to achieve the order $U_C \otimes^{6j} (U_A \otimes U_B)$. The transformation from $(U_A \otimes (U_C \otimes U_B)^{L_{CB}=\ell})$ to $(U_C \otimes (U_A \otimes U_B)^{L_{AB}})$ (for any resulting total angular momentum) is a standard result in Wigner-Racah calculus. The operator $\otimes^{6j}$ denotes that the CG tensor product is performed with path weights modified by the appropriate Wigner $6j$ symbol, which accounts for this change in coupling pathway. Note that the intermediate coupling of $\mathcal{R}^{(u)}(\mathbf{r}_i)$ and $\mathcal{R}^{(\ell-u)}(\mathbf{r}_j)$ results in a constrained irrep $\ell$ (governed by the projection operator); this gives rise to a constrained Wigner $6j$ coupling, where the constraint (the intermediate angular momentum being $\ell$) is inherently managed by the $6j$ coefficients encapsulated within the $\otimes^{6j}$ operation.
Thus, we can write:
\begin{align}
\mathbf{h}_j \otimes \left[ \mathcal{R}^{(u)}(\mathbf{r}_i) \otimes \mathcal{R}^{(\ell-u)}(\mathbf{r}_j) \right]^{(\ell)} = \mathcal{R}^{(u)}(\mathbf{r}_i) \otimes^{6j} \left( \mathbf{h}_j \otimes \mathcal{R}^{(\ell-u)}(\mathbf{r}_j) \right). \label{eq:proof_formal_recoupling_identity}
\end{align}

Substituting Eq.~\eqref{eq:proof_formal_recoupling_identity} into Eq.~\eqref{eq:proof_formal_sum_interchanged}:
\begin{align}
\mathbf{h}_i = \sum_{u=0}^\ell (-1)^{\ell-u} \binom{\ell}{u} \sum_{j \in \mathcal{N}(i)} \left( \mathcal{R}^{(u)}(\mathbf{r}_i) \otimes^{6j} \left( \mathbf{h}_j \otimes \mathcal{R}^{(\ell - u)}(\mathbf{r}_j) \right) \right). \label{eq:proof_formal_after_recoupling}
\end{align}
The term $\mathcal{R}^{(u)}(\mathbf{r}_i)$ is independent of the summation index $j$. The operation $\otimes^{6j}$, like the standard tensor product $\otimes$, is linear in its second argument with respect to summation. Thus, $\mathcal{R}^{(u)}(\mathbf{r}_i) \otimes^{6j}$ can be factored out of the sum over $j$:
\begin{align}
\mathbf{h}_i = \sum_{u=0}^\ell (-1)^{\ell-u} \binom{\ell}{u} \left( \mathcal{R}^{(u)}(\mathbf{r}_i) \otimes^{6j} \left( \sum_{j \in \mathcal{N}(i)} \mathbf{h}_j \otimes \mathcal{R}^{(\ell - u)}(\mathbf{r}_j) \right) \right). \label{eq:proof_formal_final_form}
\end{align}
This expression is the factorized form of the $\mathrm{SO}(3)$-equivariant node convolution as stated in the theorem.
\end{proof}
\newpage
\section{Proof of Equivariance}
\label{app:Proof-equivariance}

We now formally establish that the Wigner 6j convolution is equivariant under the action of the rotation group $\mathrm{SO}(3)$. Define the convolutional output as follows:

$$
f(\mathbf{x}) = \sum_{u = 0}^\ell \left( \mathcal{R}^{(u)}(\mathbf{r}_i) \otimes^{6j} \left( \alpha_{ij} \left( \mathbf{h}_j \otimes \mathcal{R}^{(\ell - u)}(\mathbf{r}_j) \right) \right) \right).
$$

We analyze the transformation of this expression under the rotation $R \in \mathrm{SO}(3)$. The transformed output is:

$$
f(R \cdot \mathbf{x}) = \sum_{u=0}^\ell \left( D^{(u)}(R) \mathcal{R}^{(u)}(\mathbf{r}_i) \otimes^{6j} \left( \alpha_{ij} \left( D^{(a)}(R)\mathbf{h}_j \otimes D^{(\ell - u)}(R) \mathcal{R}^{(\ell - u)}(\mathbf{r}_j) \right) \right) \right).
$$

Let us define the intermediate quantity:

$$
S_u \triangleq \sum_{j \in \mathcal{N}(i)} \alpha_{ij} \, \mathbf{h}_j \otimes \mathcal{R}^{(\ell - u)}(\mathbf{r}_j).
$$

Under rotation, this transforms as:

$$
\sum_{j \in \mathcal{N}(i)} \alpha_{ij} \left( D^{(a)}(R)\mathbf{h}_j \otimes D^{(\ell - u)}(R) \mathcal{R}^{(\ell - u)}(\mathbf{r}_j) \right) = \left( D^{(a)}(R) \otimes D^{(\ell - u)}(R) \right) S_u.
$$

Substituting into the convolution expression yields:

$$
f(R \cdot \mathbf{x}) = \sum_{u=0}^\ell \left( D^{(u)}(R) \mathcal{R}^{(u)}(\mathbf{r}_i) \otimes^{6j} \left( \left( D^{(a)}(R) \otimes D^{(\ell - u)}(R) \right) S_u \right) \right).
$$

Since the Wigner $6j$ recoupling tensor $\otimes^{6j}$ is $\mathrm{SO}(3)$-equivariant (as the Wigner coefficients are invariant under rotation), we may commute the group action:

$$
D^{(u)}(R) \mathcal{R}^{(u)}(\mathbf{r}_i) \otimes^{6j} \left( \left( D^{(a)}(R) \otimes D^{(\ell - u)}(R) \right) S_u \right) = D_{\mathrm{out}}(R) \left( \mathcal{R}^{(u)}(\mathbf{r}_i) \otimes^{6j} S_u \right),
$$

where $D_{\mathrm{out}}(R)$ denotes the output representation under $\mathrm{SO}(3)$. Therefore:

$$
f(R \cdot \mathbf{x}) = \sum_{u=0}^\ell D_{\mathrm{out}}(R) \left( \mathcal{R}^{(u)}(\mathbf{r}_i) \otimes^{6j} S_u \right) = D_{\mathrm{out}}(R) f(\mathbf{x}).
$$

This completes the proof of $\mathrm{SO}(3)$-equivariance.

\newpage
\section{Time Complexity}
~\label{app:time-complexity}
We analyze the computational complexity of the Wigner-$6j$ convolution, with particular attention to its dependence on the number of nodes \( \mathcal{V} \), the number of channels \( C \), and the maximum angular momentum \( L \). The dominant cost stems from angular momentum coupling via tensor products and Wigner-$6j$ recoupling, while the linear self-mix step introduces an additional quadratic dependence on \( C \). This analysis aligns with the derivations in Appendix C of the eSCN paper for \( \mathrm{SO}(3) \) convolution.

Let \( \mathcal{V} \) denote the total number of nodes (e.g., atoms), and \( C \) the number of channels per irreducible representation. The algorithm involves iterating over all valid angular momentum triplets \( (L_1, L_2, L_o) \) up to order \( L \), subject to triangle inequality constraints.

\begin{mdframed}[hidealllines=true,backgroundcolor=blue!5]
\textbf{(1) Tensor Product:} Each tensor product between irreps \( (L_1, L_2) \) incurs a cost  
\[
\Theta(C \cdot L_1 \cdot L_2)
\]

\textbf{(2) Wigner-$6j$ and CG Multiplication:} The cost of recoupling the tensor via Wigner-$6j$ coefficients is  
\[
\Theta(C \cdot L_1 \cdot L_2 \cdot L_o)
\]

\textbf{(3) Summation Step:} The final projection onto \( L_o \) includes a summation step costing  
\[
\Theta(C \cdot L_o)
\]
per valid triplet.
\end{mdframed}

The summation step is implemented as follows:
\begin{mdframed}[hidealllines=true,backgroundcolor=red!5]
\begin{verbatim}
outputs = self._sum_tensors([out for ins, out in zip(instructions, outputs) 
                             if ins.i_out == i_out])
\end{verbatim}
\end{mdframed}

The \textbf{per-node} complexity of Wigner-$6j$ convolution is therefore:
\[
\sum_{L_1=0}^L \sum_{L_2=0}^L \sum_{L_o=0}^L 
\left( C L_1 L_2 + C L_1 L_2 L_o + C L_o \right)
= \Theta(C L^6)
\]

\paragraph{Linear Self-Mix.} After the convolution, a channel-mixing layer (e.g., an MLP) acts independently on each irrep. This operation mixes \( C \) channels across each irrep of order \( L_o \), with complexity:
\[
\sum_{L_o=0}^L \Theta(C^2 L_o) = \Theta(C^2 L^2)
\]

\paragraph{Total Complexity.} Summing over all nodes \( \mathcal{V} \), the total computational complexity is:
\[
\boxed{\Theta(|\mathcal{V}| C L^6 + |\mathcal{V}| C^2 L^2)}
\]

\newpage
\section{Additional Lemmas}
\begin{lemma}[Many–Body Reduction]
Using Wigner \(6j\) convolution, a multi-atomic cluster expansion can be evaluated in
\(O\!\left(|V|\right)\) time instead of \(O\!\left(|\mathcal{E}|\right)\).
\end{lemma}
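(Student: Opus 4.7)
The plan is to reduce a general $k$-body cluster expansion to a composition of one-body sums in the ACE/MACE style, and then invoke the Node-Based Factorization via Wigner $6j$ theorem on each one-body sum.

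First I would write the $k$-body feature at node $i$ in its canonical cluster-expansion form
\[
B_i^{(k,L)} = \sum_{j_1,\dots,j_k \in \mathcal{N}(i)} \bigl[ \bigl(\mathbf{h}_{j_1} \otimes \mathcal{R}^{(\ell_1)}(\mathbf{r}_{i j_1})\bigr) \otimes \cdots \otimes \bigl(\mathbf{h}_{j_k} \otimes \mathcal{R}^{(\ell_k)}(\mathbf{r}_{i j_k})\bigr) \bigr]^{(L)},
\]
and invoke the standard factorization trick: because each factor depends on only one summation index, the sum over ordered $k$-tuples collapses to a $k$-fold tensor power of one-body sums,
\[
B_i^{(k,L)} = \bigl[ A_i^{(\ell_1)} \otimes \cdots \otimes A_i^{(\ell_k)} \bigr]^{(L)}, \quad A_i^{(\ell_n)} := \sum_{j \in \mathcal{N}(i)} \mathbf{h}_j \otimes \mathcal{R}^{(\ell_n)}(\mathbf{r}_{ij}).
\]
This alone removes the $O(|\mathcal{N}(i)|^k)$ cost of a literal evaluation, leaving $k$ one-body sums whose tensor-product cost is $O(|\mathcal{E}|)$.

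Next I would apply the Wigner $6j$ node-factorization theorem to each $A_i^{(\ell_n)}$ to obtain
\[
A_i^{(\ell_n)} = \sum_{u=0}^{\ell_n} (-1)^{\ell_n - u} \binom{\ell_n}{u} \, \mathcal{R}^{(u)}(\mathbf{r}_i) \otimes^{6j} M_i^{(\ell_n - u)},
\]
with $M_i^{(p)} = \sum_{j \in \mathcal{N}(i)} \mathbf{h}_j \otimes \mathcal{R}^{(p)}(\mathbf{r}_j)$. Crucially, the tensor product $\mathbf{h}_j \otimes \mathcal{R}^{(p)}(\mathbf{r}_j)$ depends only on the neighbor index $j$, so it is precomputed once per node and reused across every neighborhood sum in which $j$ appears; the remaining edge loop collapses to scalar-weighted accumulation. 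A tensor-product count then gives $O(|\mathcal{V}|)$ for the per-$j$ precomputation, $O(|\mathcal{V}|)$ for the $\otimes^{6j}$ with the node-local $\mathcal{R}^{(u)}(\mathbf{r}_i)$, and $O(|\mathcal{V}|)$ for the final $k$-fold contraction assembling $B_i^{(k,L)}$. Summing over the (constant) number of body orders yields the claimed $O(|\mathcal{V}|)$ scaling, with the equivariance of each step inherited from the equivariance of $\otimes^{6j}$ established earlier.

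The main technical obstacle will be verifying that the inner Wigner $6j$ rebracketings performed independently within each $A_i^{(\ell_n)}$ remain consistent with the fixed outer coupling tree imposed by the $k$-fold product. In general this nested rebracketing produces higher recoupling coefficients ($9j$ symbols for $k=3$ and larger $3nj$ symbols for deeper trees), but each such symbol is a node-local constant independent of any neighbor index and can be absorbed into the path weights without affecting the edge loop. The Biedenharn--Elliott identity provides the required compatibility among iterated $6j$ products, closing the argument.
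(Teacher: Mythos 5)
Your proposal follows essentially the same route as the paper's proof: both observe that the many-body feature is a node-local tensor power of the one-body sums $A_i^{(\ell_n)}$, then apply the Wigner $6j$ node-factorization theorem to each $A_i^{(\ell_n)}$ and note that the per-$j$ products $\mathbf{h}_j \otimes \mathcal{R}^{(p)}(\mathbf{r}_j)$ are precomputed once per node so the edge loop costs no tensor products. One small point: your worry about nested $9j$/$3nj$ recouplings is unnecessary here, because each $A_i^{(\ell_n)}$ is fully reassembled as a single irrep-valued tensor \emph{before} the outer $k$-fold product is taken, so the outer coupling tree never sees the internal $\otimes^{6j}$ structure; conversely, your version is a bit more careful than the paper's, which defines the moment tensor as $\mathbf{M}^{(\ell-u)} = \sum_{j\in V}\mathbf{h}_j\otimes\mathcal{R}^{(\ell-u)}(\mathbf{r}_j)$ (summing over all of $V$ rather than $\mathcal{N}(i)$), whereas you correctly keep the neighborhood-restricted $M_i^{(p)}$ and appeal only to per-$j$ precomputation plus scalar accumulation.
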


\begin{proof}
We decompose the MACE architecture into two computational stages and show that each admits nodewise computation at \(O(|V|)\) cost.

\paragraph{Stage 1: $\mathrm{SO}(3)$-Equivariant Convolution.}
The convolutional component of MACE computes, for each node \(i \in V\),
\[
\mathbf{A}_i := \sum_{j \in \mathcal{N}(i)} \mathbf{h}_j \otimes \mathcal{R}^{(\ell)}(\mathbf{r}_{ij}),
\]
which has naive complexity \(O(|\mathcal{E}|)\) over all nodes due to edge enumeration.

\paragraph{Stage 2: Self Tensor Product and Projection.}
The self tensor product \((\mathbf{A}_i)^{\otimes p}\) and projection \([\cdot]^{(L)}\) are purely local operations, independent of neighbors. Since they apply once per node, they require \(O(|V|)\) total time.

\paragraph{Key Step: Nodewise Refactorization via Wigner $6j$.}
Using Theorem~\ref{thm:node_factorization_via_Wigner $6j$}, we invoke the following identity:
\[
\sum_{j \in \mathcal{N}(i)} \mathbf{h}_j \otimes \mathcal{R}^{(\ell)}(\mathbf{r}_{ij})
=
\sum_{u = 0}^{\ell}
\mathcal{R}^{(u)}(\mathbf{r}_i)
\otimes^{6j}
\left(
\sum_{j \in \mathcal{N}(i)} \mathbf{h}_j \otimes \mathcal{R}^{(\ell - u)}(\mathbf{r}_j)
\right),
\]
where \( \otimes^{6j} \) denotes a tensor contraction via Wigner \(6j\) recoupling.

Define global moment tensors \( \mathbf{M}^{(\ell - u)} := \sum_{j \in V} \mathbf{h}_j \otimes \mathcal{R}^{(\ell - u)}(\mathbf{r}_j) \). These are precomputable in \(O(|V|)\) time, and reused across all \(i \in V\). The refactorized form becomes
\[
\psi_i = \sum_{u = 0}^{\ell} \mathcal{R}^{(u)}(\mathbf{r}_i) \otimes^{6j} \mathbf{M}^{(\ell - u)},
\]
which is independent of the neighborhood structure and thus computable per node in constant time (for fixed \(\ell\)).

\paragraph{Complexity.}
The precomputation of \( \{ \mathbf{M}^{(\ell - u)} \}_{u=0}^\ell \) costs \(O(|V|)\). The nodewise contractions via \(\otimes^{6j}\) are constant-time per node for bounded \(\ell\), yielding an overall convolutional cost of \(O(|V|)\). Since the subsequent self tensor product and projection are already node-local, the total complexity of MACE becomes \(O(|V|)\).

\end{proof}

\newpage
\section{Technical Details Behind E2Former}
\label{app:e2former-tech}

\begin{figure*}[!htbp]
\vskip 0.2in
\begin{center}
\centerline{\includegraphics[width=\columnwidth]{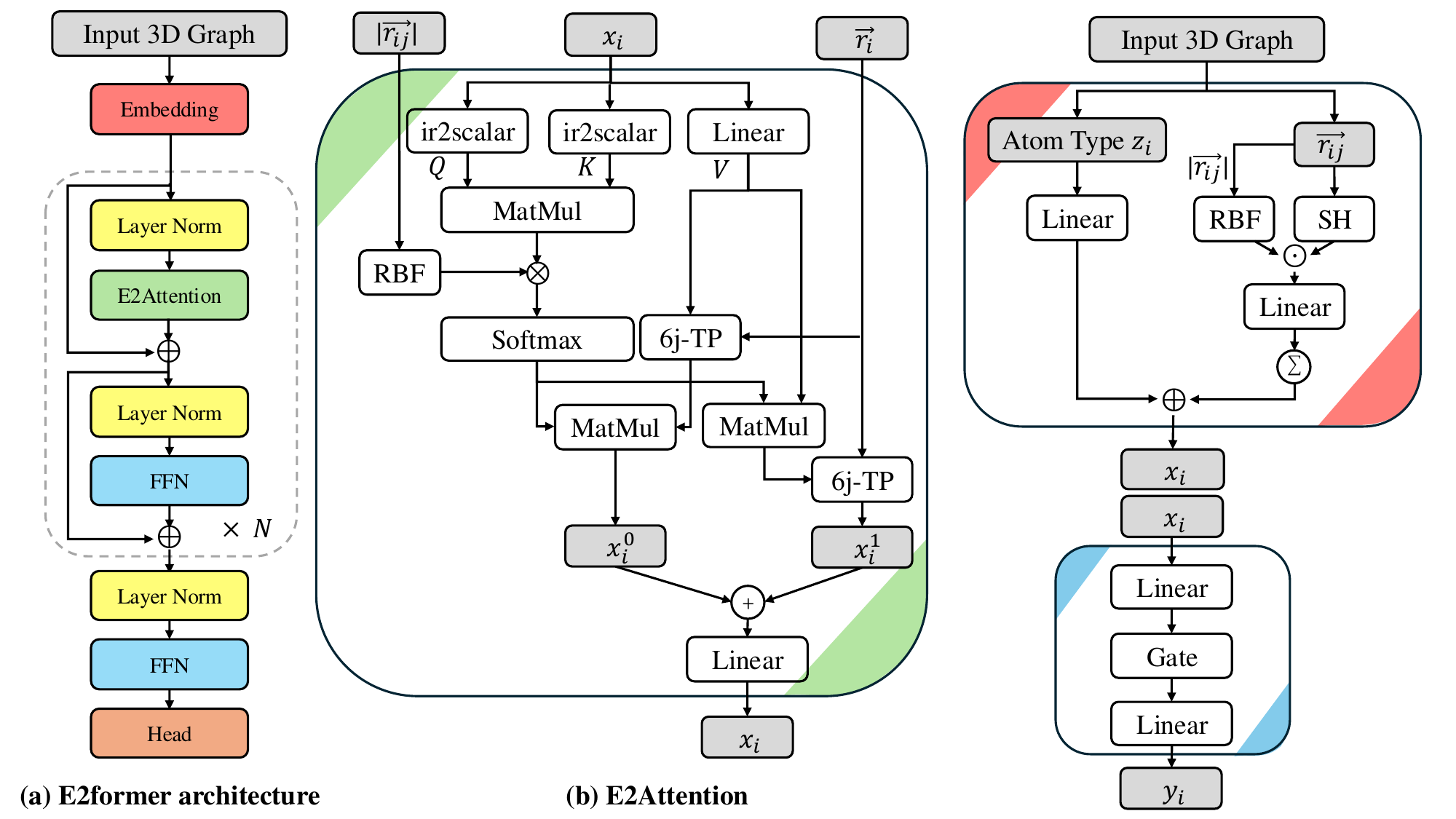}}
\caption{Overview of the E2Former architecture.
(a) The main network alternates E2Attention blocks with feedforward layers, repeatedly refining node embeddings from a 3D molecular graph.  
(b) Within each E2Attention block, scalarized queries/keys (via \texttt{ir2scalar}) are combined with distance‐dependent features (\texttt{RBF}) and convolutions (\texttt{6j-TP}), updating the node embeddings equivariantly.  
(c) The final readout incorporates atomic types and radial/spherical expansions (\texttt{RBF}, \texttt{SH}) into a gated projection that produces the per‐atom output \(y_i\).}
\label{fig:arc}
\end{center}
\vskip -0.2in
\end{figure*}

In our development, we not only want mathematical integrity but also practical usefulness. Thus, E2Former is not solely characterized by Wigner-$6j$ conv, but rather as an architecture that integrates both efficiency and significant engineering components. The benefit of Wigner-$6j$ conv enables us to reallocate the computational budget to increase expressivity elsewhere, such as adding more layers, wider hidden dimensions, or more attention heads. 
 
\paragraph{Architecture Overview.}
The \textsc{E2Former} architecture adheres to the general transformer paradigm, commencing with an embedding layer that generates initial E(3)-equivariant node features. These features are subsequently refined through a stack of E2former blocks, denoted as \texttt{TransBlock} in our implementation. Each \texttt{TransBlock} employs a pre-normalization strategy, structured as: Normalization $\rightarrow$ Transformer Layer $\rightarrow$ Residual Connection $\rightarrow$ Normalization $\rightarrow$ Equivariant Feed-Forward Network (FFN) $\rightarrow$ Residual Connection.

\paragraph{Initial Embedding and Feature Representation.}
The generation of initial equivariant node features, $\mathbf{h}_i^{(0)}$, for each node $i$ (e.g., an atom) with input coordinates and type, is performed by an \textit{Initial Equivariant Embedding} module. This module constructs features by aggregating information from the local neighborhood. Interatomic distances are encoded using radial basis functions (RBFs), while relative positions, $\mathbf{r}_j - \mathbf{r}_i$, are represented using spherical harmonics, $\mathcal{R}^{(l)}_m(\vec{p}_j - \vec{p}_i)$, up to a specified maximum degree $l_{\max}$. The resultant features, $\mathbf{h}_i^{(0)}$, comprise a collection of irreducible representations (irreps) of SO(3). Within each \texttt{TransBlock}, \textit{Equivariant Normalization} layers are applied before both the attention and FFN sub-layers. These layers operate by normalizing features independently within each irrep channel, which is crucial for stabilizing training and enhancing model performance. 

\paragraph{Equivariant Feed-Forward Networks.}
Subsequent to the attention mechanism, node features are processed by an \textit{Equivariant Feed-Forward Network}. \textsc{E2Former} accommodates a variety of FFN types, configurable via the \texttt{ffn\_type} parameter. These include standard equivariant Multi-Layer Perceptrons (MLPs) that operate on spherical harmonic coefficients (e.g., \texttt{FeedForwardNetwork\_s2}, \texttt{FeedForwardNetwork\_s3}, potentially incorporating grid-based non-linearities inspired by eSCN and EquiformerV2~\cite{passaro2023reducing,equiformer_v2}), as well as explicit many-body interaction modules~\cite{batatia2022mace} that integrate equivariant two-body or three-body tensor products. This modular design permits tailored feature processing contingent upon the specific demands of the task.

\paragraph{E2Attention Mechanism.}
The central component enabling feature interaction in \textsc{E2Former} is the \textit{E2Attention} mechanism, which builds upon the Wigner-$6j$ Convolution to update node embeddings. A key implementation consideration lies in the treatment of positional information. While many spherical EGNNs normalize relative positions, this practice may discard essential directional cues in Wigner-$6j$ Convolution, which is inherently node-centric. To preserve relative geometric information, we retain unnormalized absolute positions in the convolution, and instead apply normalization during the attention coefficient computation.

\paragraph{All-Order Attention Paths.}
E2Attention explicitly models and adaptively aggregates contributions from multiple \textit{spherical harmonic orders}. Under the \texttt{attn\_type="all-order"} configuration, the mechanism includes a zero-order (scalar) path that captures isotropic interactions, a first-order (vector) path using Wigner-$6j$ Convolution configured for order-1 interactions, and higher-order paths (e.g., second-order for $l=2$) to model more complex anisotropic effects.

\paragraph{Computation of Attention Weights.}
Attention weights $\alpha_{ij}$ are computed from projected scalar queries and keys derived from $\mathbf{h}_i$ and $\mathbf{h}_j$, enriched by radial basis function (RBF) embeddings of $\|\mathbf{r}_{ij}\|$ and optionally, learnable embeddings of atomic types $z_i, z_j$. Geometric information can be incorporated into attention in various ways, such as through the \texttt{tp\_type="dot\_alpha"} configuration, which directly integrates spherical harmonics into the attention score computation.

\paragraph{Gated Aggregation of Orders.}
A critical component of E2Attention is the \textit{Gated Aggregation of Orders}. The contributions from the zero-order ($\mathbf{m}_{ij}^{(0)}$), first-order ($\mathbf{m}_{ij}^{(1)}$), and second-order ($\mathbf{m}_{ij}^{(2)}$) pathways are adaptively combined via a learnable gating mechanism. Specifically, the scalar ($l=0$) components of the central node’s features $\mathbf{h}_i$ are processed through a small MLP to produce gating coefficients $g^{(0)}_i, g^{(1)}_i, g^{(2)}_i$. The aggregated message from neighbor $j$ to node $i$ is given by
\[
\mathbf{m}_{ij} = g^{(0)}_i \odot \mathbf{m}_{ij}^{(0)} + g^{(1)}_i \odot \mathbf{m}_{ij}^{(1)} + g^{(2)}_i \odot \mathbf{m}_{ij}^{(2)}.
\]
The updated node feature $\mathbf{h}'_i$ is then computed by summing the attention-weighted messages from all neighbors:
\[
\mathbf{h}'_i = \sum_j \alpha_{ij} \mathbf{m}_{ij}.
\]

\newpage

\section{Addtional Experiments}
\subsection{QM9 Results}
We additionally evaluated our method on the QM9 dataset as a quality check. Due to computational constraints, we report results on three representative energy metrics: $U_0$, HOMO, and LUMO. E2Former demonstrates competitive performance compared to Equiformer V2 and its predecessor Equiformer, while GotenNet~\cite{aykent2025gotennet} achieves the best overall results. Nonetheless, we emphasize that our model is primarily designed for larger systems, whereas QM9 represents a relatively small-scale benchmark.

\begin{table}[ht]
  \centering
  \caption{Performance on QM9 ($U_0$, HOMO, LUMO)}
  \label{tab:qm9_comparison}
  \begin{tabular}{@{}lccc@{}}
    \toprule
    Method         & $U_0$ & HOMO & LUMO \\
    \midrule
    E2Former       & 6.43  & 14.2 & 13.6 \\
    GotenNet       & 3.37  & 13.4 & 12.2 \\ 
    \addlinespace
    Equiformer V2  & 6.17  & 14.4 & 13.3 \\
    Equiformer     & 6.59  & 15.4 & 14.7 \\
    \bottomrule
  \end{tabular}
\end{table}

\subsection{Comparison with Equiformer V2}
To ensure a better comparison with the Equiformer V2 model, we aligned the experimental settings between E2Former and Equiformer V2. Specifically, we adopted identical hyperparameters and architecture configurations for both models, including the number of layers, maximum angular momentum orders, and radial basis functions. This alignment guarantees a controlled comparison and eliminates confounding factors arising from differing model capacities or training settings.

Table~\ref{tab:fairness_comparison} summarizes the performance in terms of energy mean absolute error (E), force mean absolute error (F), and inference speed (measured in samples per second). The results indicate that E2Former not only achieves better accuracy on both energy and force predictions but also delivers significantly faster inference speed under matched conditions.

\begin{table}[h!]
\centering
\caption{Comparison between E2Former and Equiformer V2 under identical hyperparameter settings.}
\label{tab:fairness_comparison}
\resizebox{\columnwidth}{!}{%
\begin{tabular}{lcccccc}
\toprule
\textbf{Method} & \textbf{E} & \textbf{F} & \textbf{Number of Layers} & $L_{\max}$ & $M_{\max}$ & \textbf{Inference Speed (samples/sec)} \\
\midrule
E2Former       & 20.5   & 270   & 12 & 3 & 2 & 34 \\
Equiformer V2  & 23.47  & 296   & 12 & 3 & 2 & 22 \\
\bottomrule
\end{tabular}
}
\end{table}
\begin{figure}
    \centering
    \includegraphics[width=1\linewidth]{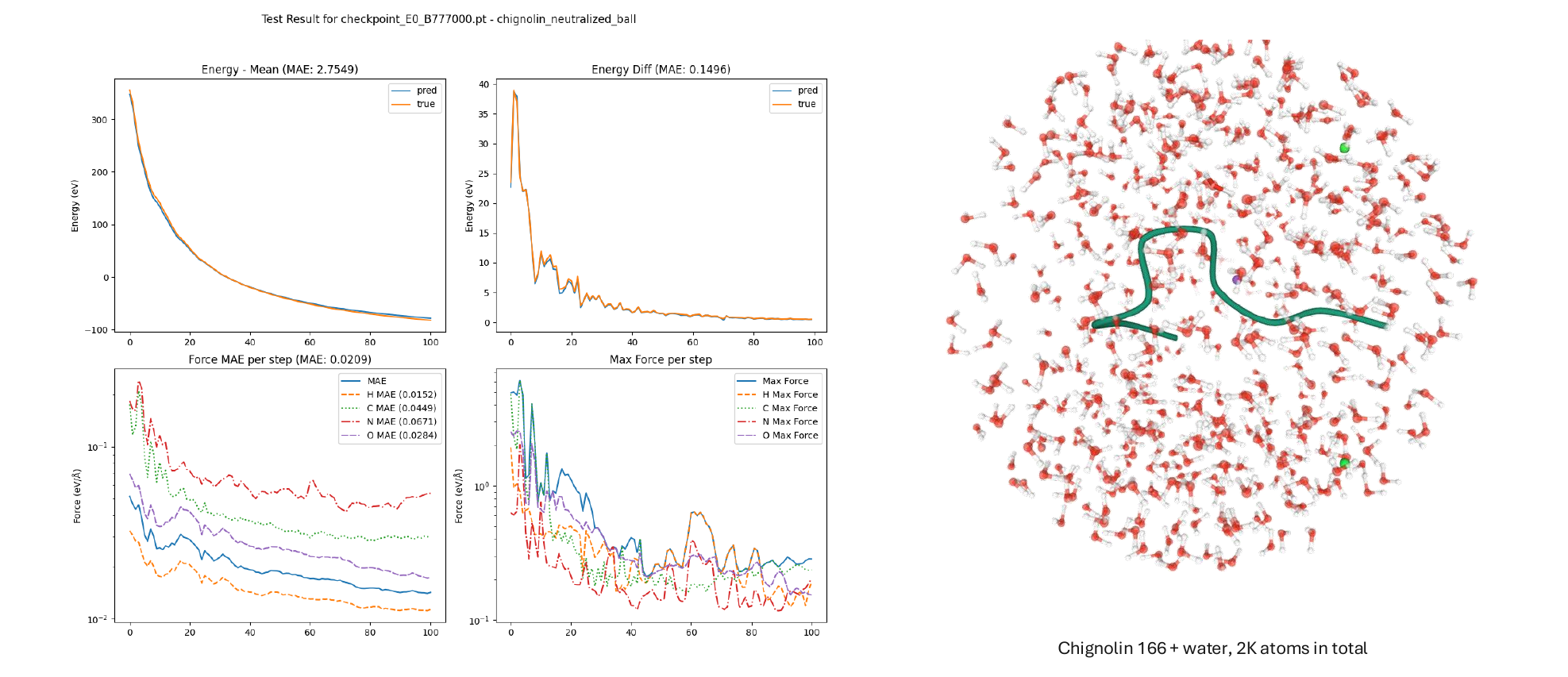}
    \caption{Force MAE on a system of around 2000 atoms, where a chignolin peptide is wrapped with water.}
    \label{fig:chig}
\end{figure}
\newpage

\newpage
%%%%%%%%%%%%%%%%%%%%%%%%%%%%%%%%%%%%%%%%%%%%%%%%%%%%%%%%%%%%%%%%%%%%%%%%%%%%%%%
%%%%%%%%%%%%%%%%%%%%%%%%%%%%%%%%%%%%%%%%%%%%%%%%%%%%%%%%%%%%%%%%%%%%%%%%%%%%%%%

\section{Hyperparameters}
\begin{table*}[!htbp]
\centering
\caption{Hyperparameter Configuration for E2Former on OC20, OC22, and SPICE}
\label{tab:hyperparams}
\resizebox{\linewidth}{!}{%
\begin{tabular}{llll}
\toprule
\textbf{Hyperparameter} & \textbf{E2Former 33M} & \textbf{E2Former 67M} & \textbf{Description} \\
\midrule
\multicolumn{4}{l}{\textit{--- General Training Settings ---}} \\
\midrule
% \texttt{num\_gpus}                    & 8  & 4  & Number of GPUs used in training. \\
\texttt{optim.lr\_initial}           & 0.00015  & 0.0002  & Initial learning rate for the optimizer. \\
\texttt{optim.batch\_size}           & 128 for OC20/OC22 and 48 for SPICE   & 64  & Training batch size. \\
\midrule
\multicolumn{4}{l}{\textit{--- Model Architecture ---}} \\
\midrule
\texttt{model.backbone.encoder\_embed\_dim} & 256  & 256  & Embedding dimension for each node. \\
\texttt{model.backbone.hidden\_size} & 256  & 256  & Hidden size for intermediate layers. \\
\texttt{model.backbone.num\_layers}  & 6  & 12  & Number of E2Former layers. \\
\texttt{model.backbone.max\_neighbors} & 20  & 20  & Max neighbors per node for message passing. \\
\texttt{model.backbone.irreps\_node\_embedding} & \texttt{256x0e+256x1e+256x2e+256x3e} & \texttt{256x0e+256x1e+256x2e+256x3e} & Irreps for node embeddings up to $\ell=3$. \\
\texttt{model.backbone.irreps\_head} & \texttt{16x0e+16x1e+16x2e+16x3e} & \texttt{16x0e+16x1e+16x2e+16x3e} & Irreps for the final head up to $\ell=3$. \\
\texttt{model.backbone.attn\_scalar\_head} & 16  & 16  & Size of scalar attention head projections. \\
\texttt{model.backbone.num\_attn\_heads} & 32  & 32  & Number of multi-head attentions per layer. \\
\texttt{model.backbone.number\_of\_basis} & 256  & 256  & Number of radial basis functions. \\
\texttt{model.backbone.max\_radius}  & 12 for OC20/22 and 5 for SPICE  & 12 for OC20/22 and 5 for SPICE  & Cutoff radius for local neighborhood. \\
\texttt{model.backbone.alpha\_drop}  & 0.05  & 0.05  & Drop rate for alpha (e.g., attention dropout). \\
\texttt{model.backbone.drop\_path\_rate} & 0.05  & 0.05  & Stochastic depth/drop path rate. \\
\texttt{model.backbone.basis\_type}  & \texttt{gaussiansmear}  & \texttt{gaussiansmear}  & Type of radial embedding (Gaussian smearing). \\
\texttt{model.backbone.norm\_layer}  & \texttt{layer\_norm\_sh}  & \texttt{layer\_norm\_sh}  & Normalization layer type (LayerNorm in spherical basis). \\
\texttt{model.backbone.attn\_type}   & \texttt{all-order}  & \texttt{all-order}  & Attention mechanism covering all spherical orders. \\
\texttt{model.backbone.tp\_type}     & \texttt{dot\_alpha}  & \texttt{dot\_alpha}  & Type of tensor product (dot + learned scale). \\
\texttt{model.backbone.ffn\_type}    & \texttt{s2}  & \texttt{s2}  & Type of feed-forward network in each block. \\
\bottomrule
\end{tabular}}
\end{table*}
\newpage

%%%%%%%%%%%%%%%%%%%%%%%%%%%%%%%%%%%%%%%%%%%%%%%%%%%%%%%%%%%%
\newpage
\section*{NeurIPS Paper Checklist}

\begin{enumerate}

\item {\bf Claims}
    \item[] Question: Do the main claims made in the abstract and introduction accurately reflect the paper's contributions and scope?
\item[] Answer: \answerYes{}
\item[] Justification: The claims made in the abstract and introduction directly align with the core contributions: introducing the Wigner $6j$ convolution, reducing $\mathcal{O}(|\mathcal{E}|)$ complexity to $\mathcal{O}(|\mathcal{V}|)$, and achieving competitive or superior performance on OC20, OC22, and SPICE. See Sec. 1 (Introduction) and Sec. 4 (Results).
    \item[] Guidelines:
    \begin{itemize}
        \item The answer NA means that the abstract and introduction do not include the claims made in the paper.
        \item The abstract and/or introduction should clearly state the claims made, including the contributions made in the paper and important assumptions and limitations. A No or NA answer to this question will not be perceived well by the reviewers. 
        \item The claims made should match theoretical and experimental results, and reflect how much the results can be expected to generalize to other settings. 
        \item It is fine to include aspirational goals as motivation as long as it is clear that these goals are not attained by the paper. 
    \end{itemize}

\item {\bf Limitations}
    \item[] Question: Does the paper discuss the limitations of the work performed by the authors?
\item[] Answer: \answerYes{}
\item[] Justification: Limitations, such as the reliance on spherical harmonics and potential inefficiency for very small graphs where edge-level operations dominate, are discussed in Sec. 5. We also acknowledge assumptions of perfect molecular geometries.
    \item[] Guidelines:
    \begin{itemize}
        \item The answer NA means that the paper has no limitation while the answer No means that the paper has limitations, but those are not discussed in the paper. 
        \item The authors are encouraged to create a separate "Limitations" section in their paper.
        \item The paper should point out any strong assumptions and how robust the results are to violations of these assumptions (e.g., independence assumptions, noiseless settings, model well-specification, asymptotic approximations only holding locally). The authors should reflect on how these assumptions might be violated in practice and what the implications would be.
        \item The authors should reflect on the scope of the claims made, e.g., if the approach was only tested on a few datasets or with a few runs. In general, empirical results often depend on implicit assumptions, which should be articulated.
        \item The authors should reflect on the factors that influence the performance of the approach. For example, a facial recognition algorithm may perform poorly when image resolution is low or images are taken in low lighting. Or a speech-to-text system might not be used reliably to provide closed captions for online lectures because it fails to handle technical jargon.
        \item The authors should discuss the computational efficiency of the proposed algorithms and how they scale with dataset size.
        \item If applicable, the authors should discuss possible limitations of their approach to address problems of privacy and fairness.
        \item While the authors might fear that complete honesty about limitations might be used by reviewers as grounds for rejection, a worse outcome might be that reviewers discover limitations that aren't acknowledged in the paper. The authors should use their best judgment and recognize that individual actions in favor of transparency play an important role in developing norms that preserve the integrity of the community. Reviewers will be specifically instructed to not penalize honesty concerning limitations.
    \end{itemize}

\item {\bf Theory assumptions and proofs}
    \item[] Question: For each theoretical result, does the paper provide the full set of assumptions and a complete (and correct) proof?
\item[] Answer: \answerYes{}
\item[] Justification: All theoretical claims are accompanied by clearly stated assumptions (e.g., irrep decompositions under $\mathrm{SO}(3)$) and full proofs in Appendix A. Proof sketches are included in the main text (Sec. 3).
    \item[] Guidelines:
    \begin{itemize}
        \item The answer NA means that the paper does not include theoretical results. 
        \item All the theorems, formulas, and proofs in the paper should be numbered and cross-referenced.
        \item All assumptions should be clearly stated or referenced in the statement of any theorems.
        \item The proofs can either appear in the main paper or the supplemental material, but if they appear in the supplemental material, the authors are encouraged to provide a short proof sketch to provide intuition. 
        \item Inversely, any informal proof provided in the core of the paper should be complemented by formal proofs provided in appendix or supplemental material.
        \item Theorems and Lemmas that the proof relies upon should be properly referenced. 
    \end{itemize}

    \item {\bf Experimental result reproducibility}
    \item[] Question: Does the paper fully disclose all the information needed to reproduce the main experimental results of the paper to the extent that it affects the main claims and/or conclusions of the paper (regardless of whether the code and data are provided or not)?
\item[] Answer: \answerYes{}
\item[] Justification: We detail all training protocols, datasets, and evaluation metrics in Sec. 4 and Appendix B. Full experimental setup is sufficient to reproduce key results.
    \item[] Guidelines:
    \begin{itemize}
        \item The answer NA means that the paper does not include experiments.
        \item If the paper includes experiments, a No answer to this question will not be perceived well by the reviewers: Making the paper reproducible is important, regardless of whether the code and data are provided or not.
        \item If the contribution is a dataset and/or model, the authors should describe the steps taken to make their results reproducible or verifiable. 
        \item Depending on the contribution, reproducibility can be accomplished in various ways. For example, if the contribution is a novel architecture, describing the architecture fully might suffice, or if the contribution is a specific model and empirical evaluation, it may be necessary to either make it possible for others to replicate the model with the same dataset, or provide access to the model. In general. releasing code and data is often one good way to accomplish this, but reproducibility can also be provided via detailed instructions for how to replicate the results, access to a hosted model (e.g., in the case of a large language model), releasing of a model checkpoint, or other means that are appropriate to the research performed.
        \item While NeurIPS does not require releasing code, the conference does require all submissions to provide some reasonable avenue for reproducibility, which may depend on the nature of the contribution. For example
        \begin{enumerate}
            \item If the contribution is primarily a new algorithm, the paper should make it clear how to reproduce that algorithm.
            \item If the contribution is primarily a new model architecture, the paper should describe the architecture clearly and fully.
            \item If the contribution is a new model (e.g., a large language model), then there should either be a way to access this model for reproducing the results or a way to reproduce the model (e.g., with an open-source dataset or instructions for how to construct the dataset).
            \item We recognize that reproducibility may be tricky in some cases, in which case authors are welcome to describe the particular way they provide for reproducibility. In the case of closed-source models, it may be that access to the model is limited in some way (e.g., to registered users), but it should be possible for other researchers to have some path to reproducing or verifying the results.
        \end{enumerate}
    \end{itemize}

\item {\bf Open access to data and code}
    \item[] Question: Does the paper provide open access to the data and code, with sufficient instructions to faithfully reproduce the main experimental results, as described in supplemental material?
\item[] Answer: \answerYes{}
\item[] Justification: We provide anonymized links in the supplemental material to the code repository and data preprocessing scripts. Upon acceptance, we will release these publicly under MIT license.
    \item[] Guidelines:
    \begin{itemize}
        \item The answer NA means that paper does not include experiments requiring code.
        \item Please see the NeurIPS code and data submission guidelines (\url{https://nips.cc/public/guides/CodeSubmissionPolicy}) for more details.
        \item While we encourage the release of code and data, we understand that this might not be possible, so “No” is an acceptable answer. Papers cannot be rejected simply for not including code, unless this is central to the contribution (e.g., for a new open-source benchmark).
        \item The instructions should contain the exact command and environment needed to run to reproduce the results. See the NeurIPS code and data submission guidelines (\url{https://nips.cc/public/guides/CodeSubmissionPolicy}) for more details.
        \item The authors should provide instructions on data access and preparation, including how to access the raw data, preprocessed data, intermediate data, and generated data, etc.
        \item The authors should provide scripts to reproduce all experimental results for the new proposed method and baselines. If only a subset of experiments are reproducible, they should state which ones are omitted from the script and why.
        \item At submission time, to preserve anonymity, the authors should release anonymized versions (if applicable).
        \item Providing as much information as possible in supplemental material (appended to the paper) is recommended, but including URLs to data and code is permitted.
    \end{itemize}

\item {\bf Experimental setting/details}
    \item[] Question: Does the paper specify all the training and test details (e.g., data splits, hyperparameters, how they were chosen, type of optimizer, etc.) necessary to understand the results?
\item[] Answer: \answerYes{}
\item[] Justification: Data splits, optimizer (AdamW), hyperparameter selection, and training epochs are all included in Appendix B and Table 2.
    \item[] Guidelines:
    \begin{itemize}
        \item The answer NA means that the paper does not include experiments.
        \item The experimental setting should be presented in the core of the paper to a level of detail that is necessary to appreciate the results and make sense of them.
        \item The full details can be provided either with the code, in appendix, or as supplemental material.
    \end{itemize}

\item {\bf Experiment statistical significance}
    \item[] Question: Does the paper report error bars suitably and correctly defined or other appropriate information about the statistical significance of the experiments?
    \item[] Answer: \answerNo{} % Replace by \answerYes{}, \answerNo{}, or \answerNA{}.
    \item[] Justification: This follows the community standards.
    \item[] Guidelines:
    \begin{itemize}
        \item The answer NA means that the paper does not include experiments.
        \item The authors should answer "Yes" if the results are accompanied by error bars, confidence intervals, or statistical significance tests, at least for the experiments that support the main claims of the paper.
        \item The factors of variability that the error bars are capturing should be clearly stated (for example, train/test split, initialization, random drawing of some parameter, or overall run with given experimental conditions).
        \item The method for calculating the error bars should be explained (closed form formula, call to a library function, bootstrap, etc.)
        \item The assumptions made should be given (e.g., Normally distributed errors).
        \item It should be clear whether the error bar is the standard deviation or the standard error of the mean.
        \item It is OK to report 1-sigma error bars, but one should state it. The authors should preferably report a 2-sigma error bar than state that they have a 96\% CI, if the hypothesis of Normality of errors is not verified.
        \item For asymmetric distributions, the authors should be careful not to show in tables or figures symmetric error bars that would yield results that are out of range (e.g. negative error rates).
        \item If error bars are reported in tables or plots, The authors should explain in the text how they were calculated and reference the corresponding figures or tables in the text.
    \end{itemize}

\item {\bf Experiments compute resources}
    \item[] Question: For each experiment, does the paper provide sufficient information on the computer resources (type of compute workers, memory, time of execution) needed to reproduce the experiments?
\item[] Answer: \answerYes{}
\item[] Justification: We report GPU type (A100 40GB), average runtime per epoch, and total compute usage per benchmark task (Appendix C).
    \item[] Guidelines:
    \begin{itemize}
        \item The answer NA means that the paper does not include experiments.
        \item The paper should indicate the type of compute workers CPU or GPU, internal cluster, or cloud provider, including relevant memory and storage.
        \item The paper should provide the amount of compute required for each of the individual experimental runs as well as estimate the total compute. 
        \item The paper should disclose whether the full research project required more compute than the experiments reported in the paper (e.g., preliminary or failed experiments that didn't make it into the paper). 
    \end{itemize}
    
\item {\bf Code of ethics}
    \item[] Question: Does the research conducted in the paper conform, in every respect, with the NeurIPS Code of Ethics \url{https://neurips.cc/public/EthicsGuidelines}?
\item[] Answer: \answerYes{}
\item[] Justification: The study complies with the NeurIPS Code of Ethics. No sensitive data or personally identifiable information is used.
    \item[] Guidelines:
    \begin{itemize}
        \item The answer NA means that the authors have not reviewed the NeurIPS Code of Ethics.
        \item If the authors answer No, they should explain the special circumstances that require a deviation from the Code of Ethics.
        \item The authors should make sure to preserve anonymity (e.g., if there is a special consideration due to laws or regulations in their jurisdiction).
    \end{itemize}

\item {\bf Broader impacts}
    \item[] Question: Does the paper discuss both potential positive societal impacts and negative societal impacts of the work performed?
\item[] Answer: \answerYes{}
\item[] Justification: We discuss societal benefits in molecular simulation and drug design in Introduction.
    \item[] Guidelines:
    \begin{itemize}
        \item The answer NA means that there is no societal impact of the work performed.
        \item If the authors answer NA or No, they should explain why their work has no societal impact or why the paper does not address societal impact.
        \item Examples of negative societal impacts include potential malicious or unintended uses (e.g., disinformation, generating fake profiles, surveillance), fairness considerations (e.g., deployment of technologies that could make decisions that unfairly impact specific groups), privacy considerations, and security considerations.
        \item The conference expects that many papers will be foundational research and not tied to particular applications, let alone deployments. However, if there is a direct path to any negative applications, the authors should point it out. For example, it is legitimate to point out that an improvement in the quality of generative models could be used to generate deepfakes for disinformation. On the other hand, it is not needed to point out that a generic algorithm for optimizing neural networks could enable people to train models that generate Deepfakes faster.
        \item The authors should consider possible harms that could arise when the technology is being used as intended and functioning correctly, harms that could arise when the technology is being used as intended but gives incorrect results, and harms following from (intentional or unintentional) misuse of the technology.
        \item If there are negative societal impacts, the authors could also discuss possible mitigation strategies (e.g., gated release of models, providing defenses in addition to attacks, mechanisms for monitoring misuse, mechanisms to monitor how a system learns from feedback over time, improving the efficiency and accessibility of ML).
    \end{itemize}
    
\item {\bf Safeguards}
    \item[] Question: Does the paper describe safeguards that have been put in place for responsible release of data or models that have a high risk for misuse (e.g., pretrained language models, image generators, or scraped datasets)?
\item[] Answer: \answerNA{}
\item[] Justification: Our released models and datasets are low-risk and not subject to misuse concerns. No generative models or scraped data are involved.
    \item[] Guidelines:
    \begin{itemize}
        \item The answer NA means that the paper poses no such risks.
        \item Released models that have a high risk for misuse or dual-use should be released with necessary safeguards to allow for controlled use of the model, for example by requiring that users adhere to usage guidelines or restrictions to access the model or implementing safety filters. 
        \item Datasets that have been scraped from the Internet could pose safety risks. The authors should describe how they avoided releasing unsafe images.
        \item We recognize that providing effective safeguards is challenging, and many papers do not require this, but we encourage authors to take this into account and make a best faith effort.
    \end{itemize}

\item {\bf Licenses for existing assets}
    \item[] Question: Are the creators or original owners of assets (e.g., code, data, models), used in the paper, properly credited and are the license and terms of use explicitly mentioned and properly respected?
\item[] Answer: \answerYes{}
\item[] Justification: We use OC20, OC22, and SPICE datasets, each cited with license information.
    \item[] Guidelines:
    \begin{itemize}
        \item The answer NA means that the paper does not use existing assets.
        \item The authors should cite the original paper that produced the code package or dataset.
        \item The authors should state which version of the asset is used and, if possible, include a URL.
        \item The name of the license (e.g., CC-BY 4.0) should be included for each asset.
        \item For scraped data from a particular source (e.g., website), the copyright and terms of service of that source should be provided.
        \item If assets are released, the license, copyright information, and terms of use in the package should be provided. For popular datasets, \url{paperswithcode.com/datasets} has curated licenses for some datasets. Their licensing guide can help determine the license of a dataset.
        \item For existing datasets that are re-packaged, both the original license and the license of the derived asset (if it has changed) should be provided.
        \item If this information is not available online, the authors are encouraged to reach out to the asset's creators.
    \end{itemize}

\item {\bf New assets}
    \item[] Question: Are new assets introduced in the paper well documented and is the documentation provided alongside the assets?
\item[] Answer: \answerNA{}
\item[] Justification: No new datasets or pretrained models are released. Only architectural code and training recipes are shared.
    \item[] Guidelines:
    \begin{itemize}
        \item The answer NA means that the paper does not release new assets.
        \item Researchers should communicate the details of the dataset/code/model as part of their submissions via structured templates. This includes details about training, license, limitations, etc. 
        \item The paper should discuss whether and how consent was obtained from people whose asset is used.
        \item At submission time, remember to anonymize your assets (if applicable). You can either create an anonymized URL or include an anonymized zip file.
    \end{itemize}

\item {\bf Crowdsourcing and research with human subjects}
    \item[] Question: For crowdsourcing experiments and research with human subjects, does the paper include the full text of instructions given to participants and screenshots, if applicable, as well as details about compensation (if any)? 
\item[] Answer: \answerNA{}
\item[] Justification: The paper does not involve human participants or crowdsourcing tasks.
    \item[] Guidelines:
    \begin{itemize}
        \item The answer NA means that the paper does not involve crowdsourcing nor research with human subjects.
        \item Including this information in the supplemental material is fine, but if the main contribution of the paper involves human subjects, then as much detail as possible should be included in the main paper. 
        \item According to the NeurIPS Code of Ethics, workers involved in data collection, curation, or other labor should be paid at least the minimum wage in the country of the data collector. 
    \end{itemize}

\item {\bf Institutional review board (IRB) approvals or equivalent for research with human subjects}
    \item[] Question: Does the paper describe potential risks incurred by study participants, whether such risks were disclosed to the subjects, and whether Institutional Review Board (IRB) approvals (or an equivalent approval/review based on the requirements of your country or institution) were obtained?
\item[] Answer: \answerNA{}
\item[] Justification: No research involving human subjects is conducted.
    \item[] Guidelines:
    \begin{itemize}
        \item The answer NA means that the paper does not involve crowdsourcing nor research with human subjects.
        \item Depending on the country in which research is conducted, IRB approval (or equivalent) may be required for any human subjects research. If you obtained IRB approval, you should clearly state this in the paper. 
        \item We recognize that the procedures for this may vary significantly between institutions and locations, and we expect authors to adhere to the NeurIPS Code of Ethics and the guidelines for their institution. 
        \item For initial submissions, do not include any information that would break anonymity (if applicable), such as the institution conducting the review.
    \end{itemize}

\item {\bf Declaration of LLM usage}
    \item[] Question: Does the paper describe the usage of LLMs if it is an important, original, or non-standard component of the core methods in this research? Note that if the LLM is used only for writing, editing, or formatting purposes and does not impact the core methodology, scientific rigorousness, or originality of the research, declaration is not required.
    %this research? 
  \item[] Answer: \answerNo{}
\item[] Justification: LLMs were not used in the development of methods or experimental results. Editing assistance may have been used, but not for scientific reasoning or generation.

    \item[] Guidelines:
    \begin{itemize}
        \item The answer NA means that the core method development in this research does not involve LLMs as any important, original, or non-standard components.
        \item Please refer to our LLM policy (\url{https://neurips.cc/Conferences/2025/LLM}) for what should or should not be described.
    \end{itemize}

\end{enumerate}

%%%%%%%%%%%%%%%%%%%%%%%%%%%%%%%%%%%%%%%%%%%%%%%%%%%%%%%%%%%%

\end{document}